\newtheorem{theorem}{Theorem}[section]
\newtheorem{corollary}{Corollary}[theorem]
\newtheorem{proposition}{Proposition}[section]
\newtheorem{lemma}[theorem]{Lemma}
\newtheorem*{remark}{Remark}
\newtheorem*{theorem*}{Theorem}
\theoremstyle{definition}
\newcommand{\pro}{\mathbb{P}}
\newcommand{\ind}{\mathbb{I}}
\newcommand{\y}{y}
\newcommand{\meth}{\varphi}
\newcommand{\clim}{\nu}
\newcommand{\p}{T}
\newcommand{\hatp}{\widetilde{T}}
\newcommand{\err}{\mathcal{R}}
\newcommand{\exe}{\mathbb{E}}
\newcommand{\ntasks}{N}
\newcommand{\nworkers}{H}
\newcommand{\nlabels}{C}
\newcommand{\oclass}{c}
\newcommand{\eclass}{\hat{c}}
\newcommand{\oMAP}{\textrm{oMAP}}
\newcommand{\eMAP}{\textrm{eMAP}}
\newcommand{\MAP}{\textrm{MAP}}
\newcommand{\MV}{\textrm{MV}}
\DeclareMathSymbol{\mhyphen}{\mathord}{AMSa}{"39}
\newcommand{\zone}{{0\mhyphen1}}
\newcommand{\loss}{\mathcal{L}}
\newcommand{\bigO}{\mathcal{O}}
\DeclareRobustCommand{\xmark}{
\textcolor{BrickRed}{\tikz[scale=0.23] {
    \draw[line width=0.7,line cap=round] (0,0) to [bend left=6] (1,1);
    \draw[line width=0.7,line cap=round] (0.2,0.95) to [bend right=3] (0.8,0.05);
}}}
\DeclareRobustCommand{\cmark}{
\textcolor{OliveGreen}{\tikz[scale=0.23] {
    \draw[line width=0.7,line cap=round] (0.25,0) to [bend left=10] (1,1);
    \draw[line width=0.8,line cap=round] (0,0.35) to [bend right=1] (0.23,0);
}}}
\newcommand\blfootnote[1]{%
  \begingroup
  \renewcommand\thefootnote{}%
  \let\Hy@raisedlink\@gobble
  \footnotetext{#1}%
  \addtocounter{footnote}{0}%
  \endgroup
}
\def\endfirstcolumn{\if@firstcolumn\newpage\fi}
\begin{document}

\DeclarePairedDelimiter\ceil{\lceil}{\rceil}
\DeclarePairedDelimiter\floor{\lfloor}{\rfloor}
\everypar\expandafter{\the\everypar\looseness=-1}
\linepenalty=1000

\runningauthor{Purificato, Bucarelli, Nelakanti, Bacciu, Silvestri, Mantrach}

\twocolumn[

\aistatstitle{The Majority Vote Paradigm Shift: When Popular Meets Optimal}

\aistatsauthor{Antonio Purificato$^{1,2,\diamondsuit}$ \And  Maria Sofia Bucarelli$^{2,3,4,5,\diamondsuit,\spadesuit}$ \And  Anil Nelakanti$^{6,\clubsuit}$ \AND Andrea Bacciu$^{1}$ \And Fabrizio Silvestri$^{2}$ \And Amin Mantrach$^{1}$}

\aistatsaddress{\\ $^{1}$Amazon,  $^{2}$Sapienza University of Rome, $^3$CNRS, $^4$Inria, $^5$i3s, $^{6}$IIIT Hyderabad} 
]

\begin{abstract}
Reliably labelling data typically requires annotations from multiple human workers. However, humans are far from being perfect. 
Hence, it is a common practice to aggregate labels gathered from multiple annotators to make a more confident estimate of the true label.
Among many aggregation methods, the simple and well-known Majority Vote (MV)  selects the class label receiving the highest number of votes. 
However, despite its importance, the optimality of MV’s label aggregation has not been extensively studied. We address this gap in our work by characterising the conditions under which MV achieves the theoretically optimal lower bound on label estimation error. 
Our results capture the tolerable limits on annotation noise under which MV can optimally recover labels for a given class distribution.
\looseness -1 This certificate of optimality provides a more principled approach to model selection for label aggregation as an alternative to otherwise inefficient practices that sometimes include \textit{higher experts}, \textit{gold} labels, etc.,  that are all marred by the same human uncertainty despite huge time and monetary costs.
Experiments on both synthetic and real-world data corroborate our theoretical findings.
\end{abstract}

\begin{figure*}[t]
    \centering
    \includegraphics[width=.65\linewidth]{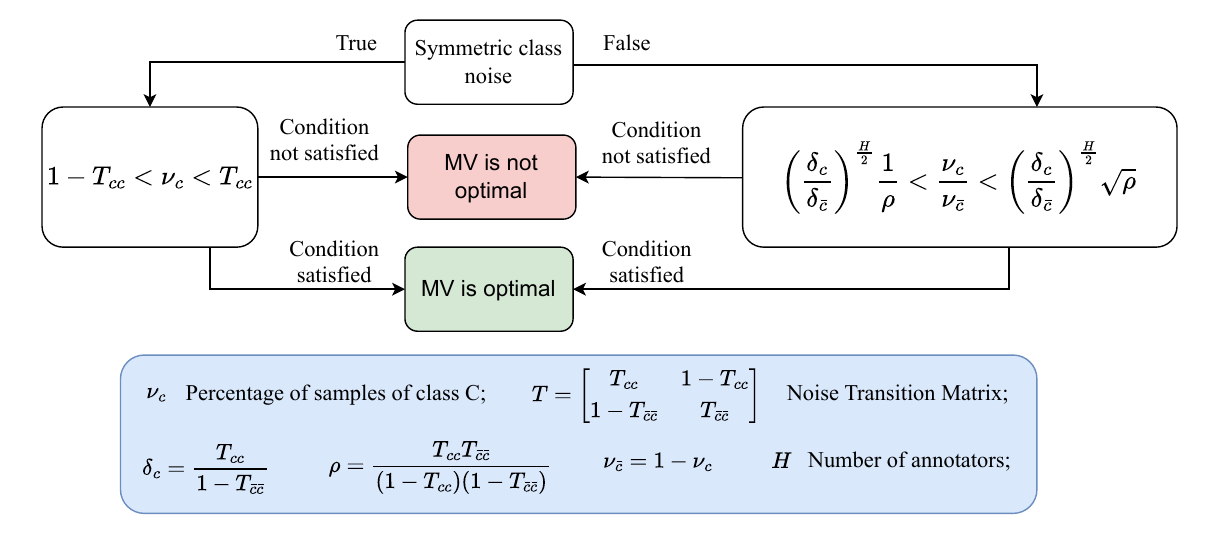}
    \caption{Flow-chart summarizing conditions for optimality of \MV~with its label estimates matching that of \oMAP.}
    \label{fig:diagram_summary}
\end{figure*}

\section{INTRODUCTION}
Data labeling or annotation is crucial for numerous real-world machine learning tasks like, for example, search and retrieval \citep{gligorov2013evaluation,JMLR:v18:16-206}, medical imaging \citep{wang2019medical}, and translation and summarization \citep{sarhan2020can}. 
If humans were perfect at labelling data, a single round of annotation would suffice, however, worker reliability varies widely across its population.
Hence it is common to gather a noisy label set~\citep{li2014wisdom} by requesting annotations from distinct workers for each task before aggregating them~\citep{huang-etal-2023-incorporating}.
The most commonly used aggregation method is \textit{Majority Vote} (MV).
MV estimation selects the class label that receives the highest number of votes from distinct annotators for a given task. Despite being easy to implement, it fails to recover true labels when incorrect annotations begin to dominate.
The theoretical lower bound on the minimum achievable error given the true worker noise was described by~\cite{nitzan1981characterization}.
Assuming there is an oracle revealing the true worker noise and class distribution, it reduces to the \textit{maximum a posteriori} estimate with each vote weighted by the odds ratio of the worker's reliability and the class distribution.
\blfootnote{\hspace{-5ex} $^\diamondsuit$Equal contribution. \\
$^\spadesuit$Work done while at Sapienza. \\
$^\clubsuit$Work done while at Amazon. \\
Correspondence to: \href{purifian@amazon.lu}{\texttt{purifian@amazon.lu}}
}
Correspondingly, we refer to this estimate as the oracle MAP (oMAP) that is practically unknown without access to such an oracle. 
This makes the problem ill-posed~\citep{spurling2021estimating,shi2021learning} for which numerous methods have been proposed in literature.
The earliest from~\cite{dawid} (DS) uses expectation-maximization (EM), followed by several iterative~\citep{glad, mace} and non-iterative methods~\citep{li2019exploiting,yang2024lightweight}.
Some of these methods offer guarantees on the expected error rate $\exe[\err]$ of their estimate $\hat{y}$ of the true label $\y$ where $\err\!=\!\frac{1}{|\{\y,\hat{y}\}|}\sum_{\{\y,\hat{y}\}}\ind[\hat{y}\!\neq\!\y]$. 
They help understand
\textit{``How many more annotations would I need for $\exe[\err]$ to be arbitrarily small?''} or say \textit{``What is minimum $\exe[\err]$ reachable given my annotation budget?''}
However, they do not help decide
\textit{``Which method is to be preferred for label aggregation on my data?''} or 
\textit{``Does MV achieve the lowest achievable error for my annotation job?''}.

Precisely, this is the question we answer by deriving the parameter configurations where MV is sample-wise optimal, \textit{i.e.,} when its label aggregate matches oMAP for all instances.
We present a mechanism to verify if these conditions hold true for any given real-data even while the true parameters, \textit{i.e.,} annotators' confusion matrix and class distribution are unknown.
Specifically, our contributions address the following research questions:
\begin{itemize}
    \item \textbf{RQ1}: \textit{Can MV achieve the theoretically optimal label estimation error and under what conditions?}\\
    We identify a range of reliability and class imbalance parameters where label recovery using MV is optimal for each instance with its estimate matching that of oMAP. We derive these conditions for informative, non-adversarial workers who share the same class confusion matrix when annotating binary tasks. We also extend our results to scenarios with uniformly perturbed noise transition matrices and to specific cases involving two distinct categories of annotators.
See \cref{fig:diagram_summary} for summary.
    \item \textbf{RQ2}: \textit{Are MV's optimality conditions verifiable on real-data and how tight are they?}\\
    The listed optimality conditions require an oracle to reveal the true parameters to verify if it holds. However, approximating true parameters with estimates works quite well in most cases though it comes with no certificate of correctness due to estimation errors.  To address this, we derive stricter conditions on the estimated parameters  that can be verified offering a high-probability guarantee of the true values lying within this regime.
    Our empirical results on synthetic and real data justify utility of checking the condition with estimated quantities to verify MV optimality.
    \endfirstcolumn
    \blfootnote{\hspace{-5ex} Code for reproducing the results can be found at \url{https://github.com/amazon-science/majority_vote_paradigm_shift}.}
    We also explore false-discovery and sensitivity trade-off in the two verification approaches we propose. Practitioners must choose from them depending on the requirements of their specific use-case. 
\end{itemize}

\section{RELATED WORK}\label{sec:related}
In practice, $\MV$ is the most popular method due to ease of implementation.
A popular alternative is a group of iterative solvers that update estimates of task labels and model priors (of class distribution and worker reliability) in successive loops.
\cite{dawid} and more recent variants  of iterative methods ~\citep{karger2014budget, li2014error, li2019truth, 9296844, chen2023sample} are examples of these methods. The parameters are estimated via an EM algorithm. 
Spectral algorithms that use the eigenspace of the worker-label matrix of annotations have also been proposed~\citep{ghosh, zhang2014spectral, pmlr-v151-tenzer22a} that have some similarities to iterative solvers (see~\cite{karger2014budget}).
 
Methods that leverage other principles like Bayesian formulation have also been studied \citep{soto2016urn}. \cite{li2019exploiting} use the mean-field variational method to
mine latent source relationships through tensor decomposition and object clustering. 
\cite{yang2024lightweight} view the label aggregation task as a dynamic system, with task identifiers serving as time-slices. Using a Dynamic Bayesian Network to model this system, they develop two label aggregation algorithms.
\cite{li2019truth} proposed a Bayesian model based on conjugate prior and iterative EM reasoning for highly redundant labeled data.
Similarities can be drawn among various methods and where feasible, they theoretically bound $\exe[\err]$ as $\exp{(-\bigO(vr))}$ for some measure of crowd reliability~$r$ and annotation volume~$v$.
They fundamentally differ in the implicit assumptions on the priors in model specification and the solver used that are necessary for the bounds to hold true.
For example,~\cite{karger2014budget} assumes sparse assignment of samples to workers, while~\cite{dalvi} requires large eigengap for worker-label annotation matrix.
Designing annotation jobs where the assumptions hold is non-trivial since they are not easily verifiable making it a trial-and-error to evaluate them suitability in practice.

Certain classes of methods explicitly estimate only the noise transition matrix of workers, sometimes with guarantees on optimality, from which label estimates could be inferred subsequently.
For example, \citet{bonald} employ correlations of worker triplets to estimate workers' noise transition with $\ell_\infty$-norm of the error bounded as $\bigO(\frac{1}{v\sqrt{\ntasks}})$, the minimax lower bound for their setting with $\ntasks$ being the number of samples or tasks labelled.
Similarly,~\citet{bucarelli2023leveraging} solve an optimisation problem that uses pairwise annotator similarities with a guarantee on $p$-norm of estimation error as $\bigO(\frac{1}{\sqrt{\ntasks}})$.
The methods of DS and others like~\cite{li2019exploiting} estimate this as a by-product of their algorithm.
We describe how these noise estimates can be leveraged to derive optimality certificates on real-data.

In contrast to above related literature, our work does not propose a new aggregation method but instead studies the conditions under which the simplest aggregation method optimally recovers labels.
A relevant work to our discussion is that of~\citet{nitzan1981characterization} that establishes the optimality of a suitably weighted majority vote.
A special case of their method is that of uniform class distribution with equally reliable workers\footnote{See Corollary 2b in~\citep{nitzan1981characterization}.} where it reduces to MV aligning with our finding for this scenario.
However, the equivalence of MV's label recovery to that of oMAP applies to a larger more general range of parameters that we establish in our work.

Notwithstanding some differences like the distinct but symmetric worker noise of~\cite{nitzan1981characterization} as against non-symmetric shared noise that we study, the assumption of access to an oracle renders it impractical for real-world use. We address this shortcoming in our work, describing a method to verify with high probability but without access to an oracle if a given parameter configuration is in the regime where MV is optimal.
This result is central to practical application of our theory that is validated in our experiments.
For all such configurations, label estimate for each sample from MV is guaranteed to match that of oMAP with no incentive for exploring more complex aggregation alternatives.

\section{GAP IN MV AND MAP METHODS}\label{sec:problem}

\paragraph*{Notation.}
The number of tasks, annotators, and classes are represented by $\ntasks, \nworkers$, and $\nlabels$, respectively.
$\nu$ denotes class distribution with subscript $i$ when referring to class with label $i$, \emph{i.e.}, $ \nu_{i}\!\coloneq\!\mathbb{P}(y\!=\!i)$ \citep{barber2012bayesian}.
The noise transition matrix for annotator~$a$,
$T_{ij}^{a} \coloneq \pro(x_k^a\!=\!j \vert y_k\!=\!i)$ 
is the probability of incorrectly assigning label $j$ for the task $x_k^a$ that has true label $y_k=i$.
Let $x$ denote the worker annotations for a task with true label $y$ of which $y_\meth$ is an estimate from  method~$\meth$.
Correspondingly, $(y_{\MV})_k, (y_{\MAP})_k$ refer to the label obtained through majority vote and maximum \textit{a posteriori} aggregation respectively for task $k$, \emph{i.e.},
\begin{align*}
    &(y_{\MV})_k  = \underset{c \in \{1,\dots,C\}}{\operatorname{argmax} } \sum_{h=1}^{H} \mathds{1}\left[x_k^h=c\right], &\\
    &(y_{\MAP})_k =  \underset{c \in \{1,\dots,C\}}{\operatorname{argmax} } \mathbb{P}( y_k=c \mid x_k^{h_1}, x_k^{h_2}, \dots, x_k^{h_H}).&
\end{align*}
Oracle MAP refers to the case where both $\nu$ and the true $\p$ are known while estimated MAP uses the estimate $\hatp$ and $\tilde{\nu}$. 

\paragraph*{Problem statement.}
We assume binary classification tasks with labels~$y\in\{0,1\}$ \citep{patrini2017making,karger2014budget,chen2021structured}. Given a set of task annotations $\{x_1,x_2 \dots x_N\}$, with $x_k \in \{ 0,1\}^H$, we characterize the parameter space $(\nu,\p)$ where the gap in probabilities of MV and oracle MAP of recovering the true labels $\{y_1, y_2 \dots y_N\}$ \textit{i.e.}, the quantity $\left(\pro(y_{\MV}\!=\!y)\!-\!\pro(y_{\oMAP}\!=\!y)\right)$ vanishes.
We choose \oMAP~because it establishes the lower bound for best achievable error rate given true~$(\clim,\p)$ and is the minimizer of the expected error rate under the $\zone$ loss~\citep{li2014error}, see \cref{prop:omap_bound} below.
\begin{proposition}\label{prop:omap_bound}
The oracle MAP estimator minimizes the expected $\zone$ loss.
 \end{proposition}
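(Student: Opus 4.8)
The plan is to run the classical Bayes-optimality argument for the $\zone$ loss, specialised to our setting where $y_{\oMAP}$ has oracle access to $\clim$ and $\p$ and can therefore evaluate the exact posterior $\pro(\y\!=\!c \mid \ann)$. First I would fix an arbitrary (measurable) aggregation rule $\meth$ mapping an annotation vector $\ann$ to a label $\haty(\ann)\in\{1,\dots,\nlabels\}$, write its risk as $\exe[\loss_{\zone}] = \exe[\ind[\haty(\ann)\neq\y]]$, and condition on the observed annotations $\ann$ using the tower rule: $\exe[\loss_{\zone}] = \exe_{\ann}\big[\exe[\ind[\haty(\ann)\neq\y]\mid\ann]\big]$.

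Next, for a fixed $\ann$ the inner conditional expectation equals $\pro(\y\neq\haty(\ann)\mid\ann) = 1-\pro(\y=\haty(\ann)\mid\ann)$, which depends on $\meth$ only through the single value $\haty(\ann)$; it is therefore minimised by choosing $\haty(\ann)\in\argmax_{c}\pro(\y=c\mid\ann)$, which is exactly the oracle MAP rule, since knowing the true $(\clim,\p)$ lets oMAP compute this posterior via Bayes' rule. Because this minimisation holds pointwise in $\ann$, the same choice also minimises the outer expectation over $\ann$, yielding $\exe[\loss_{\zone}(\meth)]\ge\exe[\loss_{\zone}(\oMAP)]$ for every competing rule $\meth$, i.e.\ $y_{\oMAP}$ is the minimiser. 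Ties in the $\argmax$ are immaterial, since tied labels produce identical conditional loss.

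The argument is essentially routine; the only point that deserves care is the identification of the oMAP decision with the posterior maximiser. This uses that the oracle supplies the genuine prior $\clim$ and noise matrices $\p$, so that $\pro(\y=c\mid\ann)\propto\clim_c\prod_h \p^h_{c,\ann^h}$ is the \emph{true} posterior — here the conditional-independence assumption is what lets us factor the likelihood as a product, although the optimality claim itself needs only that oMAP uses the correct posterior, however it is computed. I would also remark that the statement is unconditional on the class-balance and reliability regime: this proposition merely certifies oMAP as the benchmark, and the later sections then quantify precisely when the simpler MV rule reproduces the same per-instance decisions.
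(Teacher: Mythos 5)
Your proposal is correct and follows essentially the same route as the paper's proof: condition on the annotations, observe that the conditional $\zone$ risk equals $1-\pro(\y=\haty(\ann)\mid\ann,\p,\clim)$, and minimise it pointwise by picking the posterior maximiser, which is oMAP by definition. Your version is slightly more explicit about the tower rule and tie-breaking, but the substance is identical.
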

 \begin{proof} Expected $\zone$ loss is
$ \exe[\loss_{\zone}(y=\oclass, y_\oMAP=\eclass| x, \theta) ] $
$= \exe [\ind( c\neq \eclass| x, \theta)]$
$ = \sum_{\oclass \in \nlabels}\ind(c \neq \eclass) \pro (\oclass|x,\theta)$ \\
$= 1 - \sum_{\oclass \in \nlabels}(\oclass = \eclass)\pro(\oclass|x,\theta) = 1 - \pro(\eclass \mid x,\theta).
$

Hence, the minimizer of the 0-1 loss is the same as the maximizer of the oracle posterior $\pro(\eclass\vert x)$ derived from true parameters~$\theta = (\p,\clim)$ and observed labels~$x$.
\end{proof}
Correspondingly, given that oracle $\MAP$ has the lowest achievable expected $\zone$ loss, we use it as the benchmark for MV in quantifying its effectiveness in recovering the true labels.
We note that our study, however, will focus not on expected loss but on the probability of MV recovering the true label for each sample or instance relative to that of \oMAP~\textit{i.e.},
\begin{equation}
\label{eq:main_eq}
    \pro(y_{\MV} = y) = \pro(y_{\oMAP} = y).
\end{equation}
The noise introduced in the annotations by workers is described through a transition matrix, $\p = \bigl( \begin{smallmatrix} \p_{00} & \p_{01}\\\p_{10} & \p_{11} \end{smallmatrix} \bigr)$. 
We begin with the assumption that $T$ is identical for all annotators and relax it later in \cref{sec:relaxing_reliability_assumption}.
A one-parameter noise model has symmetric error rate for the two classes with $\p_{00}=\p_{11}$ while a two-parameter model admits distinct error rates, \textit{i.e.}, $\p_{00} \neq \p_{11}$.
\cref{eq:main_eq} can be rewritten by observing that worker's votes can be modelled using a suitably defined Binomial distribution with parameters $(\nworkers, \p)$ (see \cref{lemma:noisetransition_mv,lemma:noise_transition_matrix_MAP}).
Similarly, $\p^\meth$ captures the class-confusion matrix over the aggregated labels recovered from applying the method~$\meth$ that are defined as follows for $\p^\MV$ and $\p^\oMAP$.

\begin{lemma}[Noise transition matrix $T^\MV$, also Lemma 2.1~\citep{wei2023aggregate}]\label{lemma:noisetransition_mv}
   Assume $\nworkers$ is odd and binary label $\oclass \in \{0,1\}$, the noise transition matrix of $\MV$ aggregated labels is:
\begin{equation}
\label{T_00_majority}
    T_{\oclass\oclass}^{MV} = \sum_{i=\ceil{\frac{\nworkers}{2}}}^{\nworkers} \binom{\nworkers}{i} T_{\oclass\oclass}^{i}\,(1-T_{\oclass\oclass})^{\nworkers-i}.
\end{equation}
\end{lemma}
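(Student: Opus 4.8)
The plan is to recognize that $T^{\MV}_{\oclass\oclass}$ is, by definition, the probability that majority vote outputs $\oclass$ conditioned on the true label being $\oclass$, and that conditioned on $y_k=\oclass$ the tally of votes for class $\oclass$ is a Binomial random variable. First I would fix the true label $\oclass\in\{0,1\}$ and introduce the vote count $S \coloneq \sum_{h=1}^{\nworkers}\mathds{1}[x_k^h=\oclass]$. Because the annotators are conditionally independent given $y_k$ (the standing assumption recalled in the Notation paragraph) and every annotator shares the same transition matrix $\p$, each indicator $\mathds{1}[x_k^h=\oclass]$ is, conditionally on $y_k=\oclass$, an independent Bernoulli variable with success probability $\pro(x_k^h=\oclass\mid y_k=\oclass)=\p_{\oclass\oclass}$. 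Hence $S\mid\{y_k=\oclass\}\sim\Bin(\nworkers,\p_{\oclass\oclass})$.

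Next I would translate the majority-vote rule into an event about $S$. Since the task is binary, the number of votes for the other class is exactly $\nworkers-S$, so $(y_{\MV})_k=\oclass$ precisely when $S>\nworkers-S$, i.e. $S>\nworkers/2$; and because $\nworkers$ is odd no tie is possible, so this is equivalent to $S\ge\ceil{\nworkers/2}$. Combining with the previous step,
\begin{equation*}
    \p^{\MV}_{\oclass\oclass}=\pro\bigl((y_{\MV})_k=\oclass\mid y_k=\oclass\bigr)=\pro\bigl(S\ge\ceil{\tfrac{\nworkers}{2}}\bigr)=\sum_{i=\ceil{\nworkers/2}}^{\nworkers}\binom{\nworkers}{i}\p_{\oclass\oclass}^{\,i}\,(1-\p_{\oclass\oclass})^{\nworkers-i},
\end{equation*}
which is the claimed identity; summing the Binomial pmf over the upper tail is the only computation involved.

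I do not expect a genuine obstacle here — the statement is essentially a bookkeeping identity. The two points that merit a careful sentence rather than a hand-wave are: (i) justifying that the per-annotator vote for $\oclass$ really has probability $\p_{\oclass\oclass}$ and is independent across $h$, which is exactly the definition of the noise transition matrix together with the conditional-independence assumption; and (ii) the observation that oddness of $\nworkers$ rules out ties, so that the decision boundary is cleanly $S\ge\ceil{\nworkers/2}$ with no tie-breaking convention needed. Everything else is the standard expression for the tail of a $\Bin(\nworkers,\p_{\oclass\oclass})$ distribution. (One may also cite Lemma 2.1 of~\citep{wei2023aggregate} for the same fact, as the statement already notes.)
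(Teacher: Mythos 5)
Your proof is correct and matches the paper's treatment: the paper simply invokes the fact (citing Lemma~2.1 of \citet{wei2023aggregate}) that under conditional independence and a shared transition matrix the correct-vote count is $\Bin(\nworkers,\p_{\oclass\oclass})$, so $\p^{\MV}_{\oclass\oclass}$ is the upper tail starting at $\ceil{\nworkers/2}$ because a strict majority is needed and oddness of $\nworkers$ precludes ties. Your write-up just makes these same two observations explicit; there is no substantive difference.
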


\begin{lemma}[Noise transition matrix $T^\oMAP$\label{lemma:noise_transition_matrix_MAP}]   Assume $\nworkers$ is odd, binary label $\oclass \in \{0,1\}$, then noise transition matrix of MAP aggregated labels is:
\begin{align}
\label{eq:T_map}
   T^{\oMAP}_{cc}  = \sum_{k=\floor{A_c +1} }^H \binom{H}{k}T_{cc}^k (1-T_{cc})^{H-k}.
\end{align}
with $ A_c = \frac{\log{\frac{\nu_{\bar\oclass}}{\nu_\oclass}} + \nworkers \log{\frac{T_{\bar\oclass\bar\oclass}}{1-T_{\oclass\oclass}}}}{\log{\frac{T_{\oclass\oclass}T_{\bar\oclass\bar\oclass}}{(1-T_{\oclass\oclass})(1-T_{\bar\oclass\bar\oclass})}}}$ 
where $\bar\oclass = 1-c$ and $\nu_c,\nu_{\bar\oclass}$ are priors for classes $\oclass,\bar\oclass$.
\end{lemma}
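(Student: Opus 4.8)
The plan is to show the oracle MAP rule reduces to a threshold on the number of votes a class receives, and that this vote count is Binomially distributed conditional on the true label, so that $T^{\oMAP}_{cc}$ is simply a Binomial tail probability. By Bayes' theorem together with the conditional-independence assumption, $\pro(y_k=c\mid x_k^{h_1},\dots,x_k^{h_H})\propto \nu_c\prod_{h=1}^{H}\pro(x_k^h\mid y_k=c)$, so $(y_{\oMAP})_k$ is obtained by comparing the two unnormalised posteriors for $c$ and $\bar c:=1-c$. Since the labels are binary, the likelihood depends on the annotation vector only through $n_c:=|\{h: x_k^h=c\}|$ and $n_{\bar c}=H-n_c$: it equals $T_{cc}^{\,n_c}(1-T_{cc})^{\,n_{\bar c}}$ under $y_k=c$ and $(1-T_{\bar c\bar c})^{\,n_c}T_{\bar c\bar c}^{\,n_{\bar c}}$ under $y_k=\bar c$. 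Hence $(y_{\oMAP})_k=c$ iff $\nu_c T_{cc}^{\,n_c}(1-T_{cc})^{\,n_{\bar c}} > \nu_{\bar c}(1-T_{\bar c\bar c})^{\,n_c}T_{\bar c\bar c}^{\,n_{\bar c}}$.

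Next I would take logarithms of this inequality, substitute $n_{\bar c}=H-n_c$, and collect the terms linear in $n_c$, which gives $n_c\,\log\!\frac{T_{cc}T_{\bar c\bar c}}{(1-T_{cc})(1-T_{\bar c\bar c})} > \log\!\frac{\nu_{\bar c}}{\nu_c} + H\log\!\frac{T_{\bar c\bar c}}{1-T_{cc}}$. Invoking the assumption that the annotators are informative and non-adversarial, i.e.\ $T_{cc}T_{\bar c\bar c} > (1-T_{cc})(1-T_{\bar c\bar c})$ so that the coefficient of $n_c$ on the left is strictly positive, I divide through without reversing the inequality and obtain $(y_{\oMAP})_k = c \iff n_c > A_c$, with $A_c$ exactly the quantity in the statement. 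Exact ties $n_c = A_c$ occur only on a measure-zero set of parameters (or are resolved by a fixed convention under which such ties are not counted as recovering $c$) and do not affect the final expression.

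Finally, conditioning on $y_k=c$, the $H$ conditionally independent annotators each output $c$ with probability $T_{cc}$, so $n_c\sim\Bin(H,T_{cc})$, and therefore $T^{\oMAP}_{cc}=\pro\big((y_{\oMAP})_k=c \mid y_k=c\big)=\pro(n_c>A_c)$. As $n_c$ is integer valued, $\{k\in\mathbb{Z}: k>A_c\} = \{k: k\ge\lfloor A_c\rfloor+1\} = \{k: k\ge\lfloor A_c+1\rfloor\}$, so summing the Binomial pmf over this range yields $T^{\oMAP}_{cc}=\sum_{k=\lfloor A_c+1\rfloor}^{H}\binom{H}{k}T_{cc}^{\,k}(1-T_{cc})^{\,H-k}$, which is \eqref{eq:T_map}. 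The step needing the most care — and the only real conceptual obstacle — is the sign of $\log\frac{T_{cc}T_{\bar c\bar c}}{(1-T_{cc})(1-T_{\bar c\bar c})}$: the clean threshold form of the rule, and in particular the direction ``$n_c>A_c$'', hinges on this being positive, which is precisely the informative-annotator regime assumed in the paper; outside it the inequality flips and the summation range changes. The remaining subtleties — the floor convention for the summation index and the handling of exact posterior ties — are routine.
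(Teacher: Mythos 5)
Your proposal is correct and follows essentially the same route as the paper's proof: Bayes' rule plus conditional independence reduces the oMAP decision to a threshold $n_c > A_c$ on the vote count, which is $\Bin(H,T_{cc})$ conditional on $y=c$, giving the Binomial tail sum. The only (minor) difference is that your identity $\{k\in\mathbb{Z}:k>A_c\}=\{k:k\ge\floor{A_c+1}\}$ handles integer and non-integer $A_c$ uniformly, whereas the paper splits into cases and simply excludes $A_c\in\mathbb{N}$; you also make explicit the sign condition on $\log\rho$ that the paper leaves implicit in its better-than-random assumption.
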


\looseness -1 Details of their derivation are deferred to \cref{appendix:computeT_cc_omap}.
Note that the lower bound of the summation in $\p^\MV$ is from $\ceil{\frac{\nworkers}{2}}$.
This is because MV only requires that majority of workers vote for the correct class to recover the true label.
We may now rewrite \cref{eq:main_eq} as follows:
\begin{equation}
\label{request_ineq}
    T_{00}^{\MV} + T_{11}^{\MV}\frac{1 -\nu_0}{\nu_0} = T_{00}^{\oMAP} + T_{11}^{\oMAP}\frac{1 -\nu_0}{\nu_0}.
\end{equation}
Plugging in expressions of $\p^\MV$ and $\p^\oMAP$ from \cref{lemma:noisetransition_mv,lemma:noise_transition_matrix_MAP} into \cref{request_ineq} and analysing the various cases that arise, we can deduce how \MV~and \oMAP~vary across the parameter space.

\subsection{Characterisation for symmetric class noise}
\label{sec:sharedcoins}

We restrict our analysis in this section to one-parameter models with $\varrho\!=\!\p_{01}\!=\!\p_{10}$ for ease of analysis and defer the more general two-parameter case to \cref{sec:twocoins}.
Substituting $T^\MV$ and $T^\oMAP$ in \cref{eq:main_eq} with their expressions, we get the equality conditions as a function of parameters $\nu$ and $\nworkers$.
For all other cases that fail this condition \oMAP~is better than \MV~with a non-zero gap.

\begin{theorem}[MV optimality criterion for one-parameter $\p$ for binary tasks\label{thm:binary_classes_shared_coins}]
If the probability of a label flip, $\varrho\!=\!T_{01}\!=\!T_{10}$, is less than 0.5 and denoting by  $\nu\!=\!\pro(y=0)$ the class zero distribution we have that:
\begin{align}
    \pro(y_\oMAP=y)=\pro(y_\MV=y)
    ~~\textrm{iff}~~\varrho <\nu < 1-\varrho.
   \label{eq:onecoinresult}
\end{align}
\end{theorem}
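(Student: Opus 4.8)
The plan is to convert the probability identity of Theorem~\ref{thm:binary_classes_shared_coins} into a statement about the integer \emph{voting thresholds} used by $\MV$ and $\oMAP$. Both aggregators depend on a task only through the class-$1$ count $n_1\coloneq\sum_{h=1}^{\nworkers}\ind[x^{h}=1]$: $\MV$ returns $1$ iff $n_1\ge\ceil{\nworkers/2}=(\nworkers+1)/2$, and (as the binomial-tail form of Lemma~\ref{lemma:noise_transition_matrix_MAP} encodes) $\oMAP$ returns $1$ iff $n_1\ge\floor{A_1+1}$. First I would specialise Lemmas~\ref{lemma:noisetransition_mv} and \ref{lemma:noise_transition_matrix_MAP} to the symmetric model $\p_{00}=\p_{11}=1-\varrho$: this makes $T^{\MV}_{00}=T^{\MV}_{11}$ and collapses the constant to $A_0=\nworkers/2+B$ and $A_1=\nworkers/2-B$ with $B\coloneq\frac{\log\frac{1-\nu}{\nu}}{2\log\frac{1-\varrho}{\varrho}}$, where $\log\frac{1-\varrho}{\varrho}>0$ because $\varrho<\tfrac12$. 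Note $A_0+A_1=\nworkers$, so the two $\oMAP$ thresholds are mirror images about $\nworkers/2$, and $\pro(y_{\meth}=y)=\sum_{m=0}^{\nworkers}\pro(n_1=m)\,\pro(y=y_{\meth}\mid n_1=m)$ for each rule $\meth$, where $y_{\meth}$ is determined by $m$.

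\textbf{Sufficiency ($\varrho<\nu<1-\varrho\Rightarrow$ equality).} Here the key is that the hypothesis forces $|B|<\tfrac12$. A one-line monotonicity check rewrites $\varrho<\nu$ as $\frac{1-\nu}{\nu}<\frac{1-\varrho}{\varrho}$ and $\nu<1-\varrho$ as $\frac{1-\nu}{\nu}>\frac{\varrho}{1-\varrho}$, hence $\bigl|\log\tfrac{1-\nu}{\nu}\bigr|<\log\tfrac{1-\varrho}{\varrho}$, i.e.\ $|B|<\tfrac12$. Since $\nworkers$ is odd, $(\nworkers-1)/2$ and $(\nworkers+1)/2$ are consecutive integers and $A_0,A_1\in\bigl((\nworkers-1)/2,(\nworkers+1)/2\bigr)$, so $\floor{A_0+1}=\floor{A_1+1}=(\nworkers+1)/2=\ceil{\nworkers/2}$. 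Thus the binomial-tail sums in Lemmas~\ref{lemma:noisetransition_mv} and \ref{lemma:noise_transition_matrix_MAP} are term-for-term identical, $T^{\oMAP}_{cc}=T^{\MV}_{cc}$ for $c\in\{0,1\}$, both sides of \eqref{request_ineq} coincide, and $\pro(y_{\MV}=y)=\pro(y_{\oMAP}=y)$ follows. Equivalently, $\MV$ and $\oMAP$ are then literally the same map $n_1\mapsto y$.

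\textbf{Necessity ($\nu\le\varrho$ or $\nu\ge1-\varrho\Rightarrow$ strict inequality).} I would argue the contrapositive. Relabelling classes $0\leftrightarrow1$ sends $\nu\mapsto1-\nu$, fixes the symmetric noise matrix, and commutes with both aggregators, hence preserves $\pro(y_{\MV}=y)$ and $\pro(y_{\oMAP}=y)$; so I may assume $\nu\le\varrho<\tfrac12$, giving $\frac{1-\nu}{\nu}\ge\frac{1-\varrho}{\varrho}>1$, whence $B\ge\tfrac12$ and $A_1\le(\nworkers-1)/2$. In the expansion of $\pro(y_{\meth}=y)$ above, every $\oMAP$ term is $\ge$ its $\MV$ counterpart because $\oMAP$ is the pointwise Bayes rule (Proposition~\ref{prop:omap_bound}). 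At $m=(\nworkers-1)/2$, $\MV$ predicts $0$ while, when $\nu<\varrho$ (so $A_1<(\nworkers-1)/2$ and $\floor{A_1+1}\le(\nworkers-1)/2=m$), $\oMAP$ predicts $1$; the posterior odds $\pro(y=1\mid n_1=m)/\pro(y=0\mid n_1=m)$ equal $\frac{1-\nu}{\nu}\cdot\frac{\varrho}{1-\varrho}>1$, so that term is strictly positive, and $\pro(n_1=m)>0$ since $0<\varrho<1$. Hence $\pro(y_{\oMAP}=y)>\pro(y_{\MV}=y)$ and \eqref{eq:main_eq} fails; the two knife-edge values $\nu\in\{\varrho,1-\varrho\}$ are exact posterior ties at $m=(\nworkers\mp1)/2$ and the comparison degenerates there (they are excluded by the strict inequalities in the statement).

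\textbf{Main obstacle.} The delicate direction is necessity. On leaving the band $\varrho<\nu<1-\varrho$ the two diagonal entries of $T^{\oMAP}$ move in \emph{opposite} directions relative to those of $T^{\MV}$ (one strictly larger, the other strictly smaller), so \eqref{eq:main_eq} cannot be refuted by a naive entrywise comparison of the two sides of \eqref{request_ineq}. The fix is to abandon that algebraic identity here, partition by $n_1$, invoke pointwise Bayes-optimality of $\oMAP$, and isolate the single positive-probability atom $\{n_1=(\nworkers-1)/2\}$ on which $\oMAP$ strictly beats $\MV$ and carries no posterior tie. Secondary care is needed at $\nu\in\{\varrho,1-\varrho\}$, where the optimal and majority rules can disagree only on a posterior tie so the strict gap vanishes.
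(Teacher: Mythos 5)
Your proof is correct and follows essentially the same route as the paper's: the sufficiency half is exactly the paper's observation that $\varrho<\nu<1-\varrho$ forces $\ceil{A_0}=\ceil{A_1}=\tfrac{H+1}{2}$ so that the $\oMAP$ and $\MV$ binomial tails coincide term-for-term, and your necessity half (partition by $n_1$, invoke pointwise Bayes optimality, exhibit the strictly winning atom $n_1=\tfrac{H-1}{2}$) is the same computation the paper carries out by showing every term of $\sum_{k=\ceil{A_0}}^{(H-1)/2}\binom{H}{k}\bigl[T_{00}^{k}(1-T_{00})^{H-k}-\tfrac{1-\nu}{\nu}T_{11}^{H-k}(1-T_{11})^{k}\bigr]$ is positive, merely organized around a single disagreement atom rather than the full sum. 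The only real differences are organizational — the paper obtains this theorem as a corollary of the two-parameter Theorem~\ref{thm:twocoins} instead of working directly in the symmetric model — and both you and the paper leave the knife-edge $\nu\in\{\varrho,1-\varrho\}$ (where $A_c\in\mathbb{N}$ and MV is itself a Bayes rule, so the claimed strict gap actually vanishes) outside the argument, the paper by explicitly excluding $A_c\in\mathbb{N}$ and you by flagging the degeneracy.
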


$\varrho<0.5$ translates to better than random labellers that is the most general case of non-adversarial workers with no other constraints \citep{kargerMulticlass,li2019exploiting,bucarelli2023leveraging}. Notice that if $\varrho < \nu < 1-\varrho$ it also holds that $\varrho < 1- \nu < 1-\varrho$. The condition in this theorem can be rewritten as 
$ \frac{\varrho}{1- \varrho } <\frac{\nu}{1-\nu} < \frac{1-\varrho  }{\varrho}$. This indicates that the class imbalance ratio must fall within the range defined by the odds ratio associated with the noise rate of the dataset and its inverse. 
For a perfectly balanced dataset, this condition is always satisfied. However, as a dataset becomes more imbalanced, the maximum noise rate at which MV remains optimal decreases.
The proof of \cref{thm:binary_classes_shared_coins} can be found in \cref{list:exact_matches_proofs}.
The condition for equality of MV and MAP is a rather simple and elegant boundary in \cref{thm:binary_classes_shared_coins} beyond which their probability gap begins to widen.
It is interesting to note that if this bound is known to hold true for a given parameter configuration $(\clim, \p)$, then \MV's label estimate exactly matches that of oMAP for each individual instance, unlike \cref{prop:omap_bound} that applies on average, as is common in literature.
Correspondingly, this result can be used for optimal label estimates without access to an oracle if only we knew that $(\clim,\p)$ satisfies \cref{thm:binary_classes_shared_coins}.
\looseness -1 However, this verification of the bounds itself assumes access to the true parameters and we show in \cref{sec:estimated_quantities} how this can be achieved without the oracle but only estimators.

\subsection{Extension to asymmetric class noise}
\label{sec:twocoins}
We extend \cref{thm:binary_classes_shared_coins} to the more general case of $(\p_{01}\neq\p_{10})$ that corresponds to admitting noise models with unequal sensitivity and specificity.
It translates to workers confusing class $1$ to class $0$ and vice-versa with different error rates.
In this case, the necessary and sufficient condition for probability of \MV~recovering the same label as that of \oMAP~aggregation is provided by the following theorem:

\begin{theorem}[MV optimality criterion for two-parameter $\p$ for binary tasks]\label{thm:twocoins}
    Assuming better than random workers' reliability \citep{kargerMulticlass,li2019exploiting,bucarelli2023leveraging}, namely $\p_{00},\p_{11}>0.5$, labeling binary tasks, we have that:
\label{eq:suff_cond_nu}
  \begin{align} 
  \pro(y_\oMAP=y)&=\pro(y_\MV=y)~\textrm{if and only if } \nonumber \\ 
   \left(\frac{\delta_{c}}{\delta_{1-c}} \right)^{\frac{H}{2}} \frac{1}{\sqrt{\rho}}  &< \frac{ \nu_{1-c}}{\nu_{c}} < \left(\frac{\delta_{c}}{\delta_{1-c}} \right)^{\frac{H}{2}} \sqrt{\rho}. &
    \end{align}
where
$
    \rho = \frac{T_{00}T_{11}}{(1-T_{00})(1-T_{11})} \text{ and }     \delta_c = \frac{T_{cc}}{1-T_{\bar{c}\bar{c}}}.
$
\end{theorem}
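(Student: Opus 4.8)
For any aggregation rule $\meth$ one has $\pro(y_\meth=y)=\nu_0 T^\meth_{00}+\nu_1 T^\meth_{11}$, and by Proposition~\ref{prop:omap_bound} oracle MAP maximises this quantity (it minimises $\exe[\loss_{\zone}]=1-\pro(y_\meth=y)$); hence $\pro(y_\oMAP=y)\ge\pro(y_\MV=y)$ always, with equality exactly when $\MV$ induces a Bayes‑optimal decision rule, i.e.\ agrees with $\oMAP$ on every annotation vector except possibly those on which the MAP posterior equals $\tfrac12$. So the plan is to compare the two decision rules directly rather than manipulate Equation~\ref{request_ineq}.

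Because the annotators are conditionally independent given $y$ and share $\p$, the likelihood $\pro(\ann\mid y=c)=T_{cc}^{\,n_c}(1-T_{cc})^{\,n_{\bar{c}}}$ depends on the annotation vector only through $n_1=\#\{h:\ann^h=1\}$, with $n_0=\nworkers-n_1$. Forming the log posterior odds and using $T_{00},T_{11}>\tfrac12$, which makes $\log\rho>0$ with $\rho=\delta_0\delta_1=\tfrac{T_{00}T_{11}}{(1-T_{00})(1-T_{11})}$, one gets that $\oMAP$ outputs $0$ iff $n_1<\theta$, where $\theta=\big(\log(\nu_0/\nu_1)+\nworkers\log\delta_0\big)/\log\rho$; this is precisely Lemma~\ref{lemma:noise_transition_matrix_MAP} rewritten as a threshold on $n_1$ (indeed $\theta=\nworkers-A_0$). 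Majority vote outputs $0$ iff $n_1<\nworkers/2$, i.e.\ $n_1\le\tfrac{\nworkers-1}{2}$ since $\nworkers$ is odd. Both rules are thresholds in the single integer statistic $n_1\in\{0,\dots,\nworkers\}$, and every value of $n_1$ occurs with positive probability because $1-T_{cc}\in(0,\tfrac12)$.

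Two thresholds on $n_1$ induce the same decision for every $\ann$ iff the $\oMAP$ threshold $\theta$ lies in the open unit interval $\big(\tfrac{\nworkers-1}{2},\tfrac{\nworkers+1}{2}\big)$ straddled by the $\MV$ cut $\nworkers/2$; that interval contains no integer, so on it $\oMAP$ never has a posterior tie and $\MV$ reproduces $\oMAP$ sample‑wise (at the endpoints $\theta=\tfrac{\nworkers\mp1}{2}$ the posterior is exactly $\tfrac12$ at one count, so the success probabilities still match but the label on that count may differ — the degenerate boundary excluded by the strict inequalities). Unwinding $\tfrac{\nworkers-1}{2}<\theta<\tfrac{\nworkers+1}{2}$ with $\log\rho>0$ gives $\tfrac{\nworkers}{2}\log\tfrac{\delta_{\bar{c}}}{\delta_c}-\tfrac12\log\rho<\log\tfrac{\nu_c}{\nu_{\bar{c}}}<\tfrac{\nworkers}{2}\log\tfrac{\delta_{\bar{c}}}{\delta_c}+\tfrac12\log\rho$, where we used $\rho/\delta_c^2=\delta_{\bar{c}}/\delta_c$; exponentiating and taking reciprocals yields $\big(\tfrac{\delta_c}{\delta_{\bar{c}}}\big)^{\nworkers/2}\tfrac{1}{\sqrt{\rho}}<\tfrac{\nu_{\bar{c}}}{\nu_c}<\big(\tfrac{\delta_c}{\delta_{\bar{c}}}\big)^{\nworkers/2}\sqrt{\rho}$, which is the claim (and is symmetric under $c\leftrightarrow\bar{c}$ by inversion).

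The algebraic rewriting is routine; the step that needs care is the boundary/tie bookkeeping — making precise the chain "equal success probability $\iff$ $\MV$ is a Bayes rule $\iff$ $\theta$ in the closed interval", and matching it to the strict inequalities in the statement, which correspond to the sample‑wise reading "$y_\MV=y_\oMAP$ for every $\ann$". One also has to track the sign of $\log\rho$ everywhere — this is the sole place the hypothesis $T_{00},T_{11}>\tfrac12$ enters — so that none of the inequalities flip.
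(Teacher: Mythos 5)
Your proof is correct, and it takes a genuinely different route from the paper's. The paper works at the level of the induced confusion matrices: it writes $T^{\MV}_{cc}$ and $T^{\oMAP}_{cc}$ as binomial tail sums differing only in their lower summation limits ($\ceil{H/2}$ versus $\ceil{A_c}$), proves sufficiency by showing the stated condition forces $\ceil{A_c}=\frac{H+1}{2}$ for both classes simultaneously (so the two confusion matrices coincide entrywise), and proves necessity by a separate term-by-term positivity argument on the weighted difference $\nu(T^{\oMAP}_{00}-T^{\MV}_{00})+(1-\nu)(T^{\oMAP}_{11}-T^{\MV}_{11})$ in the mixed case where one diagonal entry increases and the other decreases. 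You instead observe that both rules are threshold tests in the single sufficient statistic $n_1$, that $\oMAP$'s threshold is $\theta=H-A_0$, and that equality of success probabilities holds iff the two integer cuts coincide — with the ``only if'' direction falling out of strict Bayes-optimality on any positive-probability, non-tied disagreement point rather than from the paper's explicit sum manipulation. Your framing is more conceptual and makes the paper's standalone Remark (that $T^{\oMAP}_{00}>T^{\MV}_{00}$ and $T^{\oMAP}_{11}>T^{\MV}_{11}$ cannot hold simultaneously) transparent for free; the paper's computation, on the other hand, yields the explicit one-sided characterisations in Lemma A.2 (Equation \ref{eq:t_cc_MaP_greater_T_cc_MV}) that it reuses elsewhere. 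Your tie/boundary bookkeeping matches the paper's convention of excluding $A_c\in\mathbb{N}$, and your algebraic unwinding of $\frac{H-1}{2}<\theta<\frac{H+1}{2}$ reproduces the stated bounds exactly. The only (shared, harmless) gloss is that $T_{cc}<1$ is needed for $\rho$ and $\log\rho$ to be finite and for every value of $n_1$ to have positive probability, which the paper also assumes implicitly.
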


The proof of this theorem can be found in \cref{list:exact_matches_proofs}.
The condition in \cref{eq:suff_cond_nu} holds for $\nu_0$ if and only if it holds for $\nu_1$. Therefore, it is sufficient to verify the condition for just one of them.
We emphasize that for $T_{01} = T_{10}$ we recover the condition of \cref{thm:binary_classes_shared_coins} and the inferences from \cref{sec:sharedcoins} generalize to this case.

\subsection{Verification without an oracle}
\label{sec:estimated_quantities}
\cref{thm:binary_classes_shared_coins,thm:twocoins} characterise the optimality criterion for MV under the one and two-coin case respectively. However, they require true $T$ and $\nu$ for verifiability, both of which are usually unknown. 
Nevertheless, with only their estimates, we will still be able to verify the bounds with high confidence. 
We define
$f(T)\!=\!\left(\frac{\delta_{c}}{\delta_{1-c}} \right)^{\frac{H}{2}}\!\frac{1}{\sqrt{\rho}}$,
$h(T)\!=\!\left(\frac{\delta_{c}}{\delta_{1-c}} \right)^{\frac{H}{2}}\!\!\!\sqrt{\rho}$ and $g(\nu)\!=\!\frac{\nu_{1-c}}{\nu_{c}}$.

\begin{theorem}
\label{th:estimated_quantities}
Given an approximation of the noise transition matrix \( \widetilde{T}, \) such that \( || T - \widetilde{T} ||_2 \leq \epsilon \) holds with probability at least \( 1 - \gamma \), we define \( \tilde{\nu} = \widetilde{T}^{-1} \hat{\nu} \), where \( \hat{\nu} \) is an approximation of the noisy label distribution. \\
If $\nu_0 $ and $\nu_1$ are so that  \( \eta \leq  \nu_c \leq  1 - \eta \), for \( 0< \eta < 1 \), 
$ \frac{1}{2}  < T_{cc} \leq 1- \xi  $ for $ 0.5 < \xi < 1$ and the following inequalities hold:
\[
\begin{cases}
g(\tilde{\nu}) - f(\widetilde{T}) > \psi + \epsilon \chi \\
h(\widetilde{T}) - g(\tilde{\nu}) >  \psi + 4 \epsilon \chi
\end{cases}
\]
then it follows that:
\begin{equation*}
f(T) < g(\nu) < h(T)
\end{equation*}
with probability \( 1 - 2 \gamma - 2e^{-\epsilon^2 \ntasks}\), where:\\ 
$\psi=\frac{\epsilon}{\lambda_{\text{min}}(\widetilde{T})} \left[ \frac{1}{\lambda_{\text{min}}(\widetilde{T}) - \epsilon} + \sqrt{C} \right]\frac{1}{\min(\eta, 1 - \eta)^2} $ and\\
$\chi=\sqrt{ \max ( \frac{1}{2},  \frac{H-1}{2} - H \xi)^2 +  \max ( \frac{1}{2},  \frac{H+1}{2}  -\xi H )^2 } $.
\end{theorem}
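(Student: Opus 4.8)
The plan is to run the whole argument on the intersection of two high‑probability events: the event $\mathcal{E}_T=\{\|T-\widetilde T\|_2\le\epsilon\}$, which holds with probability at least $1-\gamma$ by hypothesis, and a concentration event $\mathcal{E}_\nu$ on which the empirical noisy‑label frequencies $\hat\nu$ are $\epsilon$‑close, coordinatewise, to their population value $\clim^{\mathrm{noisy}}$ (the law of the aggregated noisy labels, related to $\clim$ through $T$ in exactly the convention for which $\tilde\nu=\widetilde T^{-1}\hat\nu$ is the plug‑in estimate of $\clim$, i.e.\ $\clim=T^{-1}\clim^{\mathrm{noisy}}$). On $\mathcal{E}_T\cap\mathcal{E}_\nu$ I would show that each of the three ``estimated'' quantities $f(\widetilde T),\,h(\widetilde T),\,g(\tilde\nu)$ is within a controlled distance of its ``true'' counterpart $f(T),\,h(T),\,g(\nu)$, and then combine these bounds with the two displayed slack conditions to obtain $f(T)<g(\nu)<h(T)$; by \cref{thm:twocoins} this is precisely $\pro(y_{\oMAP}=y)=\pro(y_{\MV}=y)$. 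Concretely, if $|g(\nu)-g(\tilde\nu)|\le\psi$ and $|f(T)-f(\widetilde T)|\le\epsilon\chi$, the first displayed inequality gives $g(\nu)\ge g(\tilde\nu)-\psi>f(\widetilde T)+\epsilon\chi\ge f(T)$; if in addition $|h(T)-h(\widetilde T)|\le 4\epsilon\chi$, the second gives $g(\nu)\le g(\tilde\nu)+\psi<h(\widetilde T)-4\epsilon\chi\le h(T)$. So everything reduces to the three perturbation bounds, which I treat separately.

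For the $T$‑dependent quantities, the key observation is that, since each row of $T$ sums to one, $f$ and $h$ depend on $T$ only through the pair $(T_{00},T_{11})$; explicitly $f(T)=T_{cc}^{(H-1)/2}(1-T_{cc})^{(H+1)/2}\,T_{\bar c\bar c}^{-(H+1)/2}(1-T_{\bar c\bar c})^{-(H-1)/2}$ and $h(T)=T_{cc}^{(H+1)/2}(1-T_{cc})^{(H-1)/2}\,T_{\bar c\bar c}^{-(H-1)/2}(1-T_{\bar c\bar c})^{-(H+1)/2}$, both smooth on the feasible box $(\tfrac{1}{2},1-\xi]^2$. I would differentiate $f$ with respect to $T_{00}$ and $T_{11}$, bound each partial derivative in absolute value over that box --- the supremum is attained on the boundary, which is precisely where the quantities $\max(\tfrac{1}{2},\tfrac{H-1}{2}-H\xi)$ and $\max(\tfrac{1}{2},\tfrac{H+1}{2}-H\xi)$ inside $\chi$ come from, after a short case split on the signs of $\tfrac{H-1}{2}-H\xi$ and $\tfrac{H+1}{2}-H\xi$ --- and then conclude by the mean value theorem along the segment from $T$ to $\widetilde T$ (which lies within $\epsilon$ of the box) that $|f(T)-f(\widetilde T)|\le\chi\,\|T-\widetilde T\|_2\le\chi\epsilon$, using $|T_{ii}-\widetilde T_{ii}|\le\|T-\widetilde T\|_2$. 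The parallel computation for $h$ produces $|h(T)-h(\widetilde T)|\le 4\epsilon\chi$, the extra factor $4$ absorbing the larger gradient of $h$ (which carries $\sqrt{\rho}$ rather than $1/\sqrt{\rho}$).

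For the $\nu$‑dependent quantity, $g(\nu)=\nu_{\bar c}/\nu_c=\nu_c^{-1}-1$ is $\min(\eta,1-\eta)^{-2}$‑Lipschitz on $[\eta,1-\eta]$ (where $\nu_c$ lives by hypothesis), so it suffices to control $\|\nu-\tilde\nu\|_2$ and multiply. Writing $\nu-\tilde\nu=(T^{-1}-\widetilde T^{-1})\clim^{\mathrm{noisy}}+\widetilde T^{-1}(\clim^{\mathrm{noisy}}-\hat\nu)$, I bound the first summand with the standard inverse‑perturbation inequality $\|T^{-1}-\widetilde T^{-1}\|_2\le\epsilon\big(\lambda_{\min}(\widetilde T)(\lambda_{\min}(\widetilde T)-\epsilon)\big)^{-1}$ (valid since $\epsilon<\lambda_{\min}(\widetilde T)$, which follows from $T_{cc}\le1-\xi$ and $\epsilon$ small) times $\|\clim^{\mathrm{noisy}}\|_2\le1$, and the second summand with $\|\widetilde T^{-1}\|_2=\lambda_{\min}(\widetilde T)^{-1}$ times $\|\clim^{\mathrm{noisy}}-\hat\nu\|_2\le\sqrt{C}\,\epsilon$; the latter holds on $\mathcal{E}_\nu$ by Hoeffding's inequality applied to the empirical frequencies over the $N$ i.i.d.\ tasks. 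Summing the two contributions gives exactly $\|\nu-\tilde\nu\|_2\le\tfrac{\epsilon}{\lambda_{\min}(\widetilde T)}\big(\tfrac{1}{\lambda_{\min}(\widetilde T)-\epsilon}+\sqrt{C}\big)$, and multiplying by the Lipschitz constant produces the stated $\psi$. A union bound over $\mathcal{E}_T$ (used both for the matrix perturbation and inside the definition of $\tilde\nu$) and $\mathcal{E}_\nu$ then yields the success probability recorded in the statement.

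I expect the $\nu$‑side step to be the main obstacle: propagating a spectral‑norm perturbation of $T$ through the matrix inverse while simultaneously carrying the statistical fluctuation of $\hat\nu$, and --- the genuinely delicate point --- ensuring that the perturbed estimate $\tilde\nu_c$ stays bounded away from $0$, so that the Lipschitz bound for $g$ is legitimate along the entire segment between $\nu$ and $\tilde\nu$ (this is where the class‑balance assumption $\eta\le\nu_c\le1-\eta$ and the smallness of $\epsilon$ relative to $\lambda_{\min}(\widetilde T)$ are really used; in a practical application of the theorem one would also check $\tilde\nu_c\in[\eta,1-\eta]$ directly). The $T$‑side step is conceptually routine but computationally fiddly: identifying the maximiser of $\|\nabla f\|$ and $\|\nabla h\|$ on $(\tfrac{1}{2},1-\xi]^2$, and thereby matching the closed forms of $\chi$ and the constant $4$, is where the bookkeeping lives.
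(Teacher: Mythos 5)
Your proposal follows essentially the same route as the paper's proof: the same union bound over the two events, the same decomposition $\nu-\tilde\nu=(T^{-1}-\widetilde T^{-1})\nu_{\text{noisy}}+\widetilde T^{-1}(\nu_{\text{noisy}}-\hat\nu)$ with the inverse-perturbation and Weyl/eigenvalue bounds plus a DKW/Hoeffding concentration step yielding $\psi$, the same Lipschitz-via-gradient-bound treatment of $f$ and $h$ as explicit functions $x^{a}(1-x)^{b}y^{c}(1-y)^{d}$ on the box $(\tfrac12,1-\xi]^2$ yielding $\epsilon\chi$ and $4\epsilon\chi$, and the same final triangle-inequality combination. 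The delicate point you flag (validity of the Lipschitz bound for $g$ along the whole segment, i.e. keeping $\tilde\nu_c$ away from $0$) is likewise left implicit in the paper's own argument, so the proposal is a faithful match.
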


The proof of this theorem is in \cref{proof_theorem_est_quantities}. This bound depends on the quality of $\widetilde{T} \text{ and } \hat{\nu} $ via the parameter $\epsilon$. 
The conditions on $\eta$ and $\xi$ describe the amount of imbalance in class distribution and the worker noise that can be tolerated for the theorem to hold. 
Specifically,~$\eta$ is lower bound on the fraction of samples from the minority class and annotators reliability should be better than random by a margin of~$\xi$.

This theorem states that given estimates $(\hat{\nu}, \widetilde{T})$, it is feasible to determine with high probability if $(\clim, \p)$ satisfies \cref{thm:twocoins} enabling a practical method for the verification of MV's optimality.
Suitable estimators $(\hat{\nu}, \widetilde{T})$ from literature with the required estimation guarantee can be chosen.
For example,~\citet{bonald} use agreement between worker triplets from a pool of workers with better than random average reliability with at least three informative workers.
Similarly,~\cite{bucarelli2023leveraging} leverage pairwise worker agreement assuming all workers are better than random.

\begin{figure}[t]
    \centering
        \includegraphics[width=.65\linewidth]{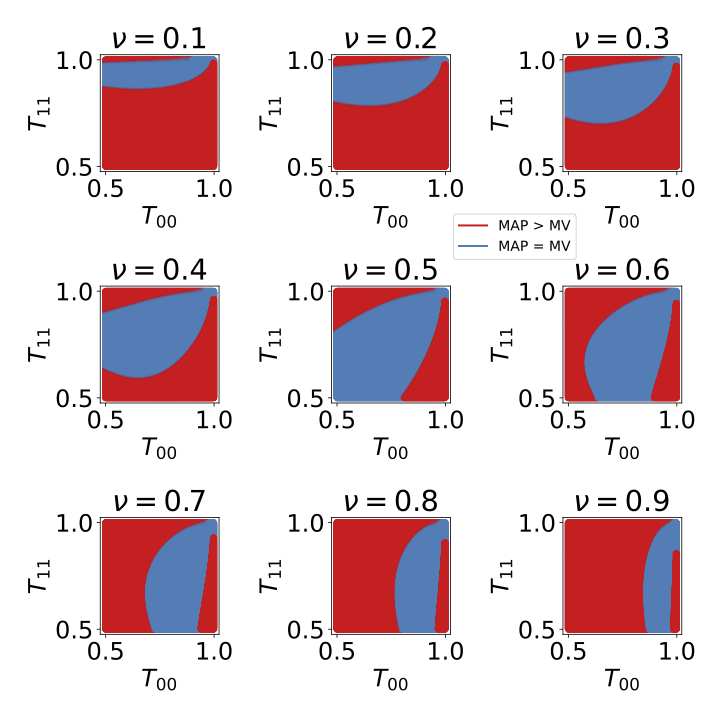}
    \caption{\looseness -1 Illustration of \cref{thm:twocoins} on simulations. We analyze the optimality of MV compared to oMAP 
    verifying if the condition in \cref{request_ineq} is satisfied for ten different $\nu_0$ values as $T_{11}$ and $T_{00}$ vary between $0.5$ and $1$. 
   Blue points denote where MV is equal to oMAP.}

   \label{fig:twocoin}
\end{figure}

\subsection{Beyond equal reliability condition}
\label{sec:relaxing_reliability_assumption}
In this section, we relax the equal reliability assumption and present two different settings.\\
\textbf{Uniformly perturbed $T$}.
In this setting annotators' noise transition matrices are not fixed but are sampled from a distribution. Specifically, for a fixed $\sigma$, the noise transition matrix of annotator $h$ is given by:
\begin{equation*}
T_h = \begin{bmatrix}
T_{00} & T_{01} \\
T_{10} & T_{11}
\end{bmatrix} + \sigma_h \begin{bmatrix}
-1 &1 \\
1 & -1
\end{bmatrix},\; \sigma_h \sim \text{Unif}[-\sigma, \sigma]
\end{equation*}
 We seek the conditions under which the expected performance of MV equals that of oMAP, or, more formally:
\begin{equation}
    \label{eq:check_in_expectation}
    \mathbb{E}[ \mathbb{P}(y^{\MV}=y)  ] = \mathbb{E}[ \mathbb{P}(  y^{\oMAP}=y) ].
\end{equation}
We take expectation over annotator distributions, so the derived conditions will depend only on the number of annotators and not on their specific matrices.
We can prove that, if $\sigma$ is small enough, the conditions of \cref{thm:twocoins} ensure  \cref{eq:check_in_expectation} holds. Specifically, we require:
$ \sigma\!\leq\!\frac{\log \rho}{H} R_c \min ( A_c\!-\! \floor{A_c},  1\!-\!A_c\!-\!\floor{A_c})$,\\
where $R_c = \left[  \frac{2}{T_{\bar{c}\bar{c}}} +   \frac{2}{ T_{c\bar{c}}} + \frac{1}{T_{cc}} + \frac{1}{ T_{\bar{c}c}} \right]^{-1}$ with $A_c$ is as defined in \cref{eq:T_map}. \\
\textbf{Annotators of two categories.}
Here we consider two groups of annotators A and B with different reliabilities with noise transition matrices respectively $T_A$ and $T_B$. 
\textit{W.l.o.g.} we can assume $|A|<|B|$ and $|A| < \ceil{\frac{ H}{2}}$.
Under the assumption $\rho_A = \rho_B$, (which holds for instance, when $(T_A)_{cc}=(T_B)_{\bar{c}\bar{c}}$), we derive conditions under which MV is equivalent to oMAP. These conditions are given by:
\begin{align*}
& \left( \frac{ (\delta_B)_c }{ (\delta_B)_{\bar{c}} } \right)^{ \frac{H}{2} } 
  \sqrt{ \frac{1}{\rho_B} } \zeta_{B,A} < \frac{ \nu_{\bar{c}} }{ \nu_c } \leq 
  \left( \frac{ (\delta_B)_c }{ (\delta_B)_{\bar{c}} } \right)^{ \frac{H}{2} } 
  \sqrt{ \rho_B } \zeta_{B,A}
\end{align*} 
with $\zeta_{B,A} = \left( \frac{ (\delta_B)_{\bar{c}} }{ (\delta_A)_{\bar{c}} } \right)^{ |A| }$ and  $ \delta_A$, $\delta_B$, $\rho_A$ and $\rho_B$ following the notation of \cref{thm:twocoins}. 
Notice that in the case of one set only we recover precisely the condition of \cref{thm:twocoins}.
For additional details refer to \cref{sec:appendix_expectation,sec:appendix_two_groups_workers}.

\subsection{Multiclass case with uniform noise}

Regarding the extension to the multiclass setting,  our current framework already provides the necessary ingredients to generalize the theory beyond the binary case. In particular, the results established in Theorem \ref{thm:binary_classes_shared_coins} naturally extend to multiclass classification.

In this setting, we have $C>2$.
Given the $C$-sized vector of the votes received for a sample by the various classes $n_c$ , MV selects class $c \in C$ iff.:
\begin{equation*}
n_c>n_j \forall j \neq c \Rightarrow n_c \geq n_j+1, 
\end{equation*}
while oMAP selects class $c \in C$ iff.:
\begin{align*}
\log(\nu_c) \! + \!\! \sum_{l=1}^C n_l log(T_{cl}) \! > \! \log(\nu_j) \! + \!\! \sum_{l=1}^C n_l log(T_{jl}) \, \forall j \neq c.
\end{align*}

\paragraph{Uniform Noise and Uniform Priors} Here, the $T$ matrix can be parametrized by a single parameter. 
$$
T_{ij} = \begin{cases}
1 - \rho & \text{if } i = j \\
\frac{\rho}{C-1} & \text{if } i \neq j
\end{cases} \quad \nu_i = \frac{1}{C} \quad \forall i \in \{1, \ldots, C\}
$$
Notice that in this case $\nu_j=\nu_c \; \forall j \neq c$. 
Apart from element $cc$ and $jj$ for all the other $\ell$, $T_{c\ell} = T_{j \ell}$, so the oMAP constraint becomes:
\begin{equation*}
\log\frac{(1-\rho)(C-1)}{\rho} \cdot (n_c - n_j) \geq 0
\end{equation*}
Since $\log\frac{(1-\rho)(C-1)}{\rho} \geq 1$ this becomes $n_c \geq n_j$. 
In this case oMAP and MV are equivalent (we are excluding ties). 
\paragraph{Uniform Noise and Non-Uniform Priors}

Now $\log\nu_j \neq \log\nu_c$. The constraint becomes:
\begin{equation*}
n_c \geq n_j + \frac{\log(\nu_j/\nu_c)}{\log((1-\rho)(C-1)/\rho)}
\end{equation*}
This means that MV is  equivalent to oMAP when:
\begin{equation*}
    1 < \frac{\nu_j}{\nu_c} \leq \left(\frac{(1-\rho)(C-1)}{\rho}\right) \quad \forall j \neq c, \; \forall c \in \{1, \ldots, C\}
\end{equation*}
This condition with $C=2$ gives the condition from \cref{thm:twocoins}.

Deriving and characterizing the equivalence conditions in the general multi-class setting represents a substantial theoretical contribution in its own right, as the constraints become considerably more complex with increasing $C$. While we provide the above derivation to demonstrate the extensibility of our framework, a complete characterization of the multi-class case, including the geometry of the feasible regions and optimality conditions, merits dedicated treatment and is left to future work. We emphasize that establishing these results rigorously even in the binary case represents a significant theoretical contribution.

\section{EXPERIMENTS}
\label{sec:results}
We empirically verify our results comparing \MV~against \oMAP~for various parameter configurations.
Simulations are used for validating \cref{thm:binary_classes_shared_coins,thm:twocoins} that require true data generating~$(\clim,\p)$.
\cref{th:estimated_quantities} that uses estimated~$(\tilde{\clim},\widetilde{\p})$ is verified on simulated data.
We compare estimates~$(\tilde{\clim}, \widetilde{\p})$ from different methods in the literature, namely, Dawid-Skene~\cite{dawid}, GLAD~\citep{glad},  MACE~\citep{mace},  IWMV~\citep{li2014error}, BWA \citep{li2019truth}, IAA~\citep{bucarelli2023leveraging} and LA \citep{yang2024lightweight}.
Annotator count $\nworkers$ is used as available for all real data while it is set to $\nworkers=3$ in all simulations unless stated otherwise. Here we report experiments with the same noise transition matrix shared by all annotators. Additional experiments that relax this assumption, as discussed in \cref{sec:relaxing_reliability_assumption}, are presented in \cref{sec:appendix_beyound_equal_rel}.

\paragraph*{Simulated symmetric noise with oracle  (RQ1).} 
We illustrate \cref{thm:binary_classes_shared_coins} in \cref{fig:main_theorem} marking points in the parameter space where MV performs optimally in blue distinguishing them from those where it underperforms  oMAP (in red). The plot obtained through simulations accurately reflects the theorem's conditions. 
Towards the left on $x$-axis are the low-noise regimes where MV is optimal even for skewed class balance.
As the noise increases to the right, MV's optimality is restricted to fewer~$\clim$ closer to~$0.5$.
This, however, is when we use the true generating noise that is unavailable for real-world cases. So, for comparison, we make similar plots with its estimate from two methods, IWMV and IAA, in \cref{fig:iwmv,fig:iaa} respectively.
Evidently, there is a distortion in the blue region where MV is optimal that grows with the estimation error.
While IWMV largely retains the shape, IAA distorts it further with a higher number of sub-optimal parameters falsely marked as optimal for MV.
We further inspect the actual difference in the probability scores of \MV~and \oMAP~from \cref{fig:main_theorem} on a heatmap in \cref{fig:heatmap}. 
Large magnitude differences are restricted to extreme cases of skewed class distribution with noisy labels on the top and bottom corners to the right. 
\looseness -1 Note that this is the worst-case scenario with $\nworkers=3$, and the blue region where MV is optimal grows larger with the size of the worker pool, however, with the caveat that we need exponentially more workers.
This is illustrated in \cref{fig:infinity} where we draw four distinct parameter configurations to plot the gap $\pro(\y_{\oMAP})-\pro(\y_{\MV})$ for increasing~$\nworkers$.
$(\clim,\p)$ for the orange curve satisfies \cref{thm:binary_classes_shared_coins} and is flat all through independent of~$\nworkers$ as it should be. 
MV is sub-optimal for the other three parameter configurations at lower $\nworkers$ with a gap relative to \oMAP~that vanishes asymptotically as predicted by~\cite{li2014error}.

\paragraph*{Simulated asymmetric noise (RQ1).} Similar visualisations for the two-parameter model with $\nworkers=3$ are plotted in \cref{fig:twocoin} for various values of $\nu$.
Blue colored $\{(\p_{00},\p_{11})\}$ pairs are MV optimal satisfying \cref{eq:suff_cond_nu} while red regions have MV underperforming oMAP.
Notice how the region where MV is optimal lying around the line $\p_{00}=\p_{11}$ are maximum for $\nu=0.5$ shrinking as we move away to $0.1$ or $0.9$ similar to the one-parameter model in \cref{fig:main_theorem}.

\begin{table*}[h]
\begin{subtable}[t]{.67\linewidth}
    \centering
    \resizebox{\linewidth}{!}{
    \begin{tabular}{rlcccc}
    \toprule
        Dataset & $\nu$ & $\ntasks$ & $v_{\textrm{tr}}$ & $\nworkers$ & $v_{\textrm{wr}}$\\
         \midrule
        SP & [49.9, 50.1] & 4999 & 203 & 5  & 5 \\
        SP\_amt & [49, 51] & 500 & 143 & 20 & 500 \\
        ZC\_in & [78.3, 21.7] & 2040 & 25 & 5 & 2125\\
        MS & [11, 10, 10,  9, 10, 10,  10, 10, 11,  9] & 700 & 44 & 4 & 67 \\
        CF\_amt & [19, 23, 24, 31, 3] & 300 & 110 & 20 & 55\\
        D-Product & [87.8, 12.2] & 8715 & 176 & 3 & 140\\
        \bottomrule
    \end{tabular}
    }
    \caption{Real data statistics.}
    \label{table:stats}
\end{subtable}
\hfill
\begin{subtable}[t]{.3\linewidth}
    \centering
    \resizebox{\linewidth}{!}{
    \begin{tabular}{rlcc}
    \toprule
        Dataset & $\nu$ & $T_{00}$ & $T_{11}$\\
         \midrule
        $\alpha$-Data & [20, 80] & 0.7 & 0.7\\
        $\beta$-Data & [50, 50]& 0.9 & 0.9\\
        $\gamma$-Data & [30, 70] & 0.6 & 0.75\\
        $\delta$-Data & [90, 10] & 0.6 & 0.9\\
        \bottomrule
    \end{tabular}
    }
    \caption{Synthetic data statistics.}
    \label{tab:synthparams}
\end{subtable}
\caption{Data statistics with number of tasks, annotators, label class distribution $\nu$ (\%), task rank ($v_{\textrm{tr}}$) and worker rank ($v_{\textrm{wr}}$) that are respectively, the number of annotators who provided labels for each sample and the number of samples each annotator was assigned to label. In \cref{tab:synthparams} $\alpha,\beta$-Data from one-parameter models satisfy and fail \cref{eq:onecoinresult} respectively. $\gamma,\delta$-Data from two-parameter models satisfy and fail \cref{eq:suff_cond_nu} respectively.}
\end{table*}

\begin{figure*}[ht] 
\centering 
    \begin{subfigure}[t]{0.2\linewidth}
        \includegraphics[width=\linewidth]{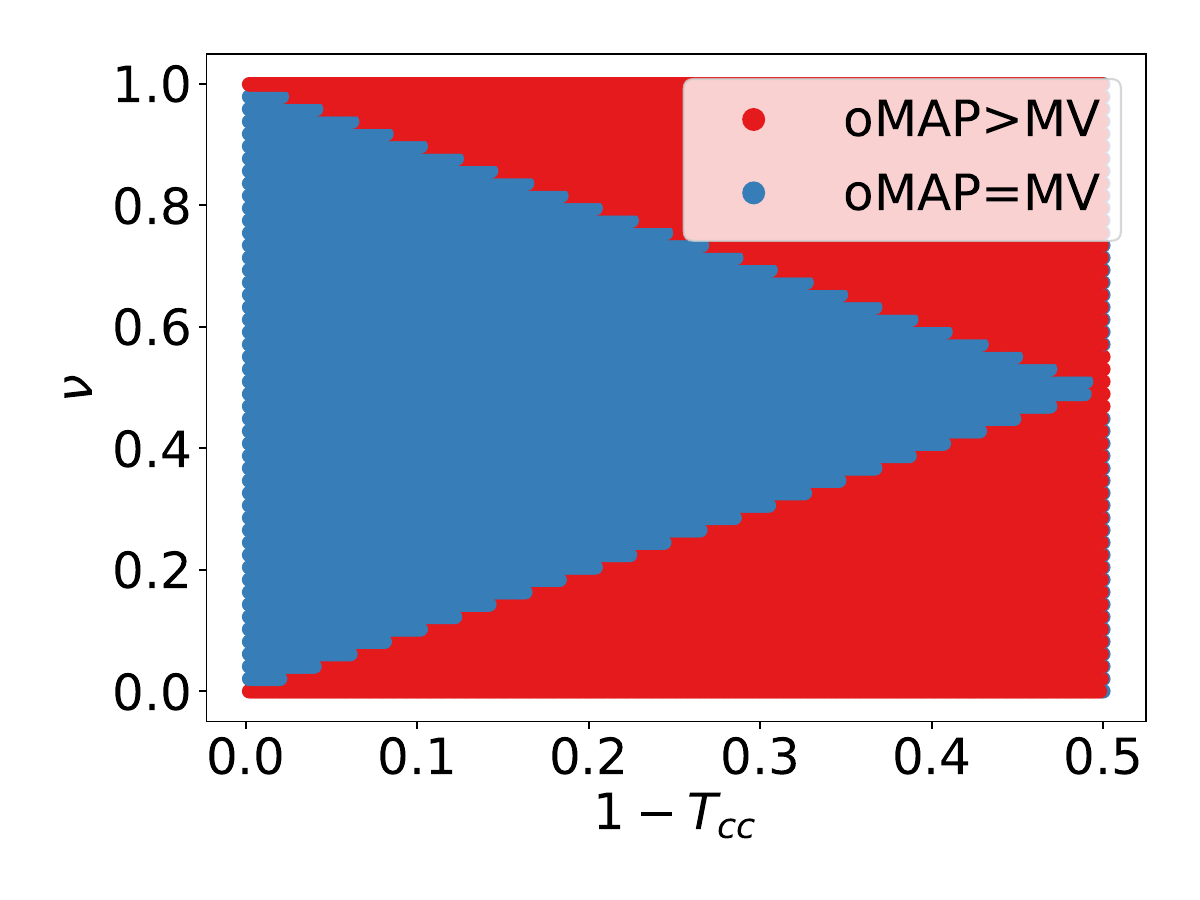}
        \subcaption{
        $\pro(\y_{\oMAP}) = \pro(\y_{\MV})$}
        \label{fig:main_theorem}
    \end{subfigure}
    \begin{subfigure}[t]{0.2\linewidth}
        \includegraphics[width=\linewidth]{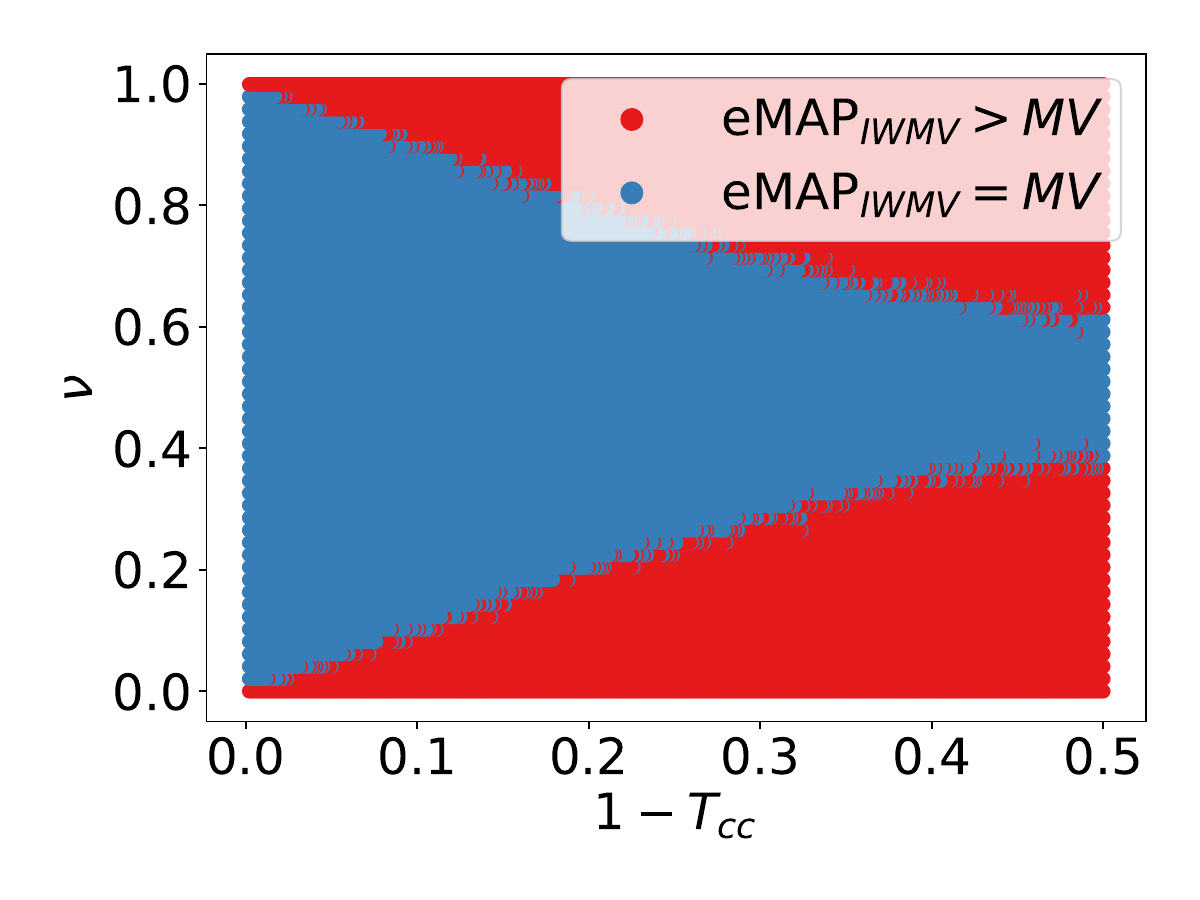}
        \caption{$\pro(\y_{\text{IWMV}}) = \pro(\y_{\MV})$}
        \label{fig:iwmv}
    \end{subfigure}
    \begin{subfigure}[t]{0.2\linewidth}
        \includegraphics[width=\linewidth]{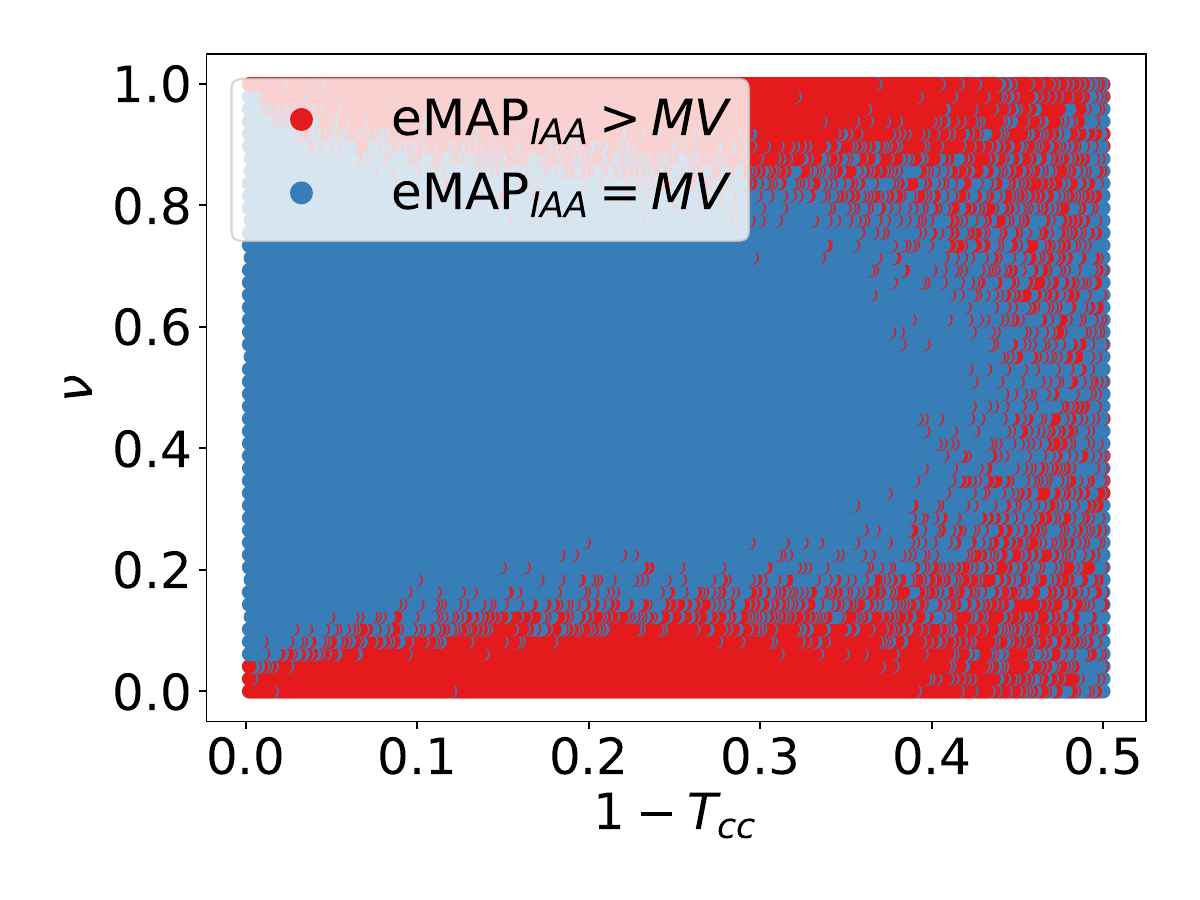}
        \caption{$\pro(\y_{\text{IAA}}) = \pro(\y_{\MV})$}
        \label{fig:iaa}
    \end{subfigure}
\resizebox{0.75\linewidth}{!}
    {
    \begin{subfigure}[t]{0.25\linewidth} \includegraphics[width=\linewidth]{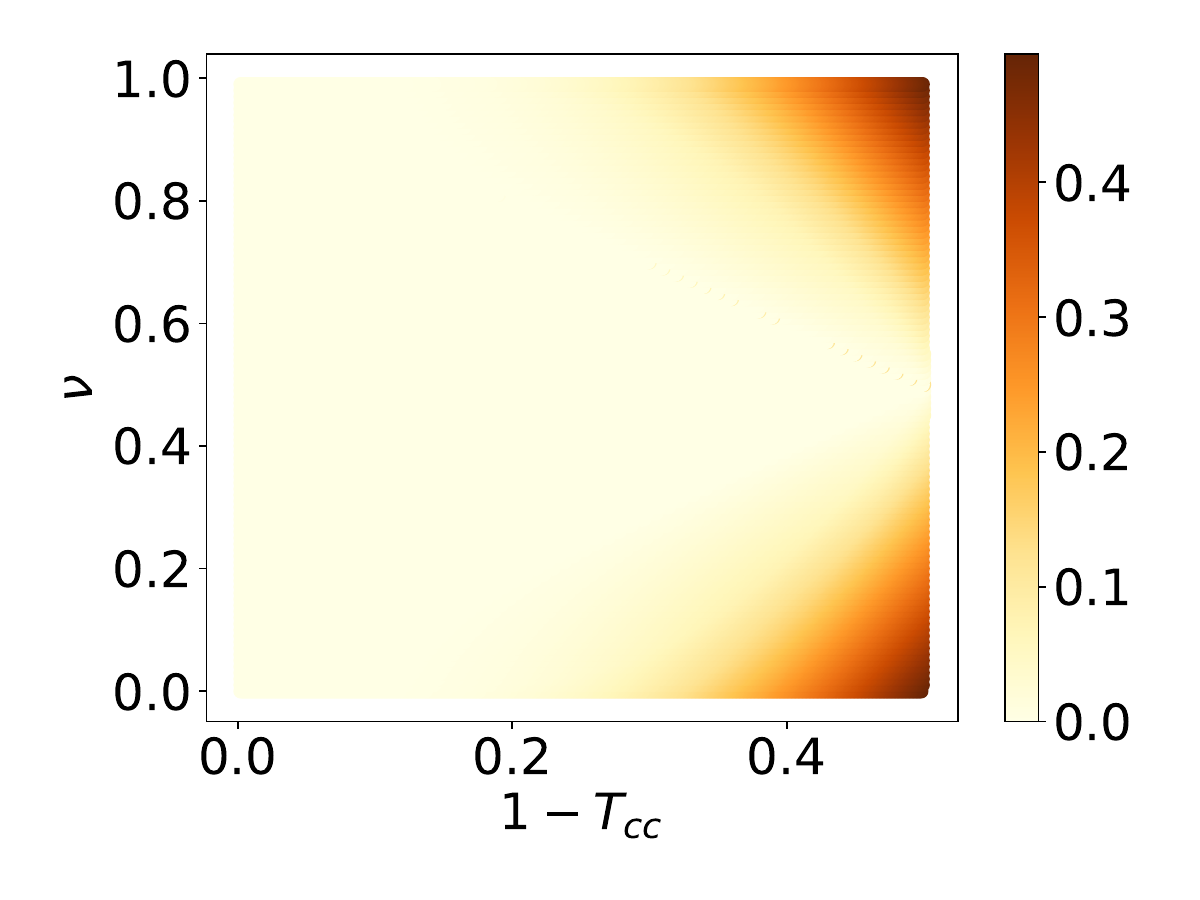} \caption{ $\pro(\y_{\oMAP})\!-\!\pro(\y_{\MV})$ heatmap } \label{fig:heatmap} \end{subfigure} 
    \hfill 
    \begin{subfigure}[t]{0.25\linewidth} \includegraphics[width=\linewidth]{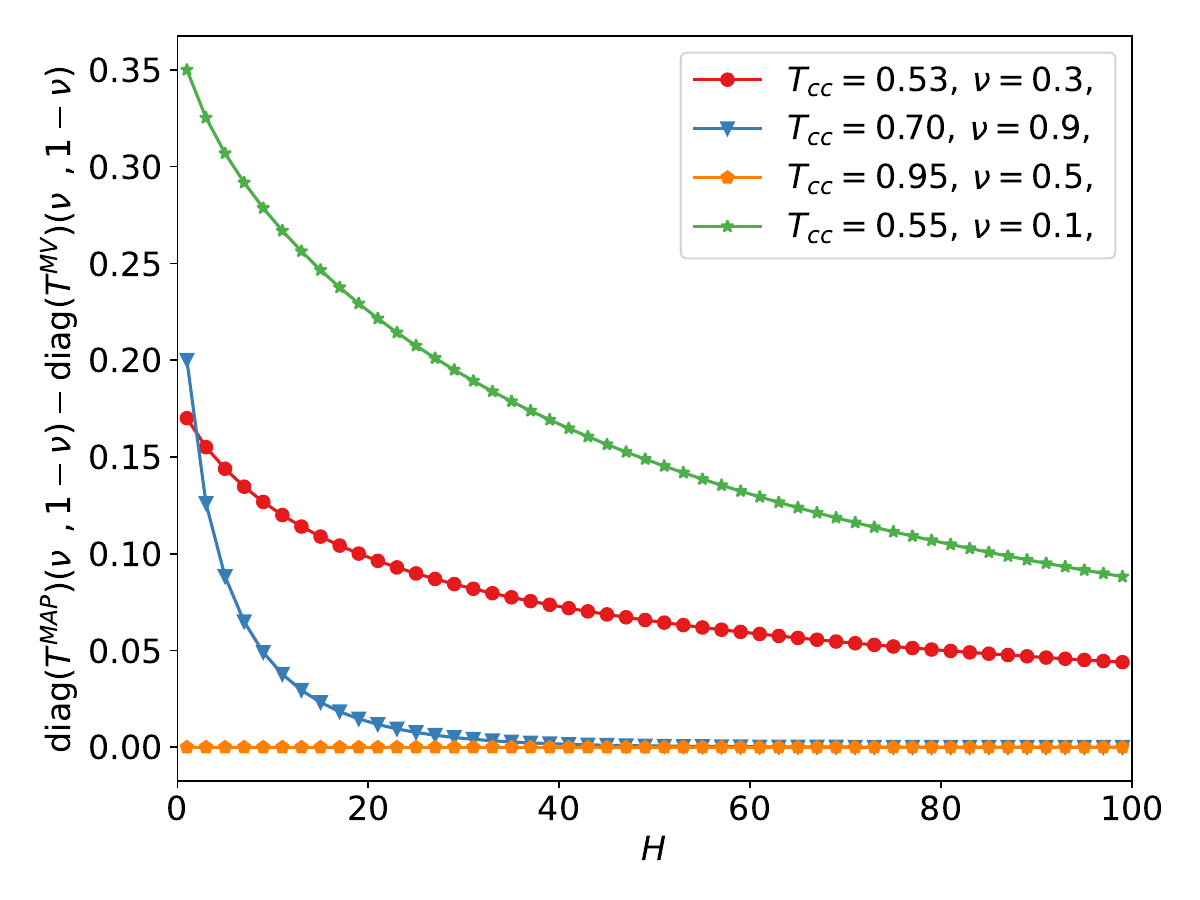} \caption{$\pro(\y_{\oMAP})\!-\!\pro(\y_{\MV})$ vs $\nworkers$. } \label{fig:infinity} \end{subfigure} 
    \hfill 
    \begin{subfigure}[t]{0.25\linewidth} \includegraphics[width=\linewidth]{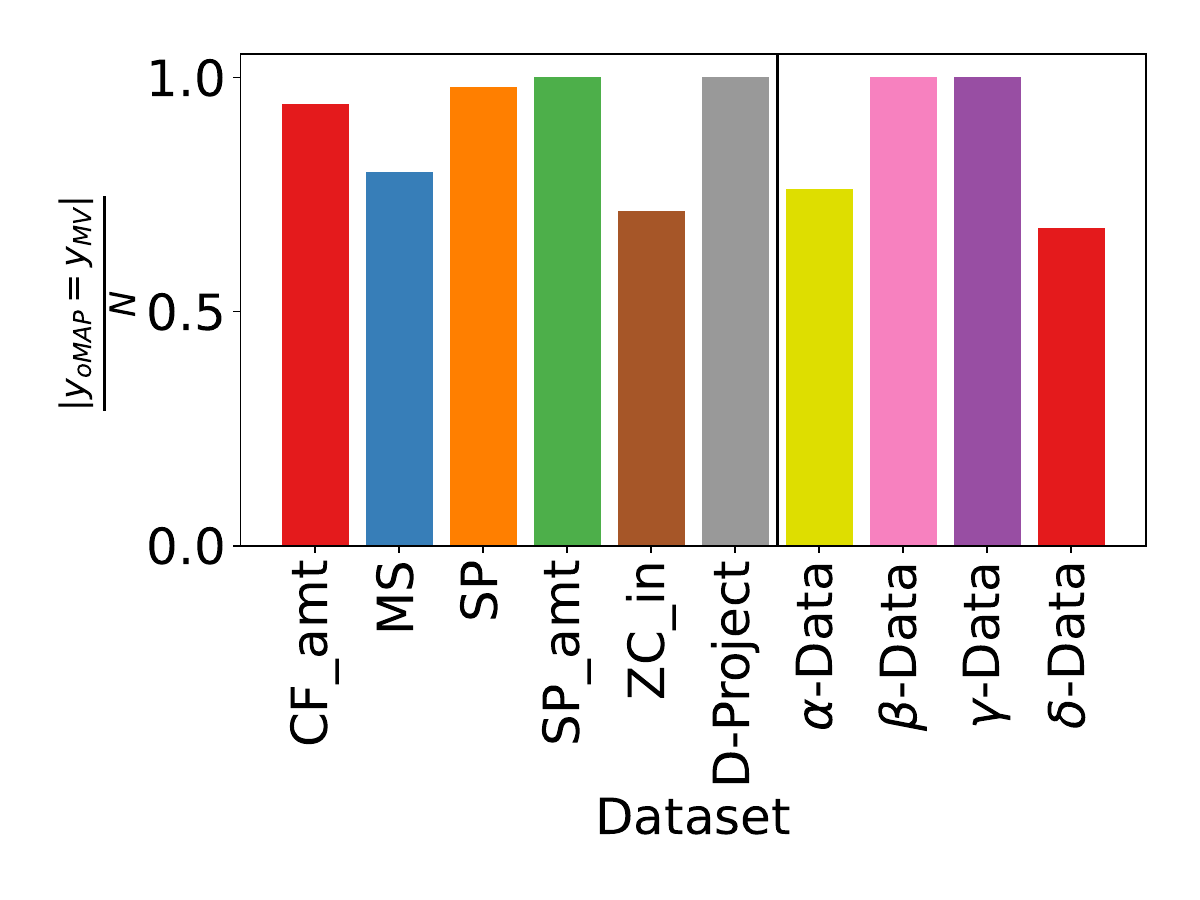} \caption{ $\sum\ind[y_{\oMAP}\!=\!y_\meth]$ histogram} \label{fig:histogram} \end{subfigure}
}
\caption{
Illustrations on simulated data of \cref{thm:binary_classes_shared_coins} comparing MV to \oMAP~are in~(\subref{fig:main_theorem}) with similar plots for IWMV in~(\subref{fig:iwmv}) and for IAA in~(\subref{fig:iaa}).
Heatmap in~(\subref{fig:heatmap}) is of MV vs \oMAP~plot in~(\subref{fig:main_theorem}).
Curves in~(\subref{fig:infinity}) show how that probability gap falls as~$\nworkers$ grows for four distinct parameter settings of which only orange one satisfies optimality condition for MV. 
Histogram~(\subref{fig:histogram}) shows the percentage of labels aggregated using MV equal to the the ones aggregated using oMAP.
}\label{fig:one-coin}
\end{figure*}

\paragraph*{Quantitative evaluation on simulation (RQ1).} We simulate data for four different configurations, see \cref{tab:synthparams}.
Two for one-parameter models of which $\alpha$-Data fails the optimality condition for MV in \cref{eq:onecoinresult} while $\beta$-Data satisfies it.
Similarly, for the two-parameter case $\gamma$-Data satisfies \cref{eq:suff_cond_nu} while $\delta$-Data does not.
\cref{table:main_results} reports the accuracy of various methods on these data.
As predicted by the theory, MV recovers labels optimally for $\beta,\gamma$-Data while it underperforms for $\alpha,\delta$-Data.
\cref{fig:histogram} plots the histogram with the fraction of instances for which the labels recovered using MV is exactly the same as that of oracle MAP.
There is an exact match in all samples of $\beta,\gamma$-Data  emphasizing instance optimality of the results.
Note that the shortfall in oMAP from perfect accuracy score is the irreducible error and it is larger if the noise is higher. Moreover, we can notice that in the settings for which oMAP equals MV, it implies that nearly every other aggregation method performs equally well in terms of effectiveness.  This again suggests that in such scenarios, employing a straightforward approach like MV, which only relies on individual sample-wise information, performs comparably to employing more complex methods that leverage data relationships across the entire dataset for aggregation.

\begin{table*}[t!]
    \centering
     \resizebox{0.9\linewidth}{!}
    {
    \begin{tabular}{lcccccc|cccc}
    \toprule
         & SP \xmark & SP\_amt \cmark & ZC\_in \xmark & MS \textit{?} & CF\_amt \textit{?} & D-Product \xmark & $\alpha$-Data \xmark & $\beta$-Data \cmark & $\gamma$-Data \cmark & $\delta$-Data \xmark \\         \midrule
         DS (a) & 0.502 & 0.632 & 0.575 & 0.103 & 0.257 & \textbf{0.940} & \underline{0.802} & 0.971 & 0.785 & 0.831\\
        GLAD (b) & 0.502 & 0.630 & 0.611 & 0.096 & 0.263 & 0.927 & 0.786 & 0.971 & 0.785 & 0.678 \\ 
        MACE (c) & 0.503 & \underline{0.632} & 0.631 & 0.097 & 0.270 & 0.695 & 0.786 & 0.971  & 0.785 & 0.678\\
        IWMV (d) & \underline{0.904} & {0.942} & 0.750 & \textbf{0.799} & 0.857 & \underline{0.928} & 0.807 & 0.971 & 0.785 &0.678\\
        BWA(e) & \textbf{0.917} & 0.942 & \underline{0.765} & 0.786 & 0.857 & 0.919  & 0.786 & 0.971 & 0.785 & \underline{0.901}\\ 
    LA(f) & 0.903 & 0.942 & 0.749 & \underline{0.797} & 0.857 & 0.926 & 0.784 & 0.971 & 0.785 & 0.678\\
        IAA (g) & {0.882} & {0.942} & {0.744} & 0.703 & 0.857 & 0.905 & 0.786 & 0.971 & 0.785 & 0.678\\
        MV (h) & 0.887 & 0.942 & 0.740 & 0.707 & \underline{0.857} & 0.905 & 0.786 & 0.971 & 0.785 & 0.678\\ 
        
        oMAP & 0.891\textsuperscript{abcfh} & \textbf{0.942}\textsuperscript{abc} & \textbf{0.784}\textsuperscript{abcdefgh} & 0.749\textsuperscript{abcdefgh} & \textbf{0.857}\textsuperscript{abc} & 0.905\textsuperscript{abcdef} & \textbf{0.841}\textsuperscript{abcdefgh} & 0.971 & 0.785 & \textbf{0.915}\textsuperscript{abcdefgh}\\ 
        \bottomrule
    \end{tabular}
    }
        \caption{
        Accuracy ($\uparrow$ is better) of different aggregation methods as measured against gold labels. 
        Real data followed by synthetic from left to right. $\cmark \text{ or } \xmark$ indicate if \cref{thm:twocoins} is satisfied or not, while \textit{?} refers to multi-class datasets. Super-scripted letters indicate the methods statistically significant with respect to oMAP, determined by Wilxocon tests ($p < 0.05$) with Bonferroni correction. Best results in \textbf{bold} and second-best \underline{underlined}.
        }
    \label{table:main_results}

\end{table*}

\paragraph*{Results on real-world data (RQ2).} We use benchmark crowd-sourced datasets from Active Crowd Toolkit \citep{venanzi} and Crowdsourcing datasets repository\footnote{\url{https://dbgroup.cs.tsinghua.edu.cn/ligl/crowddata/}} for our evaluation. 
Data statistics including the number of samples, annotators, classes, and their distributions are reported in \cref{table:stats}.
We do not have access to oracle's true $\p$ of annotators, hence, we use a widely adopted approach~\citep{xia2019anchor}  leveraging carefully curated gold labels or anchor points.
We compute element $i,j$ of the matrix $\p$ as the fraction of anchor points with gold label $i$ that were marked as class $j$.
This approximation assumes noise characteristics are consistent across workers and assigns to each one the average noise estimated from all annotations.
We use this as the oracle $T$ in experiments and report its numbers as $\oMAP$ in \cref{table:main_results}.
Predicted labels are compared against gold labels and accuracy is reported in \cref{table:main_results}.

Classes are balanced in SP\_amt and $T_{00}=T_{11}=0.64$ satisfying \cref{eq:onecoinresult} for optimality of MV.
As a consequence, MV and Anchor oMAP achieve the same accuracy of $0.942$ with exact match of labels for all instances as seen in histogram of \cref{fig:histogram_samplewise}.
Data in SP also has balanced class distribution but $\p_{00}(=0.57) \neq\p_{11}(=0.65)$.
Correspondingly, MV achieves an accuracy of $0.887$ that is slightly lower than that of Anchor oMAP at $0.891$ with a small drop from perfect match in all instances.
ZenCrowd\_in (ZC\_in) and D-Product have skewed class balance with $80$-$20$ and $87$-$13$ data splits respectively.
However, ZC\_in with $\p_{00}(=0.58) \neq\p_{11}(=0.51)$ satisfies neither condition for MV optimality while D-Product with $\p_{00}(=0.72) \neq\p_{11}(=0.55)$ satisfies \cref{eq:suff_cond_nu}.
Correspondingly, D-Product has MV accuracy on par with that Anchor oMAP, such as ZC\_in.
GLAD and MACE could be underperforming due to model misspecifications given that they make very specific assumptions about workers.
It's important to note that Anchor oMAP isn't consistently the top-performing method, primarily because our estimation of matrix $\p$ is merely an approximation. \looseness -1 
\cref{appendix:computeT_cc_omap} shows the noise transition matrices estimated from the data using this method.
CF\_amt and MS (denoted with \textit{?} in \cref{table:main_results}) are both multi-class data and, hence, the results from this work do not apply. However, for CF\_amt data we empirically observe MV match the label recovery accuracy of Anchor oMAP, while it is sub-optimal for MS data.

\begin{figure}[t!]
    \centering
    \includegraphics[width=\linewidth]{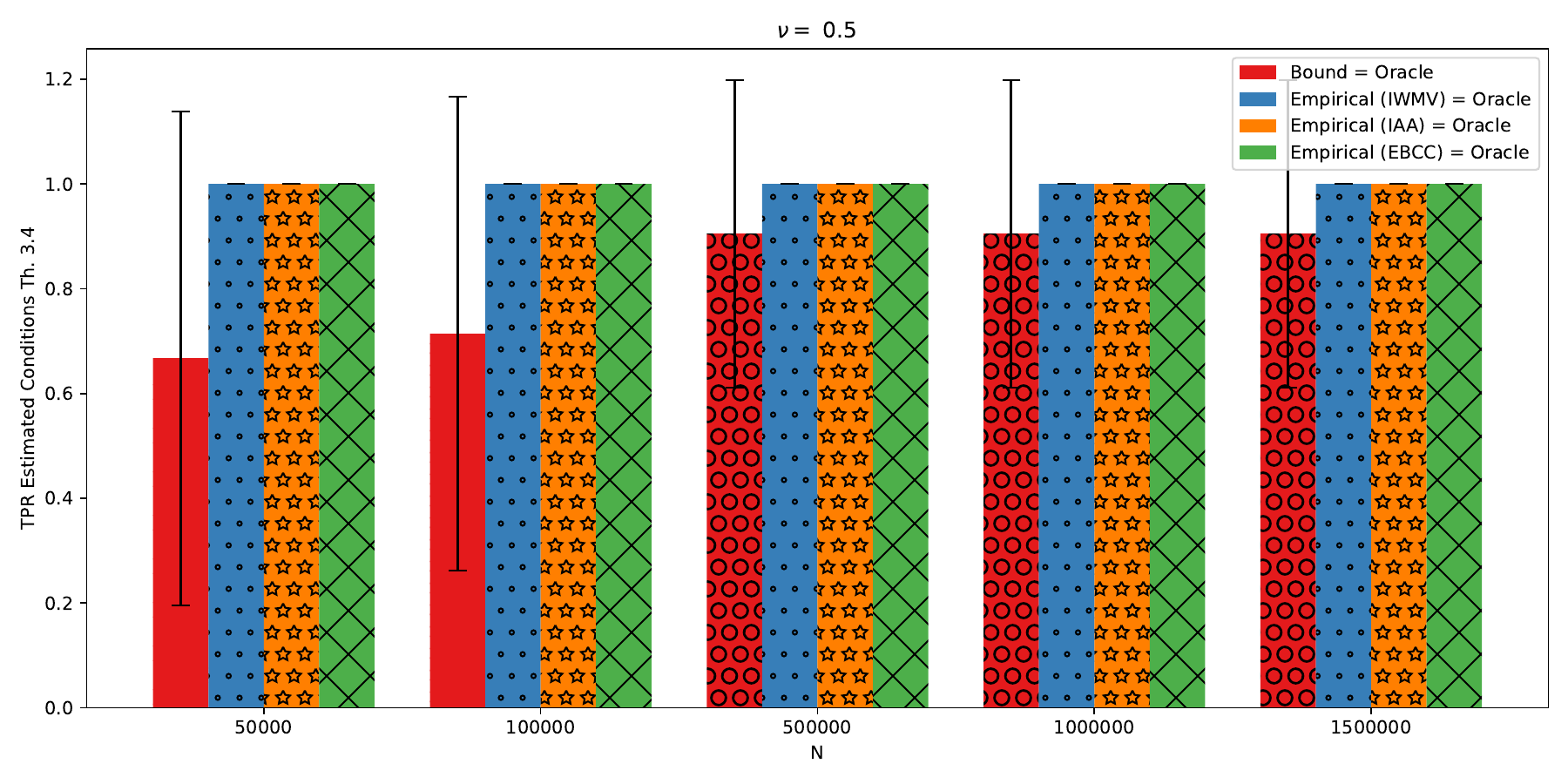}
    \caption{
    Non-red bars show the fraction of experiments where verification of \cref{thm:twocoins} with estimated parameters from the candidate methods aligns with that of \cref{thm:twocoins} using the true $(\clim,\p)$, considering cases where the theorem is verified with true parameters.
    Red bars indicate cases where \cref{th:estimated_quantities} aligns with \cref{thm:twocoins} using true parameters.
    Synthetic data have various sample sizes~$N$, and the average True Positive Rate is plotted over multiple $T$ values, with $\nu_0=0.5$. }
    \label{fig:histogram_samplewise}
\end{figure}

\paragraph*{Optimality verification (RQ2).}
To apply our theorems in practical settings, one can verify optimality conditions by directly plugging in estimates $(\tilde{\clim},\widetilde{\p})$ from any method directly into \cref{thm:twocoins}. 
\cref{th:estimated_quantities} provides guarantees on this approach, showing that when the assumptions of \cref{th:estimated_quantities} are met, the correctness of the MV optimality check is assured with high probability.
Empirically, we observed using estimates from three different methods (EBCC, IWMV and IAA) that even if the conditions of \cref{th:estimated_quantities} are not strictly met, checking the optimality conditions by substituting estimated quantities into \cref{thm:twocoins}
often suffices. In these cases, the estimated conditions frequently match those of the unknown true quantities (see \cref{fig:iwmv,fig:iaa}).
\cref{fig:histogram_samplewise} illustrates this trade-off. The plot displays bars representing the average number of instances where the estimated quantities (each from different estimation method) are plugged into \cref{thm:twocoins} and match the condition computed with \cref{thm:twocoins} using the true, but unknown, noise rate parameters and data distribution. We consider only cases
where the theorem is verified with true parameters, namely we computed the True Positive Rate (TPR). These are compared with red bars illustrating the cases where \cref{th:estimated_quantities} aligns with \cref{thm:twocoins} when calculated based on these true parameters and data distribution. Although \cref{th:estimated_quantities} achieves perfect sensitivity, it often rejects MV optimality in cases where it actually holds, as the conditions it requires for confirmation are quite strict.
We further show similar observation on a simulation with all four cases of true/false-positive/negative from three different estimation methods (IWMV, IAA and EBCC) on various sample sizes in \cref{appendix:table_our_estimation_approach} from \cref{appendix:sec_other_exps}.
For reference, we also report accuracy when aggregated using the three methods along with run time of each which is lowest for MV as expected.

\section{DISCUSSION AND CONCLUSION}
\label{sec:conclude}
MV is often considered naive for crowdsourced labeling and, before this paper, its optimality remained unexplored. 
It was unknown whether it could reach the theoretically optimal oMAP bound, which uses complete knowledge of annotators' noise, and the conditions under which it would match that optimal bound.
We address this problem for identical workers annotating binary tasks.
We identify sufficient and necessary conditions for MV to be equivalent to oMAP.
Our major finding is that when classes are well balanced, MV  is robust to a wide range of worker noise levels exactly matching the oMAP bound and as the class distribution skews, worker reliability becomes increasingly important for MV to be optimal.
Furthermore, we also extended our findings to some of the scenarios involving annotators with varying reliabilities.

Experiments show that verifying the conditions using estimated quantities is often sufficient, thereby making our findings applicable to real-world scenarios.
Finding a suitable aggregation method currently relies on evaluating multiple hypotheses through trial-and-error on a \textit{dev set} with \textit{expert} generated \textit{gold} labels that suffer from the same annotator uncertainty.
It is particularly challenging for critical tasks like medical diagnostics where the accuracy of individual labels is paramount that the instance-level optimality derived in this work guarantees.
This also serves as a guide for future research to focus effort on those regions of the parameter space where MV is not optimal.
\paragraph{Limitations} 
Areas for improvement include extending to arbitrary noise models including adversarial workers and multiclass tasks that are theoretically challenging cases to analyse but have wider applicability.

\acknowledgments{Maria Sofia Bucarelli acknowledges partial support from the French government, through the 3IA Cote d’Azur Investments in the project managed by the National Research Agency (ANR) with the reference number ANR-23-IACL-0001. 

Antonio Purificato and Fabrizio Silvestri acknowledge project FAIR (PE0000013), under the MUR National Recovery and Resilience Plan funded by the European Union - NextGenerationEU.}

\bibliographystyle{apalike}
\bibliography{sample}

\section*{Checklist}

\begin{enumerate}

  \item For all models and algorithms presented, check if you include:

  \begin{enumerate}

    \item A clear description of the mathematical setting, assumptions, algorithm, and/or model. [Yes]. The manuscript contains a clear description of the mathematical setting and assumptions in \cref{sec:problem}.
   \item An analysis of the properties and complexity (time, space, sample size) of any algorithm. 
   
   [Yes]. The properties of each algorithm are analyzed in \cref{sec:results}.
   \item (Optional) Anonymized source code, with specification of all dependencies, including external libraries. 
   
   [Yes]. The code can be downloaded from the Additional Material. All the external libraries are clearly specified.

  \end{enumerate}

  \item For any theoretical claim, check if you include:

  \begin{enumerate}
   \item Statements of the full set of assumptions of all theoretical results. 

   [Yes]. Each theoretical claim contains as statement with the full set of assumptions.
   \item Complete proofs of all theoretical results. 

   [Yes]. Each theoretical claim is related to a proof, which can be found in the main manuscript or in the Appendix.
   \item Clear explanations of any assumptions. 

   [Yes]. Each assumption is clearly explained before the corresponding claim.
 \end{enumerate}

  \item For all figures and tables that present empirical results, check if you include:

   \begin{enumerate}
   \item The code, data, and instructions needed to reproduce the main experimental results (either in the supplemental material or as a URL). 

   [Yes]. The code to reproduce all the experiment is fully available and all the istructions are clearly explained in the README.
   \item All the training details (e.g., data splits, hyperparameters, how they were chosen). 

   [Yes]. All the experimental details can be found or in  \cref{sec:results} from the main manuscript or in \cref{sec:sup_exps}.
         \item A clear definition of the specific measure or statistics and error bars (e.g., with respect to the random seed after running experiments multiple times). 

         [Yes]. All the statistics are clearly explained in the Appendix.
         \item A description of the computing infrastructure used. (e.g., type of GPUs, internal cluster, or cloud provider). 

         [Yes]. The description of the configuration is given in \cref{sec:sup_exps} from the Additional Material.
 \end{enumerate}

  \item If you are using existing assets (e.g., code, data, models) or curating/releasing new assets, check if you include:

   \begin{enumerate}
   \item Citations of the creator If your work uses existing assets. 

   [Yes]. All the citations to code or datasets creators are in the text.
   \item The license information of the assets, if applicable. 

   [Yes]. The license of the code or datasets are clearly stated.
   \item New assets either in the supplemental material or as a URL, if applicable.

   [Yes]. The code the reproduce all the experiments is released as Supplementary Material.
   \item Information about consent from data providers/curators. 

   [Not Applicable]. All the used data don't require consent from data curators.
   \item Discussion of sensible content if applicable, e.g., personally identifiable information or offensive content. 
   
   [Not Applicable]. There is no sensible content from the experimental data.
 \end{enumerate}

  \item If you used crowdsourcing or conducted research with human subjects, check if you include:

   \begin{enumerate}
   \item The full text of instructions given to participants and screenshots. 

   [No]. Also if this paper is about crowdsourcing, no participants were involved. Only synthetic or publicly available datasets were used.
   \item Descriptions of potential participant risks, with links to Institutional Review Board (IRB) approvals if applicable.

   [No]. Also if this paper is about crowdsourcing, no participants were involved.
   \item The estimated hourly wage paid to participants and the total amount spent on participant compensation. 

   [No]. Also if this paper is about crowdsourcing, no participants were involved.
 \end{enumerate}

\end{enumerate}

\appendix

\onecolumn

\section{Proofs}

\subsection{Proof of \cref{lemma:noise_transition_matrix_MAP}. Computation of $T_{cc}^{MAP}$}
\label{appendix:computeT_cc_omap}
Given that the analysis remains consistent for both $T_{00}$ and $T_{11}$, let us proceed by considering a general $T_{cc}$, where $c \in \{0,1\}$.
\begin{equation}
    T_{cc}^{oMAP} = \mathbb{P}({y_{oMAP}=c | y=c}) = \mathbb{P}({\text{argmax} \, p_i = c})
\end{equation}

Let us define  $m \sim \text{Bin}(H,T_{cc})$ and  $H-m \sim \text{Bin}(H,1-T_{cc})$. We recall that the element $i$-th of the posterior probabilities vector, $ p_{i|\tilde{c}} = \nu_{i} \prod_{c'=1}^C T_{i c' }^{n_{c'|\tilde{c}}}$
where $n_{i|\tilde{c}}$ is the number of annotators  that vote class $i$ given that the true class is $\tilde{c}$.\\ 
The ${\text{argmax}}_{i \in \{1,0 \}}$ of $p_i$ is $c$ if $ p_{c|c} > p_{1-c|c}$ i.e.:
\begin{equation}
    \nu_c (T_{cc})^m{(1-T_{cc})}^{H-m} > \nu_{1-c}(1-T_{1-c 1-c})^m(T_{1-c 1-c})^{H-m}
\end{equation}
Where $m $ is the number of annotators that vote class $c$ given that the true class is $c$. 
\begin{equation}
    {\left(\frac{T_{cc}}{1-T_{1-c1-c}}\right)}^m {\left(\frac{1-T_{cc}}{T_{1-c 1-c}}\right)}^{H-m} > \frac{\nu_{1-c}}{\nu_c}
\end{equation}

\begin{equation*}
    {\left(\frac{T_{00}T_{11}}{(1-T_{00})(1-T_{11})}\right)}^{m} > \frac{1-\nu}{\nu} {\left(\frac{T_{11}}{1-T_{00}}\right)}^H
 \iff 
    m >  \frac{\log{\frac{1-\nu}{\nu}} + H \log{\frac{T_{11}}{1-T_{00}}}}{\log{\frac{T_{00}T_{11}}{(1-T_{00})(1-T_{11})}}}
\end{equation*}

By defining:
\begin{align*}
    A_c = \frac{\log{\frac{\nu_{1-c }}{\nu_c}} + H \log{\frac{T_{1-c,1-c}}{1-T_{cc}}}}{\log{\frac{T_{cc}T_{1-c,1-c}}{(1-T_{cc})(1-T_{1-c,1-c})}}}
\end{align*}

We obtain that, if $A_{c} \notin \mathbb{N}$: 
\begin{align*}
    T^{oMAP}_{cc} &= \mathbb{P}({y_{oMAP}=c | y=c}) = \mathbb{P}(m > A_c )  = \sum_{k=\ceil{A_c} }^H \binom{H}{k}T_{cc}^k (1-T_{cc})^{H-k}.
\end{align*}

While if $A_c \in \mathbb{N}$:
\begin{align*}
    T^{oMAP}_{cc} &= \mathbb{P}({y_{oMAP}=c | y=c}) = \mathbb{P}(m > A_c)  = \sum_{k=A_c +1 }^H \binom{H}{k}T_{cc}^k (1-T_{cc})^{H-k}.
\end{align*}

For the sake of simplicity, we opt to exclude the scenario where $A_c \in \mathbb{N}$.

\subsection{Possible scenarios for exact matches. Proof of  \cref{thm:twocoins,thm:binary_classes_shared_coins}}
\label{list:exact_matches_proofs}
We want to solve \cref{eq:main_eq} that we rewrite here to make the reading easier: 

\begin{equation}
\begin{multlined}
\label{eq:initial}
    \mathbb{P}(y_{MV} = y | y=0)\mathbb{P}(y=0) + {P}(y_{MV} = y | y=1)\mathbb{P}(y=1) =  \\ \mathbb{P}(y_{oMAP} = y | y=0)\mathbb{P}(y=0) + \mathbb{P}(y_{oMAP} = y | y=1)\mathbb{P}(y=1)
\end{multlined}
\end{equation}

We switch to \cref{request_ineq}, namely:
\begin{equation*}
    T_{00}^{\MV} + T_{11}^{\MV}\frac{1 -\nu}{\nu} = T_{00}^{\oMAP} + T_{11}^{\oMAP}\frac{1 -\nu}{\nu}.
\end{equation*}

The following list elaborates on the possible scenarios in for sign of the inequality in \cref{request_ineq}:
\begin{enumerate}

    \item If $T_{00}^{oMAP} > T_{00}^{MV}$ and $T_{11}^{MAP} >  T_{11}^{MV} \Rightarrow \mathbb{P}(y_{oMAP} = y) > \mathbb{P}(y_{MV} = y)$;
    \item If $T_{00}^{oMAP}> T_{00}^{MV}$ and $T_{11}^{oMAP} < T_{11}^{MV}$ \\ $T_{00}^{oMAP} - T_{00}^{MV} > (T_{11}^{MV} - T_{11}^{MAP})\left(\frac{1-\nu}{\nu}\right)$ $\Rightarrow \mathbb{P}(y_{MAP} = y) > \mathbb{P}(y_{MV} = y)$;
    \item If $T_{11}^{MAP}> T_{11}^{MV}$ and $T_{00}^{oMAP} < T_{00}^{MV} \Rightarrow$  $T_{00}^{MV} - T_{00}^{MV} < (T_{11}^{oMAP} - T_{11}^{MV})\left(\frac{1-\nu}{\nu}\right)$;
    \item Otherwise $\mathbb{P}(y_{oMAP} = y) < \mathbb{P}(y_{MV} = y)$;
\end{enumerate}

In general if $x, y \in \mathbb{R}$  $x\leq  y $ it's sufficient to have  $ \ceil{x} \leq  \ceil{y}$.
If one wants the strict inequality for the ceiling, a sufficient condition is $x \leq y-1$.

From the previous sections we know that:

\begin{equation*}
T_{cc}^{MV} = \sum_{i=\ceil{\frac{H}{2}}}^{H} \binom{H}{i} T_{cc}^{i}(1- T_{cc})^{H-i} \quad \text{and}\quad 
    T^{oMAP}_{cc} = \sum_{k=\ceil{A_c} }^H \binom{H}{k}T_{cc}^k (1-T_{cc})^{H-k}.
\end{equation*}

\begin{lemma}
\label{lemma:conditions_on_T_cc}
    Let $T$ be the workers' confusion matrix, and $(\nu_0,\nu_1)$ the class distribution:
\begin{align}
 \label{eq:t_cc_MaP_greater_T_cc_MV}
         &T^{oMAP}_{cc} > T^{MV}_{cc} \Longleftrightarrow      \frac{\nu_{c}}{\nu_{1-c}} \geq \biggl(\frac{\delta_{1-c}}{\delta_{c}} \biggr)^{\frac{H}{2}} \sqrt{ \rho} \\
         \label{eq:t_cc_MaP_equal_T_cc_MV}
         &T^{oMAP}_{cc} =  T^{MV}_{cc} \Longleftrightarrow   \biggl(\frac{\delta_{1-c}}{\delta_{c}} \biggr)^{\frac{H}{2}} \frac{1}{\sqrt{ \rho}}  < \frac{\nu_{c}}{\nu_{1-c}} < \biggl(\frac{\delta_{1-c}}{\delta_{c}} \biggr)^{\frac{H}{2}} \sqrt{ \rho} 
    \end{align}
\end{lemma}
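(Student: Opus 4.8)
\textbf{Proof plan for Lemma~\ref{lemma:conditions_on_T_cc}.}
The plan is to compare the two Binomial tail sums that define $T^{\oMAP}_{cc}$ and $T^{\MV}_{cc}$ by comparing their lower summation limits, $\ceil{A_c}$ and $\ceil{H/2}$. Both quantities are tails of the \emph{same} distribution $\Bin(H,T_{cc})$, so monotonicity gives us a clean equivalence: $T^{\oMAP}_{cc} > T^{\MV}_{cc}$ iff $\ceil{A_c} < \ceil{H/2}$, i.e.\ iff the oMAP threshold demands strictly fewer correct votes than a bare majority; $T^{\oMAP}_{cc} = T^{\MV}_{cc}$ iff $\ceil{A_c} = \ceil{H/2}$; and $T^{\oMAP}_{cc} < T^{\MV}_{cc}$ iff $\ceil{A_c} > \ceil{H/2}$. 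Since $H$ is odd, $\ceil{H/2} = (H+1)/2$, and (having excluded $A_c \in \mathbb{N}$) $\ceil{A_c} = \floor{A_c} + 1$, so the three cases translate into the elementary inequalities $A_c < (H-1)/2$, $(H-1)/2 < A_c < (H+1)/2$, and $A_c > (H+1)/2$ respectively.

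Next I would substitute the closed form $A_c = \bigl(\log\tfrac{\nu_{\bar c}}{\nu_c} + H\log\tfrac{T_{\bar c\bar c}}{1-T_{cc}}\bigr)\big/\log\tfrac{T_{cc}T_{\bar c\bar c}}{(1-T_{cc})(1-T_{\bar c\bar c})}$ from Lemma~\ref{lemma:noise_transition_matrix_MAP} and solve each inequality for the ratio $\nu_c/\nu_{\bar c}$. The key observation that keeps the direction of the inequalities under control is that the denominator $\log\rho = \log\tfrac{T_{cc}T_{\bar c\bar c}}{(1-T_{cc})(1-T_{\bar c\bar c})}$ is strictly positive because $T_{cc},T_{\bar c\bar c} > 1/2$ (better-than-random annotators). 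Clearing this positive denominator, moving the $H\log\tfrac{T_{\bar c\bar c}}{1-T_{cc}}$ term across, and exponentiating converts $A_c \lessgtr (H\pm 1)/2$ into bounds of the form $\tfrac{\nu_c}{\nu_{\bar c}} \lessgtr \bigl(\tfrac{\delta_{\bar c}}{\delta_c}\bigr)^{H/2}\sqrt{\rho}^{\,\pm 1}$; here one checks that $\bigl(\tfrac{T_{cc}}{1-T_{\bar c\bar c}}\cdot\tfrac{T_{\bar c\bar c}}{1-T_{cc}}\bigr)^{H/2} = \bigl(\tfrac{\delta_c}{\delta_{\bar c}}\cdot\tfrac{1}{?}\bigr)$ rearranges precisely into $\bigl(\tfrac{\delta_{\bar c}}{\delta_c}\bigr)^{H/2}$ together with the $\sqrt{\rho}$ factors, using $\delta_c = \tfrac{T_{cc}}{1-T_{\bar c\bar c}}$ and $\rho = \delta_c\delta_{\bar c}$. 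This gives Equation~\ref{eq:t_cc_MaP_greater_T_cc_MV} from the case $\ceil{A_c} < \ceil{H/2}$ and Equation~\ref{eq:t_cc_MaP_equal_T_cc_MV} from the case $\ceil{A_c} = \ceil{H/2}$; the third, omitted case gives the complementary strict inequality $\tfrac{\nu_c}{\nu_{\bar c}} < \bigl(\tfrac{\delta_{\bar c}}{\delta_c}\bigr)^{H/2}\tfrac{1}{\sqrt\rho}$.

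From the lemma the two theorems follow quickly. For Theorem~\ref{thm:twocoins}: consulting the four-case list for the sign of Equation~\ref{request_ineq}, one sees that $\pro(y_\oMAP = y) = \pro(y_\MV = y)$ forces $T^{\oMAP}_{cc} = T^{\MV}_{cc}$ for \emph{both} $c = 0$ and $c = 1$ (if for some class the oMAP tail strictly dominates while for the other it is only weakly dominated, cases 1--3 show the aggregate probabilities are strictly ordered, never equal; and oMAP is never strictly worse by Proposition~\ref{prop:omap_bound}). Applying Equation~\ref{eq:t_cc_MaP_equal_T_cc_MV} with $c$ relabelled as $\bar c$ and inverting the ratio yields exactly $\bigl(\tfrac{\delta_c}{\delta_{\bar c}}\bigr)^{H/2}\tfrac{1}{\sqrt\rho} < \tfrac{\nu_{\bar c}}{\nu_c} < \bigl(\tfrac{\delta_c}{\delta_{\bar c}}\bigr)^{H/2}\sqrt\rho$, which is Equation~\ref{eq:suff_cond_nu}. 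Theorem~\ref{thm:binary_classes_shared_coins} is the specialisation $T_{00} = T_{11} = 1-\varrho$: then $\delta_c = \delta_{\bar c}$ so $(\delta_c/\delta_{\bar c})^{H/2} = 1$, and $\sqrt\rho = \tfrac{1-\varrho}{\varrho}$, collapsing the condition to $\tfrac{\varrho}{1-\varrho} < \tfrac{1-\nu}{\nu} < \tfrac{1-\varrho}{\varrho}$, equivalently $\varrho < \nu < 1-\varrho$.

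The main obstacle I anticipate is purely bookkeeping rather than conceptual: tracking the $c \leftrightarrow \bar c$ symmetry and the direction of every inequality through the exponentiation step, making sure the $\sqrt\rho$ versus $1/\sqrt\rho$ ends up on the correct side, and verifying the algebraic identity that bundles $\bigl(\tfrac{T_{cc}}{1-T_{\bar c\bar c}}\bigr)^{\pm}$-type factors into the stated $\delta$ and $\rho$ notation. One also has to state cleanly why the aggregate-probability equality is equivalent to the pair of per-class tail equalities — the list of four sign cases preceding the lemma does this, but it should be invoked explicitly. No deep estimate is needed; the one genuine structural input is $\log\rho > 0$, guaranteed by the better-than-random assumption, which is what lets every inequality be cleared and exponentiated without flipping.
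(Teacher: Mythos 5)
Your proposal is correct and follows essentially the same route as the paper: both reduce the comparison of $T^{\oMAP}_{cc}$ and $T^{\MV}_{cc}$ to comparing the lower summation indices $\ceil{A_c}$ and $\ceil{H/2}=(H+1)/2$ of two tails of the same $\Bin(H,T_{cc})$ distribution, translate these into the inequalities $A_c \lessgtr (H\pm1)/2$, and then clear the positive denominator $\log\rho>0$ and exponentiate to land on the stated bounds in terms of $\delta_c,\delta_{\bar c},\rho$. The only cosmetic difference is that you state the threshold inequalities strictly while the paper keeps a weak inequality at the integer boundary; since $A_c\in\mathbb{N}$ is excluded, the two formulations coincide.
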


Where:
\begin{equation}
\label{eq:notation_rho_delta}
\delta_c = \frac{T_{cc}}{1-T_{1-c,1-c}}
 \,\,\,\,
 \rho = \frac{T_{cc} T_{1-c,1-c}}{(1-T_{cc})(1-T_{1-c,1-c})}
\end{equation}

\textbf{Proof of statement in \cref{eq:t_cc_MaP_equal_T_cc_MV}}.
The  condition $ T_{cc}^{oMAP} = T_{cc}^{MV} $ is equivalent to $ \ceil{A_{c}} = \frac{H+1}{2} $, that is satisfied when: 
\begin{align*}
        \frac{H-1}{2} <     \frac{- \log{\frac{\nu_{c}}{\nu_{1-c}} + H \log{\frac{T_{1-c,1-c}}{1-T_{cc}}}}}{\log{\frac{T_{00}T_{11}}{(1-T_{00})(1-T_{11})}}}  \leq \frac{H+1}{2}
\end{align*}
Using the notation from \cref{eq:notation_rho_delta}.

We remove the denominator, by multiplying both sides for it, notice we can do it without changing the direction of the inequality since $\rho>1$ so $\log(\rho) >1$. We  can rewrite the equation above as: 
\begin{align*}
        \frac{H-1}{2} < &  \frac{   - \log{\frac{\nu_{c}}{\nu_{1-c}} + H \log{\delta_{1-c}}}}{\log{\rho} }  \leq \frac{H+1}{2} \\ & \quad \quad \quad \Updownarrow  \\
        H \left( \frac{\log{\rho}}{2} - \log \delta_{1-c} \right)  - \frac{\log{\rho}}{2} & < -   \log{\frac{\nu_{c}}{\nu_{1-c}}}   \leq H \left( \frac{\log{\rho}}{2} - \log \delta_{1-c} \right) +  \frac{\log{\rho}}{2}\\
     & \quad \quad \quad \Updownarrow  \\
H  \left( \frac{\log{\rho}}{2} - \log \delta_{1-c} \right)  - \frac{\log{\rho}}{2} < &    - \log{\frac{\nu_{c}}{\nu_{1-c}}}  \leq H \left( \frac{\log{\rho}}{2} - \log \delta_{1-c} \right) +  \frac{\log{\rho}}{2}. 
\end{align*}

Noticing that:
$$ \log{ \sqrt{\rho}} - \log{\delta_{1-c}} = \log{ \sqrt{ \frac{ \delta_{c}}{\delta_{1-c}}}}, $$
It follows that:
$$ \left(\frac{\delta_{c}}{\delta_{1-c}} \right)^{\frac{H}{2}} \frac{1}{\sqrt{\rho}}  < \frac{ \nu_{1-c}}{\nu_{c}} \leq \left(\frac{\delta_{c}}{\delta_{1-c}} \right)^{\frac{H}{2}} \sqrt{\rho} $$

This concludes the proof of statement in \cref{eq:t_cc_MaP_equal_T_cc_MV}.

In case we wanted to derive a condition for the single class distributions $\nu_c$ we could derive: 
$$
  \left(\frac{\delta_c}{\delta_{1-c}} \right)^{\frac{H}{2}} \frac{1}{\sqrt{\rho}} +1< \frac{1}{\nu_c} \leq  \left(\frac{\delta_{c}}{\delta_{1-c}} \right)^{\frac{H}{2}} \sqrt{\rho} +1 $$
From which: 
$$  \frac{1}{ \left(\frac{\delta_c}{\delta_{1-c}} \right)^{\frac{H}{2}} \sqrt{\rho} +1} 
\leq  {\nu_c}<  \frac{1}{ \left(\frac{\delta_c}{\delta_{1-c}} \right)^{\frac{H}{2}} \frac{1}{\sqrt{\rho}} +1} $$

\textbf{Proof of statement in \cref{eq:t_cc_MaP_greater_T_cc_MV} }
\begin{align*}
    T^{oMAP}_{c,c} > T^{MV}_{c,c} &\Longleftrightarrow \ceil{A_{c}} < \ceil{\frac{H}{2}} = \frac{H+1}{2} \Longleftrightarrow A_c \leq  \frac{H-1}{2} \\
   &\Longleftrightarrow     \frac{- \log{\frac{\nu_{c}}{\nu_{1-c}} + H \log{\frac{T_{1-c,1-c}}{1-T_{cc}}}}}{\log{\frac{T_{00}T_{11}}{(1-T_{00})(1-T_{11})}}}  \leq \frac{H-1}{2}
\end{align*}

Using the notation from \cref{eq:notation_rho_delta}:

\begin{align*}
    - \log{\frac{\nu_{c}
    }{\nu_{1-c}}} +  {H} \log \delta_{1-c}   \leq  \frac{H}{2} \log \rho -  \frac{1}{2} \log \rho 
 \Leftrightarrow
     - \log{\frac{\nu_{c}
    }{\nu_{1-c}}} \leq H\biggl[-\log \delta_{1-c} + \log(\sqrt{\rho}) \biggr] -   \log(\sqrt{\rho}) \\
 \Leftrightarrow
   - \log{\frac{\nu_{c}
    }{\nu_{1-c}}} \leq \frac{H}{2} \biggl[ \log \left( \frac{\delta_c}{\delta_{1-c}} \right) \biggr] -  \log \sqrt{\rho}
 \Leftrightarrow
        {\frac{\nu_{1-c}}{\nu_c}} \leq   \left( \frac{\delta_c}{\delta_{1-c}} \right)^{\frac{H}{2}}  \frac{1}{\sqrt{\rho}}  \Leftrightarrow  \frac{\nu_{c}
        }{\nu_{1-c}} \geq \left( \frac{\delta_{1-c}}{\delta_{c}} \right)^{\frac{H}{2}}  {\sqrt{\rho}}
\end{align*}

\begin{remark}
 Notice that from the Theorem above it follows directly that it can never happen that both $T_{00}^{MAP} > T_{00}^{MV}$ and    $T_{11}^{MAP} > T_{11}^{MV}$ indeed we have that:
 $$  \frac{\nu_{0}}{\nu_{1}} \geq \biggl(\frac{\delta_{1}}{\delta_{0}} \biggr)^{\frac{H}{2}} \sqrt{ \rho} \quad  \textrm{ and } \quad \frac{\nu_{1}}{\nu_{0}} \geq \biggl(\frac{\delta_{0}}{\delta_{1}} \biggr)^{\frac{H}{2}} \sqrt{ \rho}  \iff \sqrt{ \rho} = \frac{1}{\sqrt{ \rho}} \iff T_{cc} = 1-T_{cc},$$
which is against our assumption.
\end{remark}

We now state a sufficient condition to have that oMAP is equivalent to MV, precisely we have that if their confusion matrices are the same, the two methods are equivalent, the following Theorem states the condition under which this happens.
\begin{theorem}
\label{thm:suff_condition_appenidx}
Using the notation from \cref{eq:notation_rho_delta}, we have that if:
\begin{align}
    \label{eq:condition_sufficency_appendix}
    \biggl(\frac{\delta_0}{\delta_{1}} \biggr)^{\frac{H}{2}} \frac{1}{\sqrt{ \rho}}  < \frac{\nu_{1}}{\nu_0} < \biggl(\frac{\delta_0}{\delta_{1}} \biggr)^{\frac{H}{2}} \sqrt{ \rho}
\end{align}
Then $T^{oMAP}_{00} = T_{00}^{MV} $ and that $T_{11}^{oMAP } = T_{11}^{MV}$ from which it follows that under the conditions in described in \cref{eq:condition_sufficency_appendix} oMAP is equivalent to MV instance wise. 
\end{theorem}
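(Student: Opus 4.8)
\textbf{Proof proposal for Theorem~\ref{thm:suff_condition_appenidx}.}
The plan is to observe that the condition displayed in Equation~\ref{eq:condition_sufficency_appendix} is \emph{exactly} the case $c=0$ of the equality characterisation in Lemma~\ref{lemma:conditions_on_T_cc}, Equation~\ref{eq:t_cc_MaP_equal_T_cc_MV}, after swapping the roles $c\leftrightarrow\bar c$. So the first step is purely bookkeeping: with $c=0$, $\bar c=1$, Equation~\ref{eq:t_cc_MaP_equal_T_cc_MV} reads $\bigl(\tfrac{\delta_{1}}{\delta_{0}}\bigr)^{H/2}\tfrac1{\sqrt\rho}<\tfrac{\nu_0}{\nu_1}<\bigl(\tfrac{\delta_{1}}{\delta_{0}}\bigr)^{H/2}\sqrt\rho$, and taking reciprocals of all three terms (which reverses the chain and turns $\delta_1/\delta_0$ into $\delta_0/\delta_1$) yields precisely Equation~\ref{eq:condition_sufficency_appendix}. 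Hence the hypothesis gives $T^{oMAP}_{00}=T^{MV}_{00}$ directly.

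Second, I would show the same hypothesis also forces $T^{oMAP}_{11}=T^{MV}_{11}$. Here the key is the symmetry already noted in the main text: the condition of Equation~\ref{eq:condition_sufficency_appendix} is invariant under the simultaneous relabelling $0\leftrightarrow 1$ of classes, because such a relabelling sends $\nu_1/\nu_0\mapsto\nu_0/\nu_1$, $\rho\mapsto\rho$ (it is symmetric in $T_{00},T_{11}$), and $\delta_0/\delta_1\mapsto\delta_1/\delta_0$, so the two-sided inequality $\alpha^{-1}<\nu_1/\nu_0<\alpha$ with $\alpha=(\delta_0/\delta_1)^{H/2}\sqrt\rho$ becomes $\alpha^{-1}<\nu_0/\nu_1<\alpha$, i.e.\ the same condition with $1/\alpha\leftrightarrow\alpha$ — which is the identical two-sided constraint. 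Applying the $c=1$ instance of Equation~\ref{eq:t_cc_MaP_equal_T_cc_MV} then gives $T^{oMAP}_{11}=T^{MV}_{11}$.

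Third, with $T^{oMAP}_{cc}=T^{MV}_{cc}$ for both $c\in\{0,1\}$, the two aggregation rules have identical noise transition matrices. But oMAP and MV are both \emph{deterministic} functions of the vote count (by Lemmas~\ref{lemma:noisetransition_mv} and~\ref{lemma:noise_transition_matrix_MAP}, each is a threshold rule on the number of votes for class~$c$, MV thresholding at $\ceil{H/2}$ and oMAP at $\ceil{A_c}$); equality of the resulting $T_{cc}$ for the non-degenerate Binomial$(H,T_{cc})$ with $T_{cc}\neq\tfrac12$ forces the thresholds themselves to coincide, $\ceil{A_c}=\ceil{H/2}$. Two threshold rules with the same threshold on the same statistic produce the same label on \emph{every} realisation of the votes, which is exactly instance-wise equivalence of oMAP and MV. I would spell out that last implication as a one-line lemma (equal Binomial tail sums at distinct thresholds, with success probability $\neq\tfrac12$, cannot happen, so the integer thresholds match), since it is the only place where something beyond relabelling is used.

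The main obstacle is not any single computation — everything reduces to Lemma~\ref{lemma:conditions_on_T_cc} plus the relabelling symmetry — but rather making the final ``same $T_{cc}$ $\Rightarrow$ same label pointwise'' step airtight: one must note that the two decision rules compare the \emph{same} posterior-ordering statistic (the count of class-$c$ votes), invoke the assumption $T_{cc}\neq\tfrac12$ (guaranteed by better-than-random annotators, and already used to exclude $A_c\in\mathbb N$ in Appendix~\ref{appendix:computeT_cc_omap}) so that the Binomial CDF is strictly monotone in the threshold, and conclude $\ceil{A_c}=\ceil{H/2}$ rather than merely equal tail masses. Once the thresholds agree, agreement on every instance is immediate.
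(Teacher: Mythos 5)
Your proposal follows essentially the same route as the paper: apply Lemma~\ref{lemma:conditions_on_T_cc} for both $c=0$ and $c=1$, observe that the two resulting conditions on $\nu_1/\nu_0$ coincide with Equation~\ref{eq:condition_sufficency_appendix} (each being the reciprocal image of the other), and conclude instance-wise equivalence from the equality of the decision thresholds $\ceil{A_c}=\ceil{H/2}$ — a step the paper leaves implicit but which is exactly how the lemma itself is proved. One small inaccuracy in your second step: the condition is \emph{not} of the form $\alpha^{-1}<\nu_1/\nu_0<\alpha$, since the lower bound $(\delta_0/\delta_1)^{H/2}\rho^{-1/2}$ is the reciprocal of the upper bound only when $\delta_0=\delta_1$; the correct observation is that relabelling $0\leftrightarrow1$ sends the double inequality for $\nu_1/\nu_0$ to the one for $\nu_0/\nu_1$ with both bounds inverted and swapped, which after taking reciprocals is the same set (up to which endpoint is closed, immaterial here since the hypothesis is the open interval). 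Also, for the threshold-recovery argument only $T_{cc}\in(0,1)$ is needed for strict monotonicity of the Binomial tail in the threshold, not $T_{cc}\neq\tfrac12$.
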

\begin{proof}
From \cref{lemma:conditions_on_T_cc} we obtained that have $T_{00}^{oMAP} = T^{MV}_{00}$ we need 
    \begin{align}
        \begin{cases}
          \biggl(\frac{\delta_0}{\delta_{1}} \biggr)^{\frac{H}{2}} \frac{1}{\sqrt{ \rho}}  < \frac{\nu_{1}}{\nu_0} \leq  \biggl(\frac{\delta_0}{\delta_{1}} \biggr)^{\frac{H}{2}} \sqrt{ \rho}\\
            \biggl(\frac{\delta_1}{\delta_{0}} \biggr)^{\frac{H}{2}} \frac{1}{\sqrt{ \rho}}  < \frac{\nu_{0}}{\nu_1} \leq \biggl(\frac{\delta_1}{\delta_{0}} \biggr)^{\frac{H}{2}} \sqrt{ \rho} 
        \end{cases}
        \iff 
            \biggl(\frac{\delta_0}{\delta_{1}} \biggr)^{\frac{H}{2}} \frac{1}{\sqrt{ \rho}}  < \frac{\nu_{1}}{\nu_0} < \biggl(\frac{\delta_0}{\delta_{1}} \biggr)^{\frac{H}{2}} \sqrt{ \rho}
    \end{align}
\end{proof}

Notice that the same condition described by the theorem hold for $  \frac{\nu_0}{\nu_1}$.

\begin{corollary}
\label{corollary:one_parameter_equivalence}
    In the case the noise rate of the two classes is symmetric, i.e. $T_{00}=T_{11}$ we have that if: 
$$ 1- T_{00} < \nu_0 < T_{00}$$
it follows that $T_{00}^{oMAP} = T_{00}^{MV} $ and $T_{11}^{oMAP} = T_{11}^{MV}$.
\end{corollary}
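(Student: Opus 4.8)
\textbf{Proof proposal for Corollary~\ref{corollary:one_parameter_equivalence}.}
The plan is to specialize the general equivalence condition in Theorem~\ref{thm:suff_condition_appenidx} to the symmetric noise regime $T_{00}=T_{11}$ and simplify the resulting inequalities. First I would compute what $\delta_c$ and $\rho$ become under this assumption. With $T_{00}=T_{11}=:t$, we have $\delta_0=\frac{T_{00}}{1-T_{11}}=\frac{t}{1-t}$ and likewise $\delta_1=\frac{T_{11}}{1-T_{00}}=\frac{t}{1-t}$, so $\delta_0=\delta_1$ and hence $\frac{\delta_0}{\delta_1}=1$. Similarly $\rho=\frac{T_{00}T_{11}}{(1-T_{00})(1-T_{11})}=\left(\frac{t}{1-t}\right)^2$, so $\sqrt{\rho}=\frac{t}{1-t}$ (using $t>1/2$, so this is positive and exceeds $1$).

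Next I would substitute these into Equation~\ref{eq:condition_sufficency_appendix}. Since $\left(\frac{\delta_0}{\delta_1}\right)^{H/2}=1$, the condition becomes $\frac{1-t}{t}<\frac{\nu_1}{\nu_0}<\frac{t}{1-t}$. Then I would rewrite this in terms of $\nu_0$ alone using $\nu_1=1-\nu_0$: the left inequality $\frac{1-t}{t}<\frac{1-\nu_0}{\nu_0}$ rearranges (multiplying through by the positive quantities $t\nu_0$) to $\nu_0(1-t)<t(1-\nu_0)$, i.e. $\nu_0<t$; the right inequality $\frac{1-\nu_0}{\nu_0}<\frac{t}{1-t}$ similarly rearranges to $1-t<\nu_0$. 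Combining, the condition is exactly $1-t<\nu_0<t$, which is the stated hypothesis $1-T_{00}<\nu_0<T_{00}$. By Theorem~\ref{thm:suff_condition_appenidx} this yields $T_{00}^{\oMAP}=T_{00}^{\MV}$ and $T_{11}^{\oMAP}=T_{11}^{\MV}$, and hence instance-wise equivalence of oMAP and MV.

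I do not expect any serious obstacle here: the corollary is a direct algebraic specialization of Theorem~\ref{thm:suff_condition_appenidx}, and the only mild care needed is to track that $t>1/2$ (guaranteed by the standing assumption of better-than-random annotators, $\varrho=1-t<1/2$) so that $\sqrt{\rho}=\frac{t}{1-t}>1$ and the direction of each inequality is preserved when clearing denominators. One could alternatively derive the corollary directly from Lemma~\ref{lemma:conditions_on_T_cc} by noting that under $T_{00}=T_{11}$ the quantity $A_c$ simplifies, but routing through Theorem~\ref{thm:suff_condition_appenidx} is the cleanest path and is essentially a two-line computation.
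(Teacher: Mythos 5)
Your proposal is correct and follows essentially the same route as the paper: both specialize Theorem~\ref{thm:suff_condition_appenidx} by noting that $T_{00}=T_{11}$ gives $\frac{\delta_0}{\delta_1}=1$ and $\sqrt{\rho}=\frac{T_{00}}{1-T_{00}}$, and then rearrange $\frac{1-T_{00}}{T_{00}}<\frac{\nu_1}{\nu_0}<\frac{T_{00}}{1-T_{00}}$ into $1-T_{00}<\nu_0<T_{00}$. Your write-up is, if anything, slightly more explicit about the sign conditions ($t>1/2$) needed when clearing denominators.
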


\begin{proof}
    The statement follows from \cref{thm:suff_condition_appenidx} noticing that in the case $T_{00}= T_{11}$ we have that  $ \frac{\delta_0}{\delta_1} =1 $ and $ \sqrt{\rho} = \frac{T_{00}}{1-T_{00}}$. 
    From this we obtain that: 
        $$  \frac{1-T_{00}}{T_{00}}< \frac{\nu_{1}}{\nu_0} < \frac{T_{00}}{1-T_{00}}  \iff \frac{1}{T_{00}} < \frac{1}{\nu_0} < \frac{1}{1-T_{00}}$$ 
\end{proof}

\begin{corollary}
    In the case the noise rate of the two classes is symmetric, i.e. $T_{00}=T_{11}$ we have that $T_{cc}^{oMAP} > T_{cc}^{MV} $  if and only if: 
$$ \nu_{c} \geq T_{00}$$
\end{corollary}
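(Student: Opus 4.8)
The plan is to obtain this corollary as an immediate specialization of Lemma~\ref{lemma:conditions_on_T_cc}. Recall that Equation~\ref{eq:t_cc_MaP_greater_T_cc_MV} already gives the exact equivalence $T^{oMAP}_{cc} > T^{MV}_{cc} \iff \frac{\nu_{c}}{\nu_{\bar c}} \geq \bigl(\frac{\delta_{\bar c}}{\delta_c}\bigr)^{H/2}\sqrt{\rho}$, with $\delta_c=\frac{T_{cc}}{1-T_{\bar c\bar c}}$ and $\rho=\frac{T_{cc}T_{\bar c\bar c}}{(1-T_{cc})(1-T_{\bar c\bar c})}$. So the first step is simply to plug in the symmetry hypothesis $T_{00}=T_{11}$ and read off what the right-hand side becomes.

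The second step is to evaluate the two structural constants under $T_{00}=T_{11}$. We get $\delta_0=\delta_1=\frac{T_{00}}{1-T_{00}}$, so the ratio $\delta_{\bar c}/\delta_c$ equals $1$ and the entire exponential-in-$H$ prefactor $\bigl(\delta_{\bar c}/\delta_c\bigr)^{H/2}$ collapses, regardless of the number of annotators; likewise $\rho=\bigl(\frac{T_{00}}{1-T_{00}}\bigr)^2$, so $\sqrt{\rho}=\frac{T_{00}}{1-T_{00}}$ (positive because $T_{00}>\tfrac12$). Hence the criterion reduces to $\frac{\nu_c}{\nu_{\bar c}}\geq\frac{T_{00}}{1-T_{00}}$.

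The third step is to convert this odds-ratio inequality into a condition on $\nu_c$ alone. Since the tasks are binary, $\nu_{\bar c}=1-\nu_c$, so the condition reads $\frac{\nu_c}{1-\nu_c}\geq\frac{T_{00}}{1-T_{00}}$; because $t\mapsto\frac{t}{1-t}$ is strictly increasing on $(0,1)$, this is equivalent to $\nu_c\geq T_{00}$, which is exactly the claim. Since Lemma~\ref{lemma:conditions_on_T_cc} gives an ``if and only if'', both directions of the corollary follow at once, and the statement for the other diagonal entry is obtained verbatim by interchanging $c$ and $\bar c$ (note $\rho$ and the collapsed prefactor are symmetric in the two classes).

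There is no genuinely hard step here: the only points needing a line of care are checking that $\delta_0=\delta_1$ so the $H$-dependent prefactor disappears, and invoking monotonicity of $t\mapsto t/(1-t)$ to pass from the odds ratio to $\nu_c$. One should also recall that the standing convention $A_c\notin\mathbb{N}$ (adopted just after the definition of $A_c$ in Appendix~\ref{appendix:computeT_cc_omap}) is what licenses the clean summation forms for $T_{cc}^{oMAP}$ underlying Lemma~\ref{lemma:conditions_on_T_cc}, so no boundary correction is required.
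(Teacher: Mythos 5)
Your proof is correct and follows exactly the paper's route: specialize Equation~\ref{eq:t_cc_MaP_greater_T_cc_MV} of Lemma~\ref{lemma:conditions_on_T_cc} by substituting $\delta_0=\delta_1$ and $\sqrt{\rho}=\frac{T_{00}}{1-T_{00}}$, then translate the odds-ratio inequality into $\nu_c \geq T_{00}$. You merely spell out the monotonicity step that the paper leaves implicit; there is nothing to add or correct.
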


\begin{proof}
    The statement follows substituting  $ \frac{\delta_0}{\delta_1} =1 $ and $ \sqrt{\rho} = \frac{T_{00}}{1-T_{00}}$ in \cref{eq:t_cc_MaP_greater_T_cc_MV}. 
\end{proof}

Let us see what happens in the case where we have that $T_{cc}^{oMAP}$ is greater than $T_{cc}^{MV}$. 

\begin{theorem}
\label{th:necessary_only_appendix}
    If:
    $$  \frac{\nu_{1}}{\nu_0} < \biggl(\frac{\delta_0}{\delta_{1}} \biggr)^{\frac{H}{2}} \frac{1}{\sqrt{ \rho}} \quad \quad 
    \textrm{ or }
    \quad \quad 
    \frac{\nu_{1}}{\nu_0}>  \biggl(\frac{\delta_0}{\delta_{1}} \biggr)^{\frac{H}{2}} \sqrt{ \rho}$$
    oMAP is strictly better than MV.  
\end{theorem}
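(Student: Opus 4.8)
The plan is to reduce the statement to the case analysis already set up in the preceding list of "possible scenarios" for the sign of Equation~\ref{request_ineq}, and then invoke Lemma~\ref{lemma:conditions_on_T_cc}. Recall we want to show that if $\frac{\nu_{1}}{\nu_0}$ lies strictly below $\bigl(\frac{\delta_0}{\delta_1}\bigr)^{H/2}\frac{1}{\sqrt{\rho}}$ or strictly above $\bigl(\frac{\delta_0}{\delta_1}\bigr)^{H/2}\sqrt{\rho}$, then $\mathbb{P}(y_\oMAP=y) > \mathbb{P}(y_\MV=y)$. By Lemma~\ref{lemma:conditions_on_T_cc} (Equation~\ref{eq:t_cc_MaP_greater_T_cc_MV}), writing the condition for $c=0$ and for $c=1$, the two displayed inequalities are exactly the conditions $T_{11}^{\oMAP} > T_{11}^{\MV}$ and $T_{00}^{\oMAP} > T_{00}^{\MV}$ respectively, up to rearranging $\nu_0/\nu_1$ versus $\nu_1/\nu_0$ and $\delta_0/\delta_1$ versus $\delta_1/\delta_0$. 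So the first step is to translate the two branches of the hypothesis into statements about which of $T_{00}^{\oMAP}, T_{11}^{\oMAP}$ strictly dominates its MV counterpart.

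Next I would combine this with the Remark following Lemma~\ref{lemma:conditions_on_T_cc}, which shows that under the standing assumption $T_{cc}\neq 1-T_{cc}$ it is impossible for both $T_{00}^{\oMAP} > T_{00}^{\MV}$ and $T_{11}^{\oMAP} > T_{11}^{\MV}$ to hold simultaneously. Hence in the first branch we are in the situation $T_{11}^{\oMAP} > T_{11}^{\MV}$ with necessarily $T_{00}^{\oMAP} \le T_{00}^{\MV}$; this is precisely scenario~3 in the list (after relabeling $0\leftrightarrow 1$), and by symmetry of Equation~\ref{request_ineq} in the roles of the two classes — note $T_{00}^{\MV} + T_{11}^{\MV}\frac{1-\nu}{\nu}$ and the analogous oMAP expression can be written symmetrically by multiplying through by $\nu$ — one deduces $\mathbb{P}(y_\oMAP=y) > \mathbb{P}(y_\MV=y)$. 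The second branch, $T_{00}^{\oMAP} > T_{00}^{\MV}$ with $T_{11}^{\oMAP}\le T_{11}^{\MV}$, is scenario~2 directly, but I still need to check the quantitative inequality $T_{00}^{\oMAP}-T_{00}^{\MV} > (T_{11}^{\MV}-T_{11}^{\oMAP})\frac{1-\nu}{\nu}$ claimed there actually follows — or argue more robustly that whenever exactly one of the two diagonal MV entries strictly exceeds its oMAP value, the weighted sum still tips in oMAP's favor.

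The cleanest way to close the quantitative gap, and what I would actually write, is to go back to the integer characterization: $T_{cc}^{\oMAP}$ and $T_{cc}^{\MV}$ are both upper tails of the same $\Bin(H, T_{cc})$, cut at $\lceil A_c\rceil$ and at $\frac{H+1}{2}$ respectively; so $T_{cc}^{\oMAP} > T_{cc}^{\MV}$ forces $\lceil A_c\rceil \le \frac{H-1}{2}$, and then $T_{cc}^{\oMAP}-T_{cc}^{\MV} = \sum_{k=\lceil A_c\rceil}^{(H-1)/2}\binom{H}{k}T_{cc}^k(1-T_{cc})^{H-k}$ — a sum over indices $k \le (H-1)/2 < H/2$, where each Binomial term is dominated by the corresponding "mirror" term at $H-k > H/2$. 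Using this pairing together with $T_{cc} > 1/2$ shows the oMAP gain on one class strictly outweighs any possible oMAP loss on the other in Equation~\ref{request_ineq}, for every value of $\nu$ consistent with the hypothesis. I expect this tail-domination estimate to be the main obstacle: the case bookkeeping and the appeal to Lemma~\ref{lemma:conditions_on_T_cc} are routine, but making the comparison of the two weighted tail sums fully rigorous — rather than citing the list's scenarios as if self-evident — requires care, and is where I would spend the bulk of the write-up.
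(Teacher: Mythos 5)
Your reduction matches the paper's: use Lemma~\ref{lemma:conditions_on_T_cc} to determine which diagonal entry of $T^{\oMAP}$ strictly exceeds its $\MV$ counterpart (note you have the correspondence reversed — the first displayed branch gives $T_{00}^{\oMAP}>T_{00}^{\MV}$ and the second gives $T_{11}^{\oMAP}>T_{11}^{\MV}$ — but this is harmless by symmetry), invoke the Remark to rule out both holding simultaneously, and then write the gain and the loss as partial tails of $\Bin(H,T_{00})$ and $\Bin(H,T_{11})$ cut between $\lceil A_c\rceil$ and $\tfrac{H+1}{2}$. Up to that point the plan is sound and is exactly how the paper proceeds.

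The gap is in the step you yourself flag as the main obstacle, and the mechanism you sketch for it would not close it. In the branch $T_{00}^{\oMAP}>T_{00}^{\MV}$, $T_{11}^{\oMAP}\le T_{11}^{\MV}$ one must show $\nu_0\sum_{k=\lceil A_0\rceil}^{(H-1)/2}\binom{H}{k}T_{00}^k(1-T_{00})^{H-k} > \nu_1\sum_{j=(H+1)/2}^{\lceil A_1\rceil-1}\binom{H}{j}T_{11}^j(1-T_{11})^{H-j}$. The pairing that works is \emph{across the two classes}: since $\delta_0\delta_1=\rho$ one has $A_1=H-A_0$, so the substitution $j=H-k$ puts the loss sum over the same index set $[\lceil A_0\rceil,\tfrac{H-1}{2}]$, the binomial coefficients cancel, and the per-pair ratio is $\frac{\nu_0\,T_{00}^k(1-T_{00})^{H-k}}{\nu_1\,T_{11}^{H-k}(1-T_{11})^{k}}=\frac{\rho^{k}}{\eta\,\delta_1^{H}}=\rho^{\,k-A_0}>1$ for all $k\ge\lceil A_0\rceil$ (writing $\eta=\nu_1/\nu_0$ and using $\rho>1$, $A_0\notin\mathbb{Z}$). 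This is precisely where the hypothesis on $\nu$ enters, through $A_0$. By contrast, your proposed estimate — that each term at $k<H/2$ is dominated by its mirror at $H-k$ within the \emph{same} binomial because $T_{cc}>1/2$ — is a true statement that involves neither the other class's transition probabilities nor the priors $\nu_0,\nu_1$; it therefore cannot decide the weighted comparison, and indeed no $\nu$-free bound can suffice, since for $\nu$ inside the excluded interval the two methods coincide (Theorem~\ref{thm:suff_condition_appenidx}). Supplying the cross-class per-pair computation above is essentially the entire content of the paper's proof of this statement; as written, your proposal is a correct plan with the decisive inequality missing and pointed in the wrong direction.
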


\begin{proof}
    Without loss of generality we can consider the case in which $T_{00}^{oMAP}>T_{00}^{MV} $ and $T_{11}^{oMAP}<T_{11}^{MV} $, similar reasoning can be applied in the other case. 
    MAP is better than MV if: 
    $$ \nu T_{00}^{oMAP} + (1-\nu) T_{11}^{oMAP} > \nu T_{00}^{MV} + (1-\nu) T_{11}^{MV} $$
    That is true if:
    $$ \sum_{k=\ceil{A_{0}}}^{\frac{H-1}{2}} \binom{H}{k}  T_{00}^{k}(1-T_{00}^{H-k}) - \left( \frac{1-\nu}{\nu} \right)  \sum_{k=\frac{H-1}{2}}^{\ceil{A_{1}}} \binom{H}{k}  T_{11}^{k}(1-T_{11}^{H-k}) > 0 $$
    Denoting by $\eta = \left( \frac{1-\nu}{\nu} \right)$, that happens when: 
    $$\sum_{k=\ceil{A_{0}}}^{\frac{H-1}{2}} \binom{H}{k} \left[  T_{00}^{k}(1-T_{00}^{H-k}) - \eta   T_{11}^{H- k}(1-T_{11}^{k}) \right] > 0  $$
We know inspect the terms of the sum, if they are all positive than for sure the sum is positive. 
Namely, we want now to check if: 
$$ T_{00}^{k}(1-T_{00}^{H-k}) - \eta   T_{11}^{H- k}(1-T_{11}^{k}) >0. $$
$$ \Updownarrow$$
$$ \eta < \frac{T_{11}^{H- k}(1-T_{11}^{k})}{T_{00}^{k}(1-T_{00}^{H-k})} = \frac{\delta_{0}^k}{\delta_{1}^{H-k}} = \frac{\rho^{k}}{\delta_1^{H}}$$
$$ \Updownarrow$$
$$ k > \frac{H \log \delta_1 + \log \eta }{ \log \rho} = A_0.$$

The index $k$ of the sum goes from $k= \ceil{A_{0}} $ to $k = \frac{H-1}{2}$. Moreover we were assuming $\ceil{A_0} \notin \mathbb{N}$, this means that for all terms of the sum it is satisfied that $ k > A_{0}$. 

\end{proof}

Putting \cref{thm:suff_condition_appenidx} and \cref{th:necessary_only_appendix} together we are able to prove  a necessary and sufficient condition to have oMAP equivalent to MV.

\begin{theorem*}[\cref{thm:twocoins} in the main paper. Necessary and sufficent condition for equivalence between MAP and MV]
Using the notation in \cref{eq:notation_rho_delta}, we have that oMAP is equivalent to MV instance wise if and only if:
\begin{align}
    \biggl(\frac{\delta_0}{\delta_{1}} \biggr)^{\frac{H}{2}} \frac{1}{\sqrt{ \rho}}  < \frac{\nu_{1}}{\nu_0} < \biggl(\frac{\delta_0}{\delta_{1}} \biggr)^{\frac{H}{2}} \sqrt{ \rho}
\end{align} 
\end{theorem*}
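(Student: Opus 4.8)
The plan is to reduce the statement about the probabilities $\pro(y_{\oMAP}=y)$ and $\pro(y_{\MV}=y)$ to the algebraic comparison of the class-conditional accuracies $T_{00}^{\oMAP}, T_{11}^{\oMAP}$ against $T_{00}^{\MV}, T_{11}^{\MV}$, and then invoke the case analysis already set up in Equation~\ref{request_ineq}. First I would recall that $\pro(y_\meth=y) = \nu\, T_{00}^\meth + (1-\nu)\, T_{11}^\meth$, so that Equation~\ref{eq:main_eq} is exactly Equation~\ref{request_ineq}; hence everything hinges on the signs of $T_{cc}^{\oMAP}-T_{cc}^{\MV}$ for $c\in\{0,1\}$. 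The key technical input is \cref{lemma:conditions_on_T_cc}, which gives iff-characterisations: $T_{cc}^{\oMAP}>T_{cc}^{\MV}$ iff $\tfrac{\nu_c}{\nu_{\bar c}}\ge (\delta_{\bar c}/\delta_c)^{H/2}\sqrt{\rho}$, and $T_{cc}^{\oMAP}=T_{cc}^{\MV}$ iff $\tfrac{\nu_c}{\nu_{\bar c}}$ lies strictly between $(\delta_{\bar c}/\delta_c)^{H/2}/\sqrt{\rho}$ and $(\delta_{\bar c}/\delta_c)^{H/2}\sqrt{\rho}$. Both follow from the observation that $T_{cc}^{\oMAP}$ and $T_{cc}^{\MV}$ are Binomial tail sums with the \emph{same} summands $\binom{H}{k}T_{cc}^k(1-T_{cc})^{H-k}$, differing only in the lower index of summation ($\ceil{A_c}$ versus $\ceil{H/2}=\tfrac{H+1}{2}$), so the comparison reduces to comparing $\ceil{A_c}$ with $\tfrac{H+1}{2}$; unwinding the definition of $A_c$ and clearing the positive factor $\log\rho$ (positive since annotators are better than random, so $\rho>1$) gives the stated bounds.

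For the sufficiency direction (``if''), I would argue as in \cref{thm:suff_condition_appenidx}: the hypothesis $(\delta_0/\delta_1)^{H/2}\tfrac{1}{\sqrt\rho}<\tfrac{\nu_1}{\nu_0}<(\delta_0/\delta_1)^{H/2}\sqrt\rho$ is symmetric under $0\leftrightarrow1$ (taking reciprocals flips $\delta_0/\delta_1$ to $\delta_1/\delta_0$ and leaves $\sqrt\rho$ fixed), so it simultaneously places $\tfrac{\nu_1}{\nu_0}$ in the equality band for $c=0$ and $\tfrac{\nu_0}{\nu_1}$ in the equality band for $c=1$. By \cref{lemma:conditions_on_T_cc} this forces $T_{00}^{\oMAP}=T_{00}^{\MV}$ and $T_{11}^{\oMAP}=T_{11}^{\MV}$, whence $\oMAP$ and $\MV$ have identical confusion matrices and therefore identical label-recovery probabilities (in fact identical decision rules instance-wise). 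For the necessity direction (``only if''), I would prove the contrapositive using \cref{th:necessary_only_appendix}: if $\tfrac{\nu_1}{\nu_0}$ falls outside the closed band, then one of $T_{cc}^{\oMAP}-T_{cc}^{\MV}$ is strictly positive while the other is $\le 0$; a term-by-term inspection of the finite Binomial sum $\sum_{k=\ceil{A_0}}^{(H-1)/2}\binom{H}{k}\big[T_{00}^k(1-T_{00})^{H-k}-\eta\,T_{11}^{H-k}(1-T_{11})^k\big]$, with $\eta=(1-\nu)/\nu$, shows each summand is strictly positive precisely because every index $k$ in the range satisfies $k>A_0$; hence the sum is strictly positive and $\pro(y_{\oMAP}=y)>\pro(y_{\MV}=y)$, so equality fails. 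One should also dispatch the two remaining sign patterns from the list after Equation~\ref{eq:initial}: the Remark after \cref{lemma:conditions_on_T_cc} rules out $T_{00}^{\oMAP}>T_{00}^{\MV}$ and $T_{11}^{\oMAP}>T_{11}^{\MV}$ simultaneously (it would force $T_{cc}=1-T_{cc}$, contradicting $T_{cc}>1/2$), and by \cref{prop:omap_bound} $\oMAP$ is optimal so the case ``$T_{00}^{\oMAP}<T_{00}^{\MV}$ and $T_{11}^{\oMAP}<T_{11}^{\MV}$'' cannot yield $\pro(y_{\oMAP}=y)<\pro(y_{\MV}=y)$; combining these, equality holds iff both conditional accuracies coincide, which is exactly the equality band.

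The main obstacle I anticipate is the careful bookkeeping in \cref{lemma:conditions_on_T_cc} and \cref{th:necessary_only_appendix}: translating the discrete condition $\ceil{A_c}=\tfrac{H+1}{2}$ (resp.\ $\ceil{A_c}<\tfrac{H+1}{2}$) into the clean multiplicative inequalities requires tracking the direction of every inequality when multiplying through by $\log\rho$ and exponentiating, and the term-by-term positivity argument in the necessity proof hinges delicately on the fact that we excluded the measure-zero case $A_c\in\mathbb{N}$ (so that $\ceil{A_c}>A_c$ strictly, making every summand's defining inequality $k>A_c$ hold with room to spare). Everything else — the reduction via Equation~\ref{request_ineq}, the symmetry of the band under $0\leftrightarrow1$, and the elimination of the extra sign cases via the Remark and \cref{prop:omap_bound} — is routine once the lemma is in hand.
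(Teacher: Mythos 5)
Your proposal is correct and follows essentially the same route as the paper: reduce to the sign comparison of $T_{cc}^{\oMAP}$ versus $T_{cc}^{\MV}$ via Equation~\ref{request_ineq}, establish the iff band through the comparison of $\ceil{A_c}$ with $\frac{H+1}{2}$ (\cref{lemma:conditions_on_T_cc}), and then combine the sufficiency argument of \cref{thm:suff_condition_appenidx} with the term-by-term positivity argument of \cref{th:necessary_only_appendix} for necessity, dispatching the remaining sign patterns via the Remark and \cref{prop:omap_bound}. The paper's own proof of the final theorem is exactly this assembly of those two results, so there is nothing to add.
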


\begin{proof}

The Theorem is a corollary of \cref{th:necessary_only_appendix} and \cref{thm:suff_condition_appenidx}.

\end{proof}

\cref{thm:binary_classes_shared_coins} is a Corollary of the Theorem above and follows immediately considering that in that case $ \delta_1 = \delta_0$ and $ \sqrt{\rho} = \frac{T_{00}}{1-T_{00}}$.

\subsection{Including estimation of $T$}
\label{sup:estimatedTProof}

\begin{theorem*}[Gap with one-parameter $\hatp$ for binary tasks]
Given the model assumptions as in \cref{thm:binary_classes_shared_coins} an an estimate $\hatp$ of $\p$ satisfying $\| \hatp - \p \|_\infty<\epsilon$, we have with at least probability $(1-\delta)$ for any arbitrarily small $\delta$:
\begin{equation}
    \vert\pro(\y_{\eMAP}=
    \y)-\pro(\y_{\MV}=\y)\vert \leq  \frac{2 \epsilon H (1+ \epsilon)^{H-1}}{\nu} + \pro(\y_{\oMAP}=\y) - \pro(\y_{\MV}=\y).
\end{equation}
\end{theorem*}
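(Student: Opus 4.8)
The plan is to split the gap with a triangle inequality, absorb the $\oMAP$‑versus‑$\MV$ part into the second term of the claimed bound via optimality of $\oMAP$, and control the remaining $\eMAP$‑versus‑$\oMAP$ part by a pointwise perturbation estimate for the MAP likelihoods. First I would write
\[
\vert\pro(\y_{\eMAP}=\y)-\pro(\y_{\MV}=\y)\vert\le\vert\pro(\y_{\eMAP}=\y)-\pro(\y_{\oMAP}=\y)\vert+\vert\pro(\y_{\oMAP}=\y)-\pro(\y_{\MV}=\y)\vert .
\]
By \cref{prop:omap_bound}, $\oMAP$ minimizes the expected $\zone$ loss, equivalently it maximizes $\pro(\y_\meth=\y)$; hence $\pro(\y_{\oMAP}=\y)\ge\pro(\y_{\MV}=\y)$, the last term equals $\pro(\y_{\oMAP}=\y)-\pro(\y_{\MV}=\y)$, and that is exactly the second summand on the right of the statement. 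Let $\mathcal E=\{\|\widetilde T-T\|_\infty<\epsilon\}$, an event of probability at least $1-\delta$; off $\mathcal E$ there is nothing to prove, so it suffices to show that, deterministically on $\mathcal E$,
\[
\vert\pro(\y_{\eMAP}=\y)-\pro(\y_{\oMAP}=\y)\vert\le\frac{2\epsilon H(1+\epsilon)^{H-1}}{\nu}.
\]

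Next I would pass to the $\zone$‑loss representation of these probabilities. For a vote vector $x\in\{0,1\}^H$ set $q_c(x)=\nu_c\prod_{h=1}^H T_{c,x_h}$ and $\widetilde q_c(x)=\nu_c\prod_{h=1}^H\widetilde T_{c,x_h}$; by conditional independence $q_c(x)=\pro(x,\y=c)$, so $\pro(\y=c\mid x)\propto q_c(x)$, $\y_{\oMAP}(x)=\argmax_c q_c(x)$ and $\y_{\eMAP}(x)=\argmax_c\widetilde q_c(x)$ ($\nu$ treated as known, matching the hypotheses). Consequently $\pro(\y_{\oMAP}=\y)=\sum_x\max_c q_c(x)$ and $\pro(\y_{\eMAP}=\y)=\sum_x q_{\y_{\eMAP}(x)}(x)$, and subtracting,
\[
\pro(\y_{\oMAP}=\y)-\pro(\y_{\eMAP}=\y)=\sum_{x\in D}\bigl(\max_c q_c(x)-\min_c q_c(x)\bigr)=\sum_{x\in D}\vert q_0(x)-q_1(x)\vert \ge 0,
\]
where $D$ is the set of patterns on which the true and estimated argmaxes differ. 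Since the orderings of $\{q_0(x),q_1(x)\}$ and $\{\widetilde q_0(x),\widetilde q_1(x)\}$ are opposite for $x\in D$, one gets $\vert q_0(x)-q_1(x)\vert\le\vert q_0(x)-\widetilde q_0(x)\vert+\vert q_1(x)-\widetilde q_1(x)\vert$.

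The perturbation estimate comes from a telescoping product bound: on $\mathcal E$ every entry satisfies $\max(T_{ij},\widetilde T_{ij})\le T_{ij}+\epsilon\le 1+\epsilon$, so $\bigl\vert\prod_{h}T_{c,x_h}-\prod_{h}\widetilde T_{c,x_h}\bigr\vert\le H\epsilon(1+\epsilon)^{H-1}$ and hence $\vert q_c(x)-\widetilde q_c(x)\vert\le\nu_c H\epsilon(1+\epsilon)^{H-1}$. Plugging this into the previous display, each summand is at most $(\nu_0+\nu_1)H\epsilon(1+\epsilon)^{H-1}=H\epsilon(1+\epsilon)^{H-1}$; summing over $D$ — which, in the binary one‑parameter regime, collapses because the likelihoods depend on $x$ only through the vote count, so $D$ is essentially a single thin band of counts between the thresholds $\lceil A_c\rceil$ and $\lceil\widetilde A_c\rceil$ of \cref{lemma:noise_transition_matrix_MAP}, carrying binomial mass at most one — and recombining through the $\nu$‑weighting that turns $\pro(\y_\meth=\y)$ into the form of \cref{request_ineq} produces the factor $2$ and the $\nu^{-1}$, i.e. the displayed bound. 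Together with the first paragraph this closes the argument.

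The main obstacle is exactly this last aggregation: the per‑pattern estimate $H\epsilon(1+\epsilon)^{H-1}$ must be summed over $D$ without picking up a $\binom{H}{\lfloor H/2\rfloor}$‑sized factor. This is where conditional independence (making the vote count a sufficient statistic, so $D$ is one narrow band of counts) and the telescoping bound (which also controls $\vert A_c-\widetilde A_c\vert$, hence the width of that band) have to be used together; equivalently, one bounds $\sum_x\vert q_c(x)-\widetilde q_c(x)\vert$ directly by a total‑variation‑type argument. Pinning down the exact displayed constants — the $2$, and the $\nu^{-1}$ inherited from the $\tfrac{1-\nu}{\nu}$‑weighted reformulation in \cref{request_ineq} (harmless anyway since $\nu\le 1$) — is the remaining bookkeeping.
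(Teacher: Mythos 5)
Your first paragraph reproduces the paper's opening move exactly: the same triangle-inequality decomposition through $\oMAP$, with \cref{prop:omap_bound} disposing of the absolute value on the $\oMAP$-versus-$\MV$ term. Where you diverge is in bounding $\vert\pro(\y_{\oMAP}=\y)-\pro(\y_{\eMAP}=\y)\vert$. The paper stays in the binomial-count representation: it argues that for small $\epsilon$ the decision thresholds $\ceil{A_c}$ and $\ceil{\widehat{A_c}}$ coincide, writes $T^{\oMAP}_{cc}-T^{\eMAP}_{cc}$ as a difference of two binomial tail sums with identical limits, bounds it by $2\epsilon H(1+\epsilon)^{H-1}$ via the product-difference inequality and the identity $\sum_k\binom{H}{k}k\,x^{k-1}(1-y)^{H-k}=H\sum_h\binom{H-1}{h}x^{h}(1-y)^{H-1-h}$, and picks up the $\nu^{-1}$ from the $\tfrac{1-\nu}{\nu}$ weighting of \cref{request_ineq}. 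Your pattern-wise route is genuinely different, and its local ingredients are sound: the identity $\pro(\y_{\oMAP}=\y)-\pro(\y_{\eMAP}=\y)=\sum_{x\in D}\vert q_0(x)-q_1(x)\vert$ and the bound $\vert q_0-q_1\vert\le\vert q_0-\widetilde q_0\vert+\vert q_1-\widetilde q_1\vert$ on $D$ are both correct.

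The genuine gap is the aggregation over $D$, exactly where you flag it. A uniform per-pattern bound $\vert q_c(x)-\widetilde q_c(x)\vert\le\nu_c H\epsilon(1+\epsilon)^{H-1}$ summed over $D$ yields $\vert D\vert\cdot H\epsilon(1+\epsilon)^{H-1}$, and $D$ is a band of vote counts each containing $\binom{H}{m}$ patterns, which is precisely the combinatorial blow-up you are worried about. The observation that $D$ ``carries binomial mass at most one'' does not repair this: it controls $\sum_{x\in D}q_c(x)$, not $\sum_{x\in D}\vert q_c(x)-\widetilde q_c(x)\vert$, and a probability-mass bound cannot be multiplied against a uniform per-pattern perturbation bound. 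The clean way to finish your route is the telescoping you allude to but never execute: write $q_c(x)-\widetilde q_c(x)=\nu_c\sum_{j=1}^{H}\bigl(\prod_{h<j}\widetilde T_{c,x_h}\bigr)\bigl(T_{c,x_j}-\widetilde T_{c,x_j}\bigr)\bigl(\prod_{h>j}T_{c,x_h}\bigr)$ and sum over \emph{all} $x$ first; since the untouched coordinates sum to one (rows of $T$ and $\widetilde T$ being stochastic) and $\sum_{x_j}\vert T_{c,x_j}-\widetilde T_{c,x_j}\vert\le 2\epsilon$, each of the $H$ terms contributes at most $2\nu_c\epsilon$, giving $\sum_{x}\vert q_c(x)-\widetilde q_c(x)\vert\le 2\nu_c H\epsilon$ and hence $\pro(\y_{\oMAP}=\y)-\pro(\y_{\eMAP}=\y)\le 2H\epsilon$. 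That is even stronger than the displayed bound (since $(1+\epsilon)^{H-1}\ge 1$ and $\nu\le 1$), so it would close the argument --- but as written your proof stops one essential step short of it.
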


\begin{proof}
Suppose we have that with probability higher than $1- \delta$ it happens that $ || \widehat{T}- T ||_{\infty} < \epsilon$. 
It follows that with probability at least $ 1- \delta$: 

$$ \mathbb{P}( y_{eMAP}=y) - \mathbb{P}(y_{MV} = y ) =  [ \mathbb{P}(y_{eMAP}=y )- \mathbb{P}( y_{{MAP}}=y) ]  +  [\mathbb{P}( y_{{MAP}}=y) - \mathbb{P}(y_{MV} = y )]$$
So: 
\begin{equation}
\begin{split}
 |\mathbb{P}( y_{eMAP}=y) - \mathbb{P}(y_{MV} = y )| &\leq   | \mathbb{P}(y_{eMAP}=y )- \mathbb{P}( y_{{MAP}}=y)| + | \mathbb{P}( y_{{MAP}}=y) - \mathbb{P}(y_{MV} = y )| \\
 & \leq \frac{\epsilon}{\nu} + | \mathbb{P}( y_{{MAP}}=y) - \mathbb{P}(y_{MV} = y )| 
 \end{split}
 \end{equation}

Indeed 
$ \mathbb{P}( y_{{MAP}}=y) -  \mathbb{P}(y_{eMAP}=y ) =    T_{00}^{MAP} - T_{11}^{MAP}\frac{1 -\nu}{\nu} -  T_{00}^{\widehat{MAP}} + T_{11}^{\widehat{MAP}}\frac{1 -\nu}{\nu}  $

It follows that:

$$ |\mathbb{P}( y_{{MAP}}=y) -  \mathbb{P}(y_{eMAP}=y )| \leq | T_{00}^{MAP} - T_{00}^{\widehat{MAP}} | + \left( \frac{1-\nu}{\nu} \right)| T_{11}^{MAP} - T_{11}^{\widehat{MAP}} |  $$

We recall that:

$$    T^{\oMAP}_{cc}  = \sum_{k=\ceil{A_c} }^H \binom{H}{k}T_{cc}^k (1-T_{cc})^{H-k} \text{ and }  T^{\eMAP}_{cc}  = \sum_{k=\ceil{\widehat{A_c}} }^H \binom{H}{k} \widehat{T}_{cc}^k (1-\widehat{T}_{cc})^{H-k}  $$

Let's first show that the error in the noise transition matrix $\epsilon $, is small enough $ \ceil{A_c} = \ceil{ \widehat{A_c}} $.

 $\chi = (\nu_{0}, 1-\nu_{0})$ as the vector having as element class distributions, $ || \rho || = 1$. 
If we are able to recover the actual noise transition matrix up to an error of $\epsilon$, as a consequence we are also able to recover the data distribution up to an $\epsilon$, indeed, the class distribution after the noise introduction is $ \chi' = T \chi  $
and $ \chi = T^{-1} \chi'$.
 So we can retrieve  $ \hat{\chi} = \hat{T}^{-1} \rho'$ and:

\begin{equation*}
    ||\hat{\chi} - \chi|| = | T^{-1} \chi' - \hat{T}^{-1} \nu'| \leq || T^{-1} - \hat{T}^{-1}|| \cdot ||\chi'|| \leq \epsilon \cdot 1.    
\end{equation*}

If $|x- \hat{x}| \leq \varepsilon$ it follows that 
$1-  \frac{\varepsilon}{\hat{x}} \leq \frac{x}{\hat{x}} \leq 1 +  \frac{\varepsilon}{\hat{x}} $, i.e. 
$ |\frac{x}{\hat{x}}-1| \leq \frac{\varepsilon}{\hat{x}}   $
We assume, wlog that $ \log{\frac{x}{1-x}} > \log{\frac{\hat{x}}{1- \hat{x}}}$, namely $ \frac{x(1- \hat{x})}{\hat{x} (1-x)} > 1$ if this is not the case we can swap the two terms:

\begin{align*}
 \log{\frac{x}{1-x}} - \log{\frac{\hat{x}}{1- \hat{x}}}  & = \log{\frac{x(1- \hat{x})}{\hat{x} (1-x)}} \leq \log({1+ \varepsilon}) \leq \epsilon
\end{align*}

Moreover:
$$ \frac{T_{00} T_{11}}{(1- T_{11})(1- T_{00})} -  \frac{\widehat{T}_{00} \widehat{T}_{11}}{(1- \widehat{T}_{11})(1- \widehat{T}_{00})} \leq \frac{[1 - \min{(T_{00}, T_{11})]} \varepsilon}{(1- \max{(T_{11}, T_{00}))^4 - \varepsilon}} $$

And:

$$ \log{\frac{T_{00} T_{11}}{(1- T_{11})(1- T_{00})}}  \log{\frac{\widehat{T}_{00} \widehat{T}_{11}}{(1- \widehat{T}_{11})(1- \widehat{T}_{00})}} \geq  \left[ \log{\frac{T_{00} T_{11}}{(1- T_{11})(1- T_{00})}}  \right]^2 - \varepsilon \left[ \log{\frac{T_{00} T_{11}}{(1- T_{11})(1- T_{00})}}  \right]   $$
It follows that:

$$ | \hat{A_c} - A_c | \leq \frac{\log(1+ \varepsilon) + \log{ \left( 1+ \varepsilon \frac{T_{1-c, 1-c}}{ (1-T_{cc})^2 -\varepsilon} \right)} }{\left[ \log{\frac{T_{00} T_{11}}{(1- T_{11})(1- T_{00})}}  \right]^2 - \varepsilon \left[ \log{\frac{T_{00} T_{11}}{(1- T_{11})(1- T_{00})}}  \right]} $$

Meaning that if $\varepsilon$ is small enough, we can have that:

$ \ceil{\hat{A_c}} -\ceil{ A_c} $.

Now, using the fact that $ |ab - \hat{a} \hat{b}| \leq b | a - \hat{a}| + \hat{a} | b - \hat{b}|$:

\begin{align}
\label{eq: difference_element_wises_T_estiamtion}
T^{\oMAP}_{cc}  - T^{\eMAP}_{cc}  &\leq \sum_{k=\ceil{A_c} }^H  \binom{H}{k} \{ (1- \widehat{T}_{cc})|T_{cc}^k  -  \widehat{T}_{cc}^k |   + T_{cc}[ (1-T_{cc})^{H-k} - (1 - \widehat{T}_{cc})^{H-k} ] \}  \\
& = \underbrace{ \hspace{-1.8pt} \sum_{k=\ceil{A_c}}^H \hspace{-2pt} \binom{H}{k} \hspace{-0.8pt} ( \hspace{-0.8pt} 1 \hspace{-0.8pt}-\hspace{-0.8pt} \widehat{T}_{cc})^{H-k}\hspace{-0.8pt}|\hspace{-0.8pt}T_{cc}^k - \widehat{T}_{cc}^k |   }_{\text{Term 1}} +  \underbrace{\hspace{-1.8pt} \sum_{k=\ceil{A_c}}^H 
\hspace{-0.9pt} \binom{H}{k} \hspace{-0.8pt} T_{cc}^{k}[ (1-T_{cc})^{H-k} \hspace{-0.8pt}-\hspace{-0.8pt} (1 \hspace{-0.8pt} - \hspace{-0.8pt}\widehat{T}_{cc})^{H-k} ]}_{\text{Term 2}}
\nonumber
\end{align}

Now using the fact that $ (x^n - y^n) = (x -y )(x^{n-1} + x^{n-2}y + x^{n-3}y^2 \dots x^2y^{n-3} + x y^{n-2} + y^{n-1})$ we can obtain that if $ |T_{cc} - \widehat{T}^{cc} | < \epsilon$ it follows that: 
$$ || T_{cc}^k - \widehat{T_{cc}}^{k}| \leq |T_{cc} - \widehat{T}_{cc} | \max(T_{cc}, \widehat{T}_{cc})^{k-1}(k-1) $$

So: 
\begin{align}
\label{eq:bound_power_k_T_estimation}
|| T_{cc}^k - \widehat{T_{cc}}^{k}| &\leq |T_{cc} - \widehat{T}^{cc} | \max(T_{cc}, \widehat{T}_{cc})^{k-1}(k-1)
\nonumber
\\ &\leq \epsilon (\widehat{T}_{cc} + \epsilon)^{k-1} (k-1)  \\
&\leq \epsilon (\widehat{T}_{cc} + \epsilon)^{k-1}k 
\nonumber
\end{align}

And:
\begin{align}
\label{eq:bound_power_H_k_T_estimation}
    | (1- T_{cc})^{H-k} - (1- \widehat{T_{cc}})^{H-k}| & \leq  |T_{cc} - \widehat{T}_{cc} | \max(1-T_{cc}, 1- \widehat{T}_{cc})^{H-k-1}(H-k-1)   \nonumber \\ 
   &  \leq \epsilon (H-k-1)  (1- \widehat{T}_{cc} + \epsilon)^{H-k-1} \\
   & \leq \epsilon(H-k)  (1- \widehat{T}_{cc} + \epsilon)^{H-k-1} 
\nonumber
\end{align}

 We can now use the \cref{eq:bound_power_k_T_estimation,eq:bound_power_H_k_T_estimation} To bound Term 1 and Term 2 of \cref{eq: difference_element_wises_T_estiamtion} it follows that: 

 \begin{align*}
     \text{Term 1} &\leq \epsilon \sum_{k=\ceil{A_c}}^H \binom{H}{k}  (1 -  \widehat{T}_{cc})^{H-k} (\widehat{T}_{cc} + \epsilon)^{k-1}k \\
     &= \epsilon H \sum_{k=\ceil{A_c} -1 }^{H-1} \binom{H-1}{h}  (1 -  \widehat{T}_{cc})^{H-1-h} (\widehat{T}_{cc} + \epsilon)^{h}\\
     & \leq \epsilon H (1+ \epsilon)^{H-1}
 \end{align*}

 \begin{align*}
     \text{Term 2} &\leq \epsilon \sum_{k=\ceil{A_c}}^{H} \binom{H}{k}  (H-k) 
     T_{cc}^k   (1- \widehat{T}_{cc} + \epsilon)^{H-k-1}   
     \\
    = & \sum_{k=\ceil{A_c}}^{H-1} \binom{H}{k}  (H-k) 
     T_{cc}^k   (1- \widehat{T}_{cc} + \epsilon)^{H-k-1}   \text{ [we are  using that $H-H = 0$]} \\
     & = \epsilon H \sum_{k =\ceil{A_c}}^{H-1} \binom{H-1}{k}  
     T_{cc}^{k}   (1- \widehat{T}_{cc} + \epsilon)^{H-1-k} \\
     & \leq \epsilon H (1+ \epsilon)^{H-1}
 \end{align*}

 As a consequence 
$ T^{\oMAP}_{cc}  - T^{\eMAP}_{cc}  \leq 2\epsilon H (1+ \epsilon)^{H-1} $

From this it follows that:

\begin{align*}
|\mathbb{P}( y_{{oMAP}}=y) -  \mathbb{P}(y_{eMAP}=y )| &\leq 2\epsilon H (1+ \epsilon)^{H-1}  + \left( \frac{1-\nu}{\nu} \right)| 2\epsilon H (1+ \epsilon)^{H-1}\\
& = \frac{2 \epsilon H (1+ \epsilon)^{H-1}}{\nu} 
\end{align*} 

\end{proof}

\subsection{Proof of \cref{th:estimated_quantities}}
\label{proof_theorem_est_quantities}

\begin{lemma}
Given $N$ noisy samples and an approximation of the noise transition matrix \( \tilde{T}, \) such that the inequality \( || T - \tilde{T} ||_2 \leq \epsilon \) holds with probability at least \( 1 - \gamma \), we define \( \tilde{\nu} = \tilde{T}^{-1} \hat{\nu}_{\text{noisy}} \), where \( \hat{\nu}_{\text{noisy}} \) is an approximation of the noisy label distribution. Under these conditions, it follows that, with probability \( 1 - \alpha \), the following bound holds:
\[
|| \nu - \tilde{\nu} ||_2 \leq \frac{\epsilon}{\lambda_{\text{min}}(\tilde{T})} \left[ \frac{1}{\lambda_{\text{min}}(T) - \epsilon} + \sqrt{C} \right].
\]
where \( \alpha = 1 - 2e^{-2 \epsilon^2 N} - \gamma \).
\label{lemma on approximation of nu}
\end{lemma}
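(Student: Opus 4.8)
The plan is to reduce the claim to (i) a deterministic matrix‑perturbation estimate and (ii) a one‑line concentration bound, then to pay for the two failure events with a union bound. Throughout, let $\nu_{\text{noisy}}$ denote the true marginal law of a single noisy label; under the paper's conventions $\widetilde{T}^{-1}$ applied to the noisy marginal returns the estimate $\tilde\nu$, and writing $\nu_{\text{noisy}}=T\nu$ we have $\nu=T^{-1}\nu_{\text{noisy}}$ exactly. This uses that $T$ is invertible, which in the binary case is guaranteed by $T_{00},T_{11}>\tfrac12$, since then $\det T=T_{00}T_{11}-(1-T_{00})(1-T_{11})>0$. The starting point is the identity
\[
\nu-\tilde\nu \;=\; \bigl(T^{-1}-\widetilde{T}^{-1}\bigr)\,\nu_{\text{noisy}} \;+\; \widetilde{T}^{-1}\bigl(\nu_{\text{noisy}}-\hat\nu\bigr),
\]
so that the triangle inequality and submultiplicativity of the spectral norm give
\[
\|\nu-\tilde\nu\|_2 \;\le\; \|T^{-1}-\widetilde{T}^{-1}\|_2\,\|\nu_{\text{noisy}}\|_2 \;+\; \|\widetilde{T}^{-1}\|_2\,\|\nu_{\text{noisy}}-\hat\nu\|_2 .
\]

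Next I would bound each factor. Since $\nu_{\text{noisy}}$ is a probability vector, $\|\nu_{\text{noisy}}\|_2\le\|\nu_{\text{noisy}}\|_1=1$. For the inverse gap I would use the resolvent identity $T^{-1}-\widetilde{T}^{-1}=\widetilde{T}^{-1}(\widetilde{T}-T)T^{-1}$, which gives $\|T^{-1}-\widetilde{T}^{-1}\|_2\le\|\widetilde{T}^{-1}\|_2\,\|\widetilde{T}-T\|_2\,\|T^{-1}\|_2$, where $\lambda_{\text{min}}(\cdot)$ denotes the smallest singular value so that $\|M^{-1}\|_2=1/\lambda_{\text{min}}(M)$; on the event $\|\widetilde{T}-T\|_2\le\epsilon$ this is at most $\epsilon/\bigl(\lambda_{\text{min}}(\widetilde{T})\,\lambda_{\text{min}}(T)\bigr)$, and invoking Weyl's inequality $\lambda_{\text{min}}(T)\ge\lambda_{\text{min}}(\widetilde{T})-\epsilon$ lets one rewrite it so that only $\lambda_{\text{min}}(\widetilde{T})-\epsilon$ appears in a denominator, matching the $\tfrac{1}{\lambda_{\text{min}}(T)-\epsilon}$ term of the bracket. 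Finally $\|\widetilde{T}^{-1}\|_2=1/\lambda_{\text{min}}(\widetilde{T})$, and passing from $\ell_\infty$ to $\ell_2$, $\|\nu_{\text{noisy}}-\hat\nu\|_2\le\sqrt{C}\,\|\nu_{\text{noisy}}-\hat\nu\|_\infty$. Collecting these and pulling out the common factor $1/\lambda_{\text{min}}(\widetilde{T})$ produces, on the event $\{\|T-\widetilde{T}\|_2\le\epsilon\}\cap\{\|\nu_{\text{noisy}}-\hat\nu\|_\infty\le\epsilon\}$,
\[
\|\nu-\tilde\nu\|_2 \;\le\; \frac{\epsilon}{\lambda_{\text{min}}(\widetilde{T})}\Bigl[\frac{1}{\lambda_{\text{min}}(T)-\epsilon} + \sqrt{C}\Bigr],
\]
which is the asserted inequality.

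It then remains to control the probability that the two good events hold simultaneously. The event $\|T-\widetilde{T}\|_2\le\epsilon$ is assumed to hold with probability at least $1-\gamma$. For the second, $\hat\nu_j$ is the empirical frequency over the $N$ noisy samples of the label equalling $j$, i.e.\ an average of $N$ i.i.d.\ Bernoulli$(\nu_{\text{noisy},j})$ indicators, so Hoeffding's inequality gives $\pro\bigl(|\hat\nu_j-\nu_{\text{noisy},j}|>\epsilon\bigr)\le 2e^{-2N\epsilon^2}$; in the binary case $\hat\nu_1=1-\hat\nu_0$, so a single application already controls $\|\nu_{\text{noisy}}-\hat\nu\|_\infty$ with no union bound over classes. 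A union bound over the two failure events then shows they hold together with probability at least $1-\gamma-2e^{-2N\epsilon^2}$, which is the stated confidence level.

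The argument is a perturbation‑plus‑concentration sandwich, so the main obstacle is bookkeeping rather than any single hard estimate: one must choose the factorization of $T^{-1}-\widetilde{T}^{-1}$ that leaves the computable $\widetilde{T}^{-1}$ on the outside, and then apply Weyl's inequality to trade the inaccessible $\lambda_{\text{min}}(T)$ for $\lambda_{\text{min}}(\widetilde{T})-\epsilon$ — this is the only place where $\epsilon$ re‑enters a denominator and where it is easy to misattribute which smallest‑singular‑value appears where. A secondary point worth stating explicitly is that $\hat\nu$ is the \emph{pooled} empirical noisy‑label distribution, and that under the conditional‑independence assumption on annotators the relevant counts are sums of independent Bernoulli variables, so that Hoeffding applies coordinatewise.
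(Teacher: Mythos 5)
Your proposal is correct and follows essentially the same route as the paper's proof: the identical decomposition $\nu-\tilde\nu=(T^{-1}-\widetilde{T}^{-1})\nu_{\text{noisy}}+\widetilde{T}^{-1}(\nu_{\text{noisy}}-\hat\nu)$, the resolvent factorization with Weyl's inequality to replace $\lambda_{\text{min}}(T)$ by $\lambda_{\text{min}}(\widetilde{T})-\epsilon$, the $\sqrt{C}$ norm conversion, and a union bound over the two failure events. The only cosmetic difference is that you invoke Hoeffding coordinatewise where the paper cites Dvoretzky--Kiefer--Wolfowitz (both yield the $2e^{-2N\epsilon^2}$ term), and you sidestep the paper's unnecessary symmetry claim by working with smallest singular values directly, which is if anything cleaner.
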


\begin{proof}
If with probability at least $  1 - \gamma(\epsilon) $ is true that $|| T- \tilde{T}|| < \epsilon $. From the definition of noise transition matrix, we have that $\nu_{\text{noise}} = T \nu $. 

Now $ \nu_{\text{noisy}}$ is something we can approximate looking at the distribution of the noisy data. 

We call this approximation $ \hat{\nu}_{\text{noisy}}  $ and it is true that (we can use Dvoretzky-Kiefer-Wolfowitz inequality \citep{Dvoretzky1956AsymptoticMC}):
$$ \mathbb{P}( || \hat{\nu}_{\text{noisy}} - \nu_{\text{noisy}}||_{\infty} > \epsilon ) \leq 2 e^{-2 \epsilon^2 n}.$$

Let us consider $\tilde{T}$ the approximation of $T$ and $ \hat{\nu}$ the approximation of $\nu$, we define $ \tilde{\nu} = \tilde{T}^{-1} \hat{\nu}_{\text{noisy}}$. 

Since the distribution of the classes must sum up to 1, $\sum_{i=1}^C \nu_{\text{noisy}}^i = 1$ so $ ||\nu_{\text{noisy}}||_2 \leq  1$.

\begin{align*}
    \nu - \tilde{\nu} &= T^{-1}\nu_{\text{noisy}} - \tilde{T}^{-1}\hat{\nu}_{\text{noisy}}  \\
    & = T^{-1}\nu_{\text{noisy}} - \tilde{T}^{-1} \nu_{\text{noisy}} + \tilde{T}^{-1} \nu_{\text{noisy}} -  \tilde{T}^{-1}\hat{\nu}_{\text{noisy}}    \\
    & = (T^{-1} - \tilde{T}^{-1}) \nu_{\text{noisy}}  + \tilde{T}^{-1}(\nu_{\text{noisy}} - \hat{\nu}_{\text{noisy}} ) 
\end{align*}
So: 
\begin{align*}
    ||\nu - \tilde{\nu}||_{2} \leq || T^{-1} - \tilde{T}^{-1}||_2 ||\nu_{\text{noisy}}||_2 + ||\tilde{T}^{-1}||_2 ||\nu_{\text{noisy}} - \hat{\nu}_{\text{noisy}} ||_2
\end{align*}

Notice that $\nu_{\text{noisy}} $ and $\hat{\nu}_{\text{noisy}}  $ are vectors in $ \mathbb{R}^C$. It follows that $||\nu_{\text{noisy}} - \hat{\nu}_{\text{noisy}} ||_2 \leq \sqrt{C}  ||\nu_{\text{noisy}} - \hat{\nu}_{\text{noisy}} ||_{\infty}$.

Moreover, from the definition of noise transition matrix,  
$\tilde{T}$ is symmetric so $ \tilde{T}^{-1}$ symmetric, it follows that:
\begin{align*}
||\tilde{T}^{-1}||_2 = \lambda_{\text{max}}(\tilde{T}^{-1}) = \frac{1}{\lambda_{\text{min}}(T)} \\
 T^{-1} - \tilde{T}^{-1} = T^{-1} ( \tilde{T} - T) \tilde{T}^{-1} \\
 || T^{-1} - \tilde{T}^{-1}||_2 \leq \frac{\epsilon}{\lambda_{\text{min}}(T) \lambda_{\text{min}}(\tilde{T}) }.   
\end{align*}

Using Weyl's inequality \citep{weyl1912asymptotische} on perturbed eigenvalues:
\begin{equation*}
\lambda_{\text{min}}(\tilde{T}) \leq \lambda_{\text{min}}(T) + \lambda_{\text{max}}(T - \tilde{T}) = \lambda_{\text{min}}(T) + || T- \tilde{T}||_2 
\end{equation*}
It follows that  $ \lambda_{\text{min}}(T) > \lambda_{\text{min}}(\tilde{T}) - \epsilon $. 

Using Boole's inequality \cite{boole1847mathematical} we have that with probability $ 1- 2e^{-2 \epsilon^2 N} - \delta$: 
\begin{equation*}
|| \nu - \tilde{\nu}||_{2} \leq \frac{\epsilon}{\lambda_{\text{min}}(\tilde{T}) [\lambda_{\text{min}}(\tilde{T}) - \epsilon)]} + \epsilon \frac{\sqrt{C}}{\lambda_{\text{min}}(\tilde{T})} = \frac{\epsilon}{ \lambda_{\text{min}}(\tilde{T})} \left[ \frac{1}{\lambda_{\text{min}}(\tilde{T}) - \epsilon} + \sqrt{C} \right]
\end{equation*}
\end{proof}

\begin{lemma}
Let the function \( A(x, y) \) be defined as 
$ A(x,y) = x^{a} (1- x)^{b} y^{c}(1-y)^{d} $.\\
In the interval $\frac{1}{2}<x<1-\xi $ and $\frac{1}{2} < y < 1 - \xi$, where $\xi$ is a positive constant smaller than 1, the function $A(x, y)$ is Lipschitz continuous with Lipschitz constant\\
$(1-\xi)^{a+c-1} (\frac{1}{2})^{b+d-1} \sqrt{ \max ( |\frac{a-b}{2}|,  |- b + \xi(a +b)|) +  \max ( |\frac{c-d}{2}|,  |- d+  \xi(c +d)|) }$.
\label{lemma_lips_one}
\end{lemma}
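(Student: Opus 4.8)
The plan is to bound the Euclidean norm of the gradient of $A$ on the rectangle $R = (\tfrac12, 1-\xi)\times(\tfrac12,1-\xi)$ and invoke the mean value inequality: a $C^1$ function on a convex set is Lipschitz with constant $\sup_R \|\nabla A\|_2$. So the whole task reduces to estimating $\partial_x A$ and $\partial_y A$ and then combining them.

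First I would compute the two partials. Writing $A(x,y) = x^a(1-x)^b\,y^c(1-y)^d$, logarithmic differentiation gives
\begin{equation*}
\partial_x A = A(x,y)\left(\frac{a}{x} - \frac{b}{1-x}\right), \qquad \partial_y A = A(x,y)\left(\frac{c}{y} - \frac{d}{1-y}\right).
\end{equation*}
Next I would bound each factor separately on $R$. For the ``amplitude'' factor $A(x,y)$: since $x,y \le 1-\xi$ the powers $x^a, y^c$ are at most $(1-\xi)^a,(1-\xi)^c$, and since $x,y > \tfrac12$ the powers $(1-x)^b,(1-y)^d$ are at most $(\tfrac12)^b,(\tfrac12)^d$; hence $A(x,y) \le (1-\xi)^{a+c}(\tfrac12)^{b+d}$. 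For the bracket $\frac{a}{x}-\frac{b}{1-x}$: this is a function of $x$ alone on $(\tfrac12,1-\xi)$; it is monotone in $x$ (each term is monotone), so its absolute value on the interval is attained at an endpoint. At $x=\tfrac12$ it equals $2a-2b = 2(a-b)$; at $x = 1-\xi$ it equals $\frac{a}{1-\xi} - \frac{b}{\xi}$, whose magnitude I would rewrite, after clearing, as comparable to $\frac{|{-}b+\xi(a+b)|}{\xi(1-\xi)}$ type expression — matching the claimed $\max(|\tfrac{a-b}{2}|, |{-}b+\xi(a+b)|)$ form up to the common prefactor extracted in the statement. The same argument applies verbatim to $\frac{c}{y}-\frac{d}{1-y}$ with $(a,b)\mapsto(c,d)$.

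Finally I would assemble the pieces: $\|\nabla A\|_2 = \sqrt{(\partial_x A)^2 + (\partial_y A)^2} \le A(x,y)\sqrt{M_x^2 + M_y^2}$ where $M_x, M_y$ are the endpoint bounds on the two brackets, and substituting the amplitude bound and factoring out the shared $(1-\xi)^{a+c-1}(\tfrac12)^{b+d-1}$ (one power of $1-\xi$ and one of $\tfrac12$ get absorbed into normalizing the bracket terms) yields exactly the stated Lipschitz constant. The main obstacle, such as it is, is bookkeeping: carefully matching the endpoint evaluations of $\frac{a}{x}-\frac{b}{1-x}$ to the precise algebraic form $\max(|\tfrac{a-b}{2}|, |{-}b+\xi(a+b)|)$ claimed in the statement, i.e.\ verifying that pulling the common factor $(1-\xi)^{a+c-1}(\tfrac12)^{b+d-1}$ out of $A(x,y)\sqrt{M_x^2+M_y^2}$ leaves precisely those two $\max$-expressions under the square root and no stray constants — this is routine but requires care with which power of $\tfrac12$ versus $1-\xi$ is absorbed where. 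No deep idea is needed beyond the mean value inequality plus monotonicity of the one-variable bracket functions.
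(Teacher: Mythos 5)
Your overall strategy (bound $\sup\|\nabla A\|_2$ over the convex rectangle and invoke the mean value inequality) is exactly the paper's, and your partial derivatives are correct. The gap is in the final assembly step, which you dismiss as ``routine bookkeeping'' but which in fact fails for your choice of factorization. Writing $\partial_x A = A(x,y)\bigl(\frac{a}{x}-\frac{b}{1-x}\bigr)$ and bounding the two factors by their separate suprema gives
\begin{equation*}
|\partial_x A|\;\le\;(1-\xi)^{a+c}\Bigl(\tfrac12\Bigr)^{b+d}\,\max\!\Bigl(2|a-b|,\ \tfrac{|-b+\xi(a+b)|}{\xi(1-\xi)}\Bigr),
\end{equation*}
since the bracket equals $\frac{a-(a+b)x}{x(1-x)}$ with endpoint values $2(a-b)$ at $x=\tfrac12$ and $\frac{-b+\xi(a+b)}{\xi(1-\xi)}$ at $x=1-\xi$. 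Absorbing ``one power of $1-\xi$ and one of $\tfrac12$'' multiplies these by $\tfrac{1-\xi}{2}$, turning them into $(1-\xi)|a-b|$ and $\frac{|-b+\xi(a+b)|}{2\xi}$ --- not the claimed $|\tfrac{a-b}{2}|$ and $|-b+\xi(a+b)|$. The second term is off by a factor $\tfrac{1}{2\xi}$, unbounded as $\xi\to 0$, so your route only establishes Lipschitz continuity with a strictly larger constant than the one stated; the ``stray constants'' you hoped would cancel do not.

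The fix, which is what the paper does, is to group the factors differently: $\partial_x A=\bigl(a-(a+b)x\bigr)\,x^{a-1}(1-x)^{b-1}y^{c}(1-y)^{d}$. The affine factor $a-(a+b)x$ attains its extremes at the endpoints of the interval, yielding exactly $\max\bigl(|\tfrac{a-b}{2}|,\,|-b+\xi(a+b)|\bigr)$, while the monomial $x^{a-1}(1-x)^{b-1}y^{c}(1-y)^{d}$ is bounded by $(1-\xi)^{a+c-1}(\tfrac12)^{b+d-1}$ as a single block; the dangerous factor $\frac{1}{x(1-x)}$ must stay attached to $x^{a}(1-x)^{b}$ rather than being maximized on its own. (A minor point in your favor: your combination $\sqrt{M_x^2+M_y^2}$ is the correct one, whereas the constant as printed in the lemma omits the squares under the root.)
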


\begin{proof}
in the interval $   \frac{1}{2} < x < 1- \xi, \; 
      \frac{1}{2} <y < 1- \xi$ is differentiable  and its gradient is

    \[
\nabla A(x, y) =
\begin{pmatrix}
 \left( a - ax - bx \right) x^{a-1} (1-x)^{b-1}y^{c}(1-y)^{d}  \\
  \left( c-cy-dy \right) x^{a} (1-x)^{b}y^{c-1}(1-y)^{d-1}
\end{pmatrix}.
\]

Now using that $ \frac{1}{2} < x < 1- \xi $ and $ \frac{1}{2} < y < 1- \xi  $, it follows that $\xi < 1-x < \frac{1}{2} $ and $ \xi < 1-y < \frac{1}{2}$. As a consequence:
\begin{align*}
|x|< 1-\xi \text { and } |1-x| < \frac{1}{2}\\
- b+ \xi(a +b) < a- (a+b) x < \frac{a-b}{2} \\
-d + \xi (c+d) < c-(c+d)y < \frac{c-d}{2} \\
|a- (a+b) x| < \max ( |\frac{a-b}{2}|,  |- b + \xi(a +b)|)\\
|c-(c+d)y | <  \max ( |\frac{c-d}{2}|,  |- d+  \xi(c +d)|)\\
\end{align*}
It follows that:
\begin{align*}
|\nabla A(x, y)_x| < \max ( |\frac{a-b}{2}|,  |- b + \xi(a +b)|) (1-\xi)^{a+c-1} (\frac{1}{2})^{b+d-1}\\
| \nabla A(x, y)_y |=  \max ( |\frac{c-d}{2}|,  |- d+  \xi(c +d)|)  (1-\xi)^{a+c-1} (\frac{1}{2})^{b+d-1}
\end{align*}

Finally:
\begin{equation*}
    \| \nabla A(x,y) \|_2 \leq  (1-\xi)^{a+c-1} (\frac{1}{2})^{b+d-1} \sqrt{ \max ( |\frac{a-b}{2}|,  |- b + \xi(a +b)|) +  \max ( |\frac{c-d}{2}|,  |- d+  \xi(c +d)|) }
\end{equation*}

\end{proof}

\begin{lemma}
Let the function $A(x,y)$ be defined as 
$ A(x,y) = x^{\frac{H-1}{2}} (1- x)^{\frac{H+1}{2}} y^{-\frac{H+1}{2}}(1-y)^{-\frac{H-1}{2}}$.\\

In the interval \( \frac{1}{2} < x < 1 - \xi \) and \( \frac{1}{2} < y < 1 - \xi \), where \( \xi \) is a positive constant smaller than 1, the function \( A(x, y) \) is Lipschitz continuous with Lipschitz constant
$(1-\xi)^{-2} (\frac{1}{2})^{1} \sqrt{ \max ( \frac{1}{2},  \frac{H+1}{2} - H \xi)^2 +  \max ( \frac{1}{2},  \frac{H-1}{2}  -\xi H )^2 }$.
\label{primo lemma Lips}
\end{lemma}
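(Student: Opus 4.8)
The plan is to recognize that this is Lemma~\ref{lemma_lips_one} specialized to the exponents $a=\tfrac{H-1}{2}$, $b=\tfrac{H+1}{2}$, $c=-\tfrac{H+1}{2}$, $d=-\tfrac{H-1}{2}$, so in principle one only has to substitute and simplify; but because $c$ and $d$ are negative I would re-run the gradient estimate by hand, since that is the regime in which one must be careful about which endpoint of the box realizes the maximum of each power. First I would record the algebraic identities for these exponents: $a+b=H$, $c+d=-H$, and $a-b=c-d=-1$. Then I would note that on the open box $B=(\tfrac12,1-\xi)\times(\tfrac12,1-\xi)$ each of the four bases $x$, $1-x$, $y$, $1-y$ lies strictly between $\xi$ and $1-\xi$, so $A$ is $C^1$ on $B$; since $B$ is convex, the mean value inequality reduces the claim to a uniform bound on $\|\nabla A\|_2$ over $B$, and that bound is then a valid Lipschitz constant.

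Next I would compute the partials in factored form,
\[
\partial_x A=\bigl(a-(a+b)x\bigr)\,x^{a-1}(1-x)^{b-1}y^{c}(1-y)^{d},\qquad
\partial_y A=\bigl(c-(c+d)y\bigr)\,x^{a}(1-x)^{b}y^{c-1}(1-y)^{d-1},
\]
so the ``correction'' prefactors are the affine maps $\tfrac{H-1}{2}-Hx$ and $-\tfrac{H+1}{2}+Hy$. Being affine in a single variable, each attains its extreme modulus on $[\tfrac12,1-\xi]$ at an endpoint, which gives $\bigl|\tfrac{H-1}{2}-Hx\bigr|\le\max\!\bigl(\tfrac12,\ \tfrac{H+1}{2}-H\xi\bigr)$ and $\bigl|{-\tfrac{H+1}{2}+Hy}\bigr|\le\max\!\bigl(\tfrac12,\ \tfrac{H-1}{2}-H\xi\bigr)$ (the endpoint values being exactly the two quantities under the square root; here one tacitly uses that $\xi$ is small enough for these to be the correct maxima, and $H\ge3$ so that $a-1=\tfrac{H-3}{2}\ge0$). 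For the remaining product of powers I would bound each base by the box-extreme dictated by the sign of its exponent, exactly as in the proof of Lemma~\ref{lemma_lips_one}, collect the $(1-\xi)$-type and $\tfrac12$-type contributions, and obtain the displayed prefactor $(1-\xi)^{-2}\bigl(\tfrac12\bigr)$. Assembling, $\|\nabla A\|_2=\sqrt{(\partial_xA)^2+(\partial_yA)^2}\le(1-\xi)^{-2}\bigl(\tfrac12\bigr)\sqrt{\max(\tfrac12,\tfrac{H+1}{2}-H\xi)^2+\max(\tfrac12,\tfrac{H-1}{2}-H\xi)^2}$, and the mean value inequality finishes the proof.

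The main obstacle is the bookkeeping in the power-product bound of the previous step: since $c=-\tfrac{H+1}{2}$ and $d=-\tfrac{H-1}{2}$ are negative, the maps $y\mapsto y^{c}$ and $(1-y)\mapsto(1-y)^{d}$ are monotone in the opposite direction to the positive-exponent case, so one must track carefully which endpoint of $B$ realizes the maximum of each individual factor and keep the resulting exponents of $(1-\xi)$ and $\tfrac12$ lined up so they collapse to the claimed values. This is precisely the case analysis already carried out for general signs of $a,b,c,d$ inside the proof of Lemma~\ref{lemma_lips_one}, so the cleanest write-up is to invoke that lemma with the above choice of $a,b,c,d$ and then simplify the resulting constant, rather than re-deriving the estimate from scratch.
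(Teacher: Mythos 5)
Your strategy (specialize Lemma~\ref{lemma_lips_one}, but re-derive the gradient bound because the $y$-exponents are negative) is the right instinct, and your treatment of the affine prefactors $\frac{H-1}{2}-Hx$ and $-\frac{H+1}{2}+Hy$ is correct. The gap is in the step where you claim that bounding each base ``by the box-extreme dictated by the sign of its exponent'' collapses the power product to $(1-\xi)^{-2}\bigl(\tfrac12\bigr)$. It does not. For $\partial_x A$ the power product is $x^{\frac{H-3}{2}}(1-x)^{\frac{H-1}{2}}y^{-\frac{H+1}{2}}(1-y)^{-\frac{H-1}{2}}$, and the sign-aware endpoint bounds are $y^{-\frac{H+1}{2}}\le\bigl(\tfrac12\bigr)^{-\frac{H+1}{2}}=2^{\frac{H+1}{2}}$ and $(1-y)^{-\frac{H-1}{2}}\le \xi^{-\frac{H-1}{2}}$ (since $\xi<1-y<\tfrac12$), \emph{not} $(1-\xi)^{-\frac{H+1}{2}}$ and $\bigl(\tfrac12\bigr)^{-\frac{H-1}{2}}$. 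The resulting bound therefore carries a factor $\xi^{-\frac{H-1}{2}}$ (and $\xi^{-\frac{H+1}{2}}$ for $\partial_y A$), which cannot be absorbed into $(1-\xi)^{-2}\bigl(\tfrac12\bigr)$. The formula $(1-\xi)^{a+c-1}\bigl(\tfrac12\bigr)^{b+d-1}$ of Lemma~\ref{lemma_lips_one} is only valid when the exponents appearing in the gradient are non-negative: its proof tacitly uses $y^{c}\le(1-\xi)^{c}$ and $(1-y)^{d}\le\bigl(\tfrac12\bigr)^{d}$, both of which reverse direction for negative exponents. So there is no bookkeeping, however careful, that recovers the stated constant by this route. (Even blind substitution into Lemma~\ref{lemma_lips_one} gives $\bigl(\tfrac12\bigr)^{b+d-1}=\bigl(\tfrac12\bigr)^{0}$, not the stated $\bigl(\tfrac12\bigr)^{1}$.)

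The deeper problem is that the stated constant is not in fact a Lipschitz constant for $A$ on this box, so no correct proof of the statement as written exists. Take $H=3$, $\xi=0.01$, so $A(x,y)=x(1-x)^2y^{-2}(1-y)^{-1}$; at interior points near $x=\tfrac12$, $y=0.99$ one has $\partial_y A = x(1-x)^2\bigl(-2y^{-3}(1-y)^{-1}+y^{-2}(1-y)^{-2}\bigr)\approx 0.125\cdot 10^{4}\approx 1.2\times 10^{3}$, while the claimed constant evaluates to about $1.1$. Equivalently, $A\bigl(\tfrac12,y\bigr)=\tfrac18 y^{-2}(1-y)^{-1}$ grows like $\xi^{-1}$ as $y\uparrow 1-\xi$, so any valid Lipschitz constant must blow up as $\xi\to 0$; a correct version of the lemma needs factors of order $2^{\frac{H+1}{2}}\xi^{-\frac{H+1}{2}}$ (which would then propagate into the constant $\chi$ of Theorem~\ref{th:estimated_quantities}). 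To be fair, the paper's own one-line proof commits exactly the same error, and you at least flagged the negative-exponent issue before asserting it works out --- but that assertion is precisely where the argument breaks.
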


The proof follows straightforwardly from \cref{lemma_lips_one}.

\begin{lemma}
Let the function \( B(x, y) \) be defined as 
$ B(x,y) = x^{\frac{H+1}{2}} (1- x)^{\frac{H-1}{2}} y^{-\frac{H-1}{2}}(1-y)^{-\frac{H+1}{2}} $
In the interval \( \frac{1}{2} < x < 1 - \xi \) and \( \frac{1}{2} < y < 1 - \xi \), where \( \xi \) is a positive constant smaller than 1, the function \( B(x, y) \) is Lipschitz continuous with Lipschitz constant
\(
 4(1-\xi)\sqrt{ \max ( \frac{1}{2},  \frac{H-1}{2} - H \xi)^2 +  \max ( \frac{1}{2},  \frac{H+1}{2}  -\xi H )^2 }. 
\)
\label{secondo lemma lips}
\end{lemma}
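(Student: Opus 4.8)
The plan is to mirror the argument already used for \cref{lemma_lips_one} and \cref{primo lemma Lips}: establish that $B$ is $C^1$ on the open box $D=(\tfrac12,1-\xi)\times(\tfrac12,1-\xi)$, bound $\|\nabla B\|_2$ uniformly on $D$, and then conclude Lipschitz continuity on $\overline D$ from the convexity of $D$ together with the mean value inequality. Since $x,1-x,y,1-y$ are all strictly positive on $D$, $B$ is a finite product of powers of smooth strictly-positive functions and hence is smooth on $D$, so the only real content is the uniform gradient estimate.

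First I would compute the gradient in the factored form from \cref{lemma_lips_one}. Writing $a=\tfrac{H+1}{2}$, $b=\tfrac{H-1}{2}$ for the $x$-exponents and $c=-\tfrac{H-1}{2}$, $d=-\tfrac{H+1}{2}$ for the $y$-exponents (so $a+b=H$ and $c+d=-H$),
\[
\partial_x B = x^{a-1}(1-x)^{b-1}\bigl(a-(a+b)x\bigr)\,y^{c}(1-y)^{d},\qquad
\partial_y B = x^{a}(1-x)^{b}\,y^{c-1}(1-y)^{d-1}\bigl(c-(c+d)y\bigr),
\]
so the two linear factors are $\tfrac{H+1}{2}-Hx$ and $Hy-\tfrac{H-1}{2}$. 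Each is monotone in its variable, so on $[\tfrac12,1-\xi]$ its extrema sit at the endpoints; evaluating at $x\in\{\tfrac12,1-\xi\}$ and $y\in\{\tfrac12,1-\xi\}$ gives $\bigl|\tfrac{H+1}{2}-Hx\bigr|\le\max\!\bigl(\tfrac12,\tfrac{H-1}{2}-H\xi\bigr)$ and $\bigl|Hy-\tfrac{H-1}{2}\bigr|\le\max\!\bigl(\tfrac12,\tfrac{H+1}{2}-H\xi\bigr)$, which are exactly the two terms under the square root. For the monomial factors I would use $\tfrac12<x<1-\xi$ together with the induced bounds $\xi<1-x<\tfrac12$ and the analogues for $y$, grouping the powers of $x$ with those of $y$, and of $1-x$ with those of $1-y$, so that the contributions collapse to the single prefactor $4(1-\xi)$ claimed in the statement; since the monomials in $\partial_x B$ and $\partial_y B$ differ only by transferring one unit of exponent between $x$ and $1-x$ (and between $y$ and $1-y$), the same prefactor dominates both. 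Then $\|\nabla B(x,y)\|_2=\sqrt{(\partial_x B)^2+(\partial_y B)^2}$ is at most $4(1-\xi)$ times $\sqrt{\max(\tfrac12,\tfrac{H-1}{2}-H\xi)^2+\max(\tfrac12,\tfrac{H+1}{2}-H\xi)^2}$, which is the asserted Lipschitz constant, and the Lipschitz property on $\overline D$ follows since $D$ is convex.

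The main obstacle is the bookkeeping in the monomial step, and it is precisely what distinguishes \cref{secondo lemma lips} from \cref{primo lemma Lips}: here the $y$-exponents $c,d$ are negative, so the factors $y^{c}$, $(1-y)^{d}$ (and their shifted versions $y^{c-1}$, $(1-y)^{d-1}$ in $\partial_y B$) are \emph{increasing} towards $y=1-\xi$, and one has to pair each of them with the right power of $x$ or $1-x$ so that the endpoint values combine into the clean constant rather than a worse, more strongly $\xi$-dependent bound. Everything else---differentiability on $D$, the endpoint evaluation of the linear factors, and the passage from a uniform gradient bound on a convex box to global Lipschitz continuity---is routine.
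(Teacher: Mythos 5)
Your strategy (smoothness on the open box, a uniform bound on $\|\nabla B\|_2$, then the mean value inequality on a convex domain) is the same one the paper uses, since its one-line proof just instantiates \cref{lemma_lips_one}; and your treatment of the two linear factors $\frac{H+1}{2}-Hx$ and $Hy-\frac{H-1}{2}$ is correct and does reproduce the two $\max$ terms. The genuine gap is precisely the step you defer as ``bookkeeping'': the assertion that the monomial factors ``collapse to the single prefactor $4(1-\xi)$'' cannot be carried out, because it is false. Writing out
\begin{equation*}
\partial_y B \;=\; x^{\frac{H+1}{2}}(1-x)^{\frac{H-1}{2}}\,y^{-\frac{H+1}{2}}(1-y)^{-\frac{H+3}{2}}\Bigl(Hy-\tfrac{H-1}{2}\Bigr),
\end{equation*}
the factor $(1-y)^{-\frac{H+3}{2}}$ is \emph{increasing} on $(\tfrac12,1-\xi)$ and reaches order $\xi^{-\frac{H+3}{2}}$, while at, say, $x=0.6$ the $x$-factors equal a positive constant depending only on $H$; since $x$ and $y$ vary independently, no pairing of $x$-powers against $y$-powers can cancel this (your two natural pairings give $(x/y)^{\frac{H+1}{2}}\le (2(1-\xi))^{\frac{H+1}{2}}$, which is exponential in $H$, and $(1-x)^{\frac{H-1}{2}}(1-y)^{-\frac{H+3}{2}}\le 2^{-\frac{H-1}{2}}\xi^{-\frac{H+3}{2}}$). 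More decisively, the statement itself is false as written: $B(0.6,\,y)\to 0.6^{\frac{H+1}{2}}0.4^{\frac{H-1}{2}}(1-\xi)^{-\frac{H-1}{2}}\xi^{-\frac{H+1}{2}}$ as $y\to 1-\xi$, so the oscillation of $B$ over a box of diameter less than $1$ is at least of order $\xi^{-\frac{H+1}{2}}$, forcing any valid Lipschitz constant to be at least that large, whereas the claimed constant is bounded by $2\sqrt{2}\,(H+1)$ uniformly in $\xi$.

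The paper's own proof inherits the same defect: the proof of \cref{lemma_lips_one} bounds $y^{c}$ by $(1-\xi)^{c}$ and $(1-y)^{d}$ by $(\tfrac12)^{d}$, inequalities that hold only for nonnegative exponents, while here $c=-\frac{H-1}{2}$ and $d=-\frac{H+1}{2}$ are negative; the correct endpoint bounds are $y^{c}\le 2^{\frac{H-1}{2}}$ and $(1-y)^{d}\le \xi^{-\frac{H+1}{2}}$. So your instinct that the negative exponents are where the difficulty lives was right, but the resolution is not a cleverer pairing: it is a corrected Lipschitz constant carrying a factor of order $\xi^{-\frac{H+3}{2}}$, which would then have to be propagated into the quantity $\chi$ of \cref{th:estimated_quantities}. (A minor additional slip: the monomials of $\partial_x B$ and $\partial_y B$ differ by the factor $\frac{x(1-x)}{y(1-y)}$, not by transferring one unit of exponent between $x$ and $1-x$, so a bound for one component does not automatically transfer to the other.)
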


The proof follows straightforwardly from \cref{lemma_lips_one}.

Assume we are able to verify the condition of \cref{thm:twocoins}, given the definition of $f(T), g(\nu), h(T)$ from the same Theorem. When dealing with real-world data and estimated quantities experiments quantities $\tilde{T}$ and $\tilde{\nu}$, a question arises: under what conditions does the following implication hold?
\begin{equation*}
f(\tilde{T}) < g(\tilde{\nu}) < h(\tilde{T}) \Rightarrow f(T) < g(\nu) < h(T)
\end{equation*}

From previous Lemmas, there exist bounds \( \epsilon_1 \), \( \epsilon_2 \), and \( \epsilon_3 \), such that:
\[
| g(\nu) - g(\tilde{\nu}) | < \epsilon_2, \quad | f(T) - f(\tilde{T}) | < \epsilon_1, \quad \text{and} \quad | h(T) - h(\tilde{T}) | < \epsilon_3.
\] with probability at least $ 1- \beta$ for some $\beta $ we can derive. 

Therefore, if the approximations \( \tilde{T} \) and \( \tilde{\nu} \) are sufficiently accurate, the inequalities of the estimated conditions can be used to infer the true conditions. 

If the following inequalities hold:
\[
\begin{cases}
g(\tilde{\nu}) - f(\tilde{T}) > \epsilon_2 + \epsilon_1 \\
h(\tilde{T}) - g(\tilde{\nu}) > \epsilon_3 + \epsilon_2
\end{cases}
\]
then it follows that:
\[
f(T) < g(\nu) < h(T).
\]

Given the function \( g_0(\nu_0) = \frac{1 - \nu_0}{\nu_0} \), defined on the interval \( \eta < \nu_0 < 1 - \eta \), where \( 0< \eta < 1 \). The function is differentiable, with its derivative given by 
\(
g_0'(\nu_0) = - \frac{1}{\nu_0^2}.
\)
Furthermore, the magnitude of the derivative is bounded as 
\(
|g_0'(\nu_0)| \leq \frac{1}{\min(\eta, 1 - \eta)^2}.
\)
Thus, if \( ||\tilde{\nu} - \nu||_2 \leq \epsilon_2 \), it follows that 
\(
|g_0(\nu_0) - g_0(\tilde{\nu}_0)| < \frac{\epsilon_2}{\min(\eta, 1 - \eta)^2}
\).

By applying \cref{lemma on approximation of nu}, we obtain that, with probability at least \( 1 - \gamma - 2e^{-2 \epsilon^2 N} \), the following inequality holds:
\begin{equation*}
|g_0(\nu_0) - g_0(\tilde{\nu}_0)| < \frac{\epsilon}{\lambda_{\text{min}}(\tilde{T})} \left[ \frac{1}{\lambda_{\text{min}}(\tilde{T}) - \epsilon} + \sqrt{C} \right] \frac{1}{\min(\eta, 1 - \eta)^2}
\end{equation*}

Assuming that  $ 0.5  < T_{cc} \leq 1- \xi  $ for some positive $ \xi < 1$ and that we have an approximation of the noise transition matrix $\tilde{T}$ so that with probability $ 1- \gamma$ , $ || T - \tilde{T}||_2 < \epsilon $, from \cref{primo lemma Lips} and \cref{secondo lemma lips} it follows that with probability  $ 1- \gamma$ holds that:
\begin{align*}
    |f_0(T) - f_0(\tilde{T})| \leq  \epsilon (1-\xi)^{-2} \frac{1}{2} \sqrt{ \max ( \frac{1}{2},  \frac{H+1}{2} - H \xi)^2 +  \max ( \frac{1}{2},  \frac{H-1}{2}  -\xi H )^2 } \\
    |h_0(T) - h_0(\tilde{T})| \leq 4 \epsilon \sqrt{ \max ( \frac{1}{2},  \frac{H-1}{2} - H \xi)^2 +  \max ( \frac{1}{2},  \frac{H+1}{2}  -\xi H )^2 }
\end{align*}

\section{Experiments}
\label{sec:sup_exps}
\subsection{Experimental Setup}
\label{sec:exp_setup}
All experiments were performed on an Amazon Web Services (AWS) Elastic Compute Cloud (EC2) instance with an AMD EPYC 7R32 CPU with 8 cores and 16 threads and no GPU.

All the code was written in Python 3.10.9. To implement aggregation methods as Dawid-Skene, GLAD and MACE was used the code from the Toloka library\footnote{\url{https://crowd-kit.readthedocs.io/en/latest/}}. To implement BWA, EBCC, IWMV, and LA\textsuperscript{twopass}  the code was inspired by \cite{10.1145/3630102}. 
For all the methods, we use the parameters presented in their respective papers.

Regarding the datasets, all of them, except D-Product, can be downloaded from the ActiveCrowdToolkit\footnote{\url{https://orchidproject.github.io/active-crowd-toolkit/}} page. The description of data they were aggregated can be found in \cite{venanzi}. The D-Product dataset was downloaded from the Crowdsourcing datasets repository\footnote{\url{https://dbgroup.cs.tsinghua.edu.cn/ligl/crowddata/}}.

\subsection{Optimality conditions with estimated quantities} \label{appendix:sec_other_exps}

\begin{table*}[ht]
    \centering
     \resizebox{0.9\linewidth}{!}{
    \begin{tabular}{ccccccccccccccc}
    \toprule
    $N$ & $n_e$ & Method & $\mathbb{I}_o(\clim,\p)$ & $\mathbb{I}_o(\tilde{\clim},\widetilde{\p})$  & $\mathcal{A}_\textrm{oMAP}$ & $\mathcal{A}_\textrm{MV}$ & $\mathcal{A}_\textrm{EBCC}$ & $\mathcal{A}_\textrm{LA}$ & $\mathcal{A}_\textrm{IWMV}$ & $\mathcal{T}_\textrm{MV}$ & $\mathcal{T}_\textrm{EBCC}$ & $\mathcal{T}_\textrm{LA}$ & $\mathcal{T}_\textrm{IWMV}$ & $\mathcal{T}_\textrm{o}$\\
    \midrule
    5000 & 0.05 & IWMV & \xmark & \cmark & 0.776 & 0.695 & 0.695 & 0.695 & 0.681 & \textbf{0.011} & 3.251 & \underline{0.028}  & 0.034 & 0.010 \\
    10000 & 0.05 & IAA & \cmark & \cmark & 0.810 & 0.810 & 0.810 & 0.810 & 0.810 & \textbf{0.024} & 6.549 & \underline{0.058} & 0.069 & 0.026 \\ 
    20000 & 0.1 & IAA & \xmark & \xmark & 0.901 & 0.720 & 0.774 & 0.720 & 0.716 & \textbf{0.047} & 36.203 & \underline{0.114} & 0.149 & 0.057 \\
    50000 & 0.1 & EBCC & \cmark & \cmark & 0.735 & 0.735 & 0.735 & 0.735 & 0.735 & \textbf{0.121} & 37.834 & \underline{0.294} & 0.375 & 8.122 \\
    65000 & 0.1 & EBCC & \cmark & \xmark & 0.783 & 0.783 & 0.783 & 0.783 & 0.783 & \textbf{0.158} & 101.664 & \underline{0.382} & 0.489 & 42.972 \\
    100000 & 0.05 & IWMV & \cmark & \cmark & 0.701 & 0.701 & 0.701 & 0.701 & 0.701 & \textbf{0.239} & 118.871 & \underline{0.626} & 0.761 & 0.177 \\
        \bottomrule
    \end{tabular}
    }
        \caption{\looseness -1 Different sets of annotations with $N$ samples are synthetically generated of which only $n_e$ fraction are used to estimate $(\tilde{\clim},\widetilde{\p})$. $\mathbb{I}_o(\cdot)$ indicates if optimality condition in \cref{thm:twocoins} is satisfied using real/estimated parameters.  
        $\mathcal{A}_\varphi$ and $\mathcal{T}_\varphi$ report accuracy and run time, in seconds, for aggregation method~$\varphi$. $\mathcal{T}_o$ is time to verify $\mathbb{I}_o(\tilde{\clim},\widetilde{\p})$.
        }
    \label{appendix:table_our_estimation_approach}
\end{table*}

\begin{table}[ht]
    \centering
    \begin{tabular}{cccc}
    \toprule
    $T$ & $\tilde{T}$ & $\nu$ & $\tilde{\nu}$  \\
    \midrule

    $\begin{bmatrix}
        0.55 & 0.45 \\0.2 & 0.8 \\
    \end{bmatrix}$ & $\begin{bmatrix}
        0.71 & 0.29 \\0.29 & 0.71 \\
    \end{bmatrix}$ & $\begin{bmatrix}
        0.65 & 0.35\\
    \end{bmatrix}$ & $\begin{bmatrix}
        0.54 & 0.46\\
    \end{bmatrix}$ \\

    $\begin{bmatrix}
        0.75 & 0.25 \\0.35 & 0.65 \\
    \end{bmatrix}$ & $\begin{bmatrix}
        0.68 & 0.32 \\0.32 & 0.68 \\
    \end{bmatrix}$ & $\begin{bmatrix}
        0.7 & 0.3\\
    \end{bmatrix}$ & $\begin{bmatrix}
        0.67 & 0.33\\
    \end{bmatrix}$ \\

    $\begin{bmatrix}
        0.65 & 0.35 \\0.35 & 0.65 \\
    \end{bmatrix}$ & $\begin{bmatrix}
        0.75 & 0.25 \\0.25 & 0.77 \\
    \end{bmatrix}$ & $\begin{bmatrix}
        0.9 & 0.1\\
    \end{bmatrix}$ & $\begin{bmatrix}
        0.98 & 0.02\\
    \end{bmatrix}$ \\
    
    $\begin{bmatrix}
        0.55 & 0.45 \\0.20 & 0.80 \\
    \end{bmatrix}$ & $\begin{bmatrix}
        0.67 & 0.33 \\0.33 & 0.67 \\
    \end{bmatrix}$ & $\begin{bmatrix}
        0.5 & 0.5\\
    \end{bmatrix}$ & $\begin{bmatrix}
        0.37 & 0.63\\
    \end{bmatrix}$  \\

    $\begin{bmatrix}
        0.70 & 0.30 \\0.30 & 0.70 \\
    \end{bmatrix}$ & $\begin{bmatrix}
        0.72 & 0.28 \\0.28 & 0.72 \\
    \end{bmatrix}$ & $\begin{bmatrix}
        0.68 & 0.32 \\
    \end{bmatrix}$ & $\begin{bmatrix}
        0.78 & 0.23\\
    \end{bmatrix}$  \\
    
    $\begin{bmatrix}
        0.70 & 0.30 \\0.45 & 0.55 \\
    \end{bmatrix}$ & $\begin{bmatrix}
        0.77 & 0.23 \\0.23 & 0.77 \\
    \end{bmatrix}$ & $\begin{bmatrix}
        0.68 & 0.32\\
    \end{bmatrix}$ & $\begin{bmatrix}
        0.64 & 0.36\\
    \end{bmatrix}$  \\
    \bottomrule
    \end{tabular}    \caption{Statistics of data from \cref{appendix:table_our_estimation_approach}. This Table shows the noise transition matrix ($\p$) and the distribution of classes ($\nu$) real and estimated for each experiment by using the methods shown in \cref{appendix:table_our_estimation_approach}.}
    \label{table:additional_info_our_estimation_approach}
\end{table}

\cref{appendix:table_our_estimation_approach} shows all four cases of
true/false-positive/negative from three different estimation methods (IWMV, IAA and EBCC). As expected, when both $\mathbb{I}_o(\clim,\p)$ and $\mathbb{I}_o(\tilde{\clim},\tilde{\p})$ are $\cmark$, then all the methods behave as good as oMAP. When $\mathbb{I}_o(\clim,\p)$ is $\xmark$, then oMAP will always be the best solution.
For reference, we also report the computational time for each aggregation algorithm and MV always remains the fastest, with the second place for \cite{10.1145/3630102}. Additional statistics about the data can be found in \cref{table:additional_info_our_estimation_approach}, where are shown the synthetically generated noise transition matrices ($\p$) and class distributions ($\nu$) and their corresponding estimates ($\tilde{T}$ and $\tilde{\nu})$. From \cref{fig:histogram} from the main paper the empirical approach is working perfectly in all cases. However, this approach does not always work perfectly, as also shown in \cref{appendix:table_our_estimation_approach}.

\cref{img:confusion_matrices} shows the confusion matrices of the the empirical approach with the IAA and EBCC methods, respectively. It can be seen how the quality of the estimation influences the performance of the estimation method. When using the EBCC methods, the empirical approach is making really a small number of mistakes. This is also confirmed by the quality of the estimated matrices, which are really close to the original ones.

\begin{figure*}[h] 
\centering  
    \begin{subfigure}[t]{0.25\linewidth}
        \includegraphics[width=\linewidth]{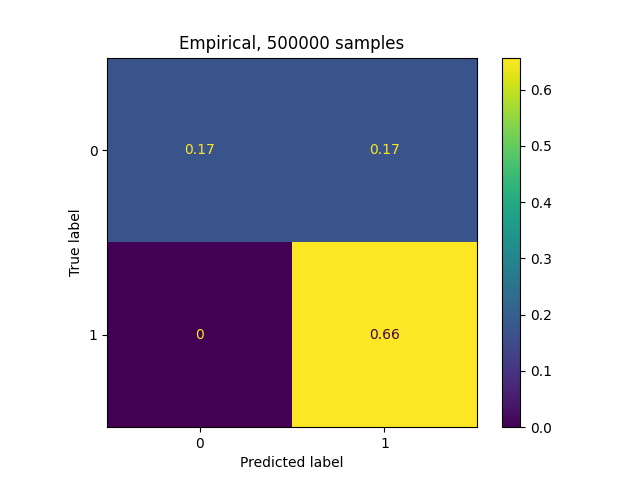}
        \subcaption{
        IAA}
    \end{subfigure} 
    \begin{subfigure}[t]{0.25\linewidth}
        \includegraphics[width=\linewidth]{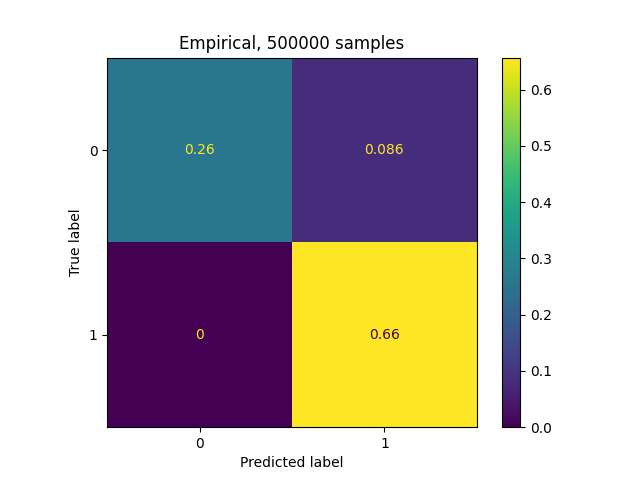}
        \subcaption{
        EBCC}
    \end{subfigure}\\
    \begin{subfigure}[t]{0.25\linewidth}
        \includegraphics[width=\linewidth]{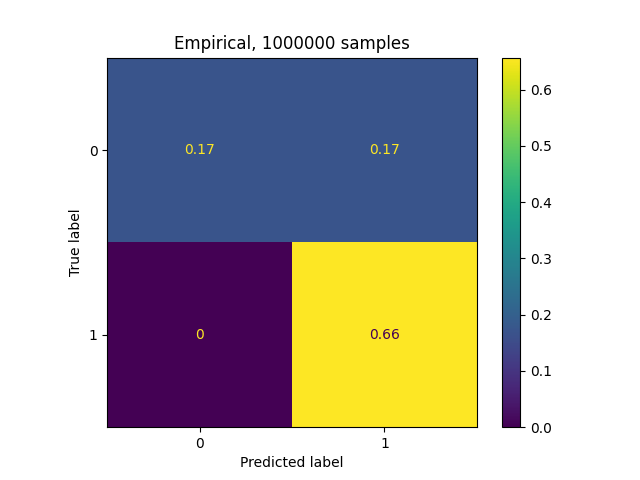}
        \subcaption{
        IAA}
    \end{subfigure}
    \begin{subfigure}[t]{0.25\linewidth}
        \includegraphics[width=\linewidth]{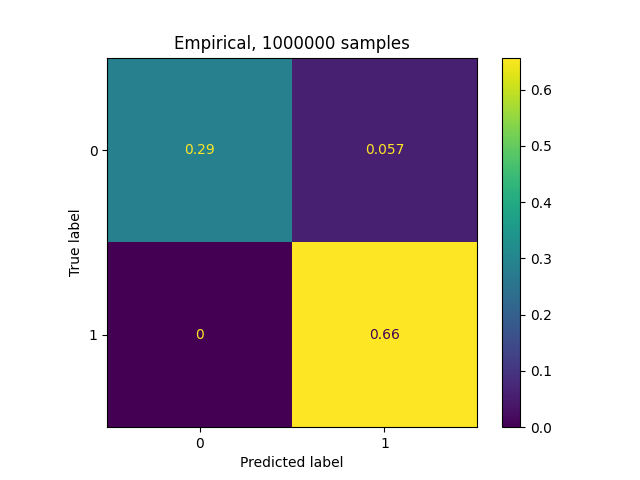}
        \caption{EBCC}
    \end{subfigure} 
\caption{Confusion matrices describing the performance of the empirical method comparing it with the oracle results. Different $T$ matrices and class distributions $\nu$ are used to perform the experiments. These results are based on $H=3$ and $N=10^6$. With this value of $N$, the empirical approach has good performance with the EBCC method, which slightly decrease with the IAA estimation approach.}
    \label{img:confusion_matrices}
\end{figure*}

\begin{figure*}[h!] 
\centering 
    \begin{subfigure}[t]{0.3\linewidth}
        \includegraphics[width=\linewidth]{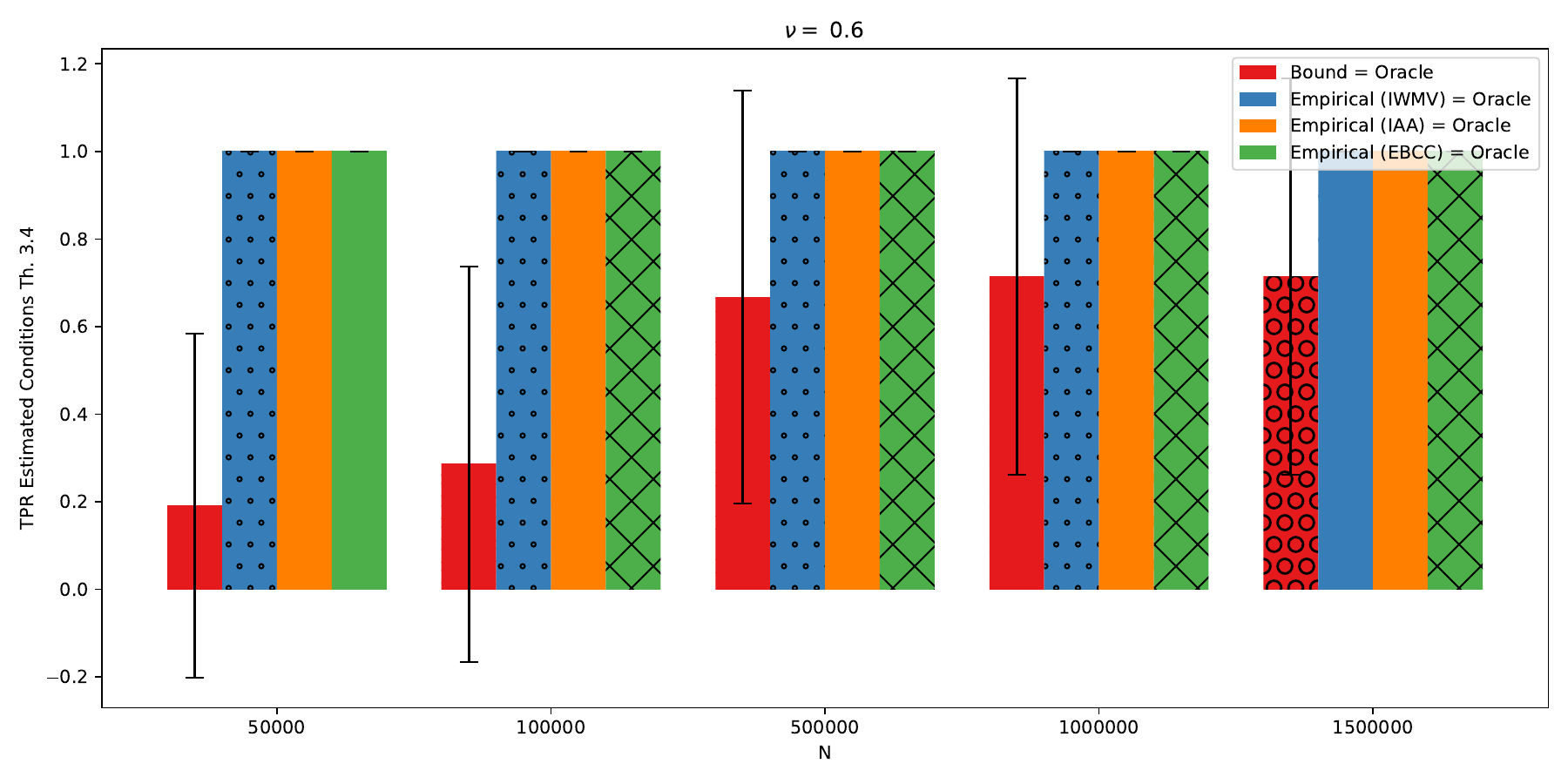}
        \subcaption{
        $\nu_0=0.6$}
    \end{subfigure}
    \begin{subfigure}[t]{0.3\linewidth}
        \includegraphics[width=\linewidth]{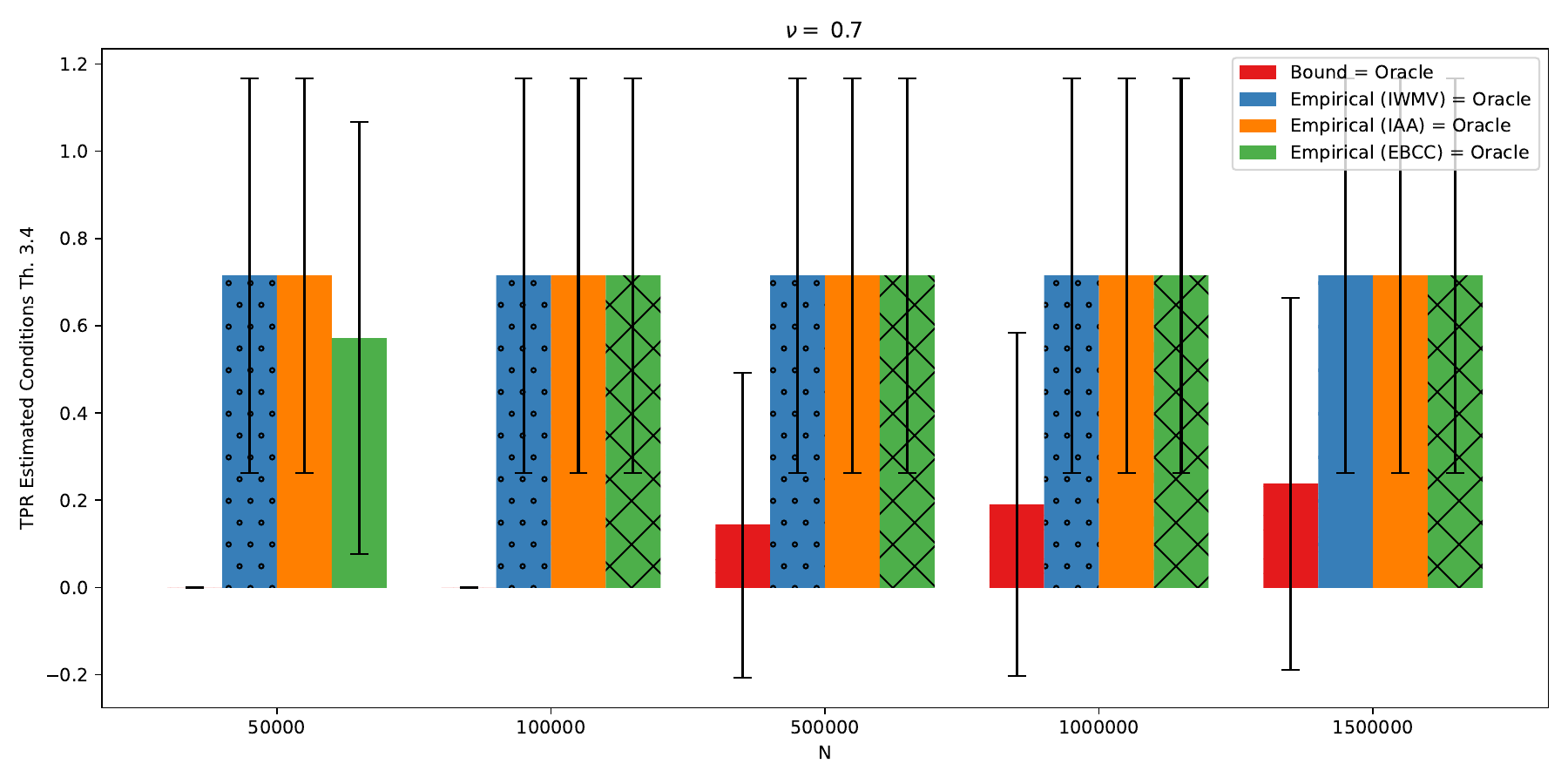}
        \caption{$\nu_0=0.7$}
    \end{subfigure} \\
    \begin{subfigure}[t]{0.3\linewidth}
        \includegraphics[width=\linewidth]{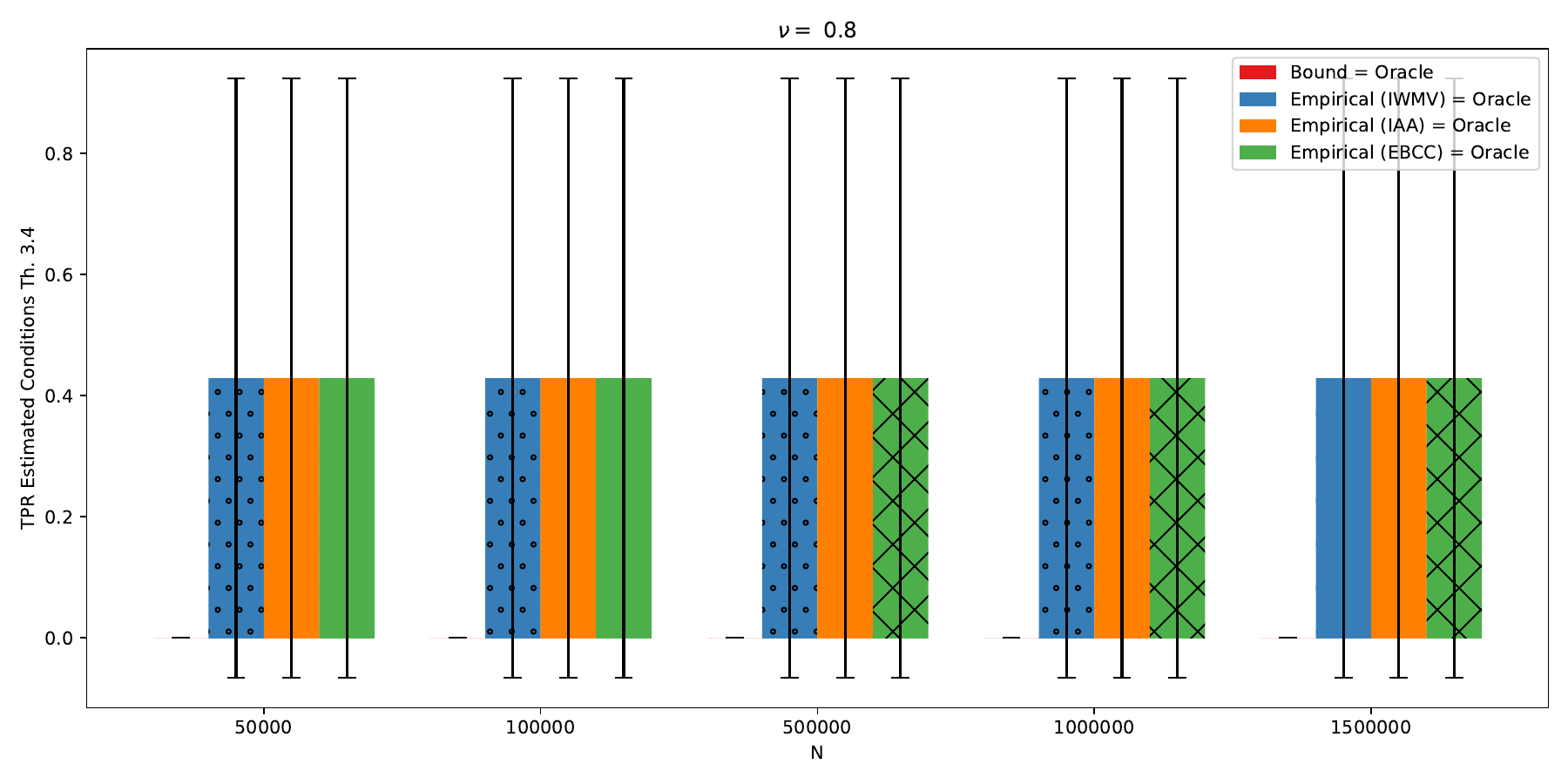}
        \caption{$\nu_0=0.8$}
    \end{subfigure}
\begin{subfigure}[t]{0.3\linewidth}
        \includegraphics[width=\linewidth]{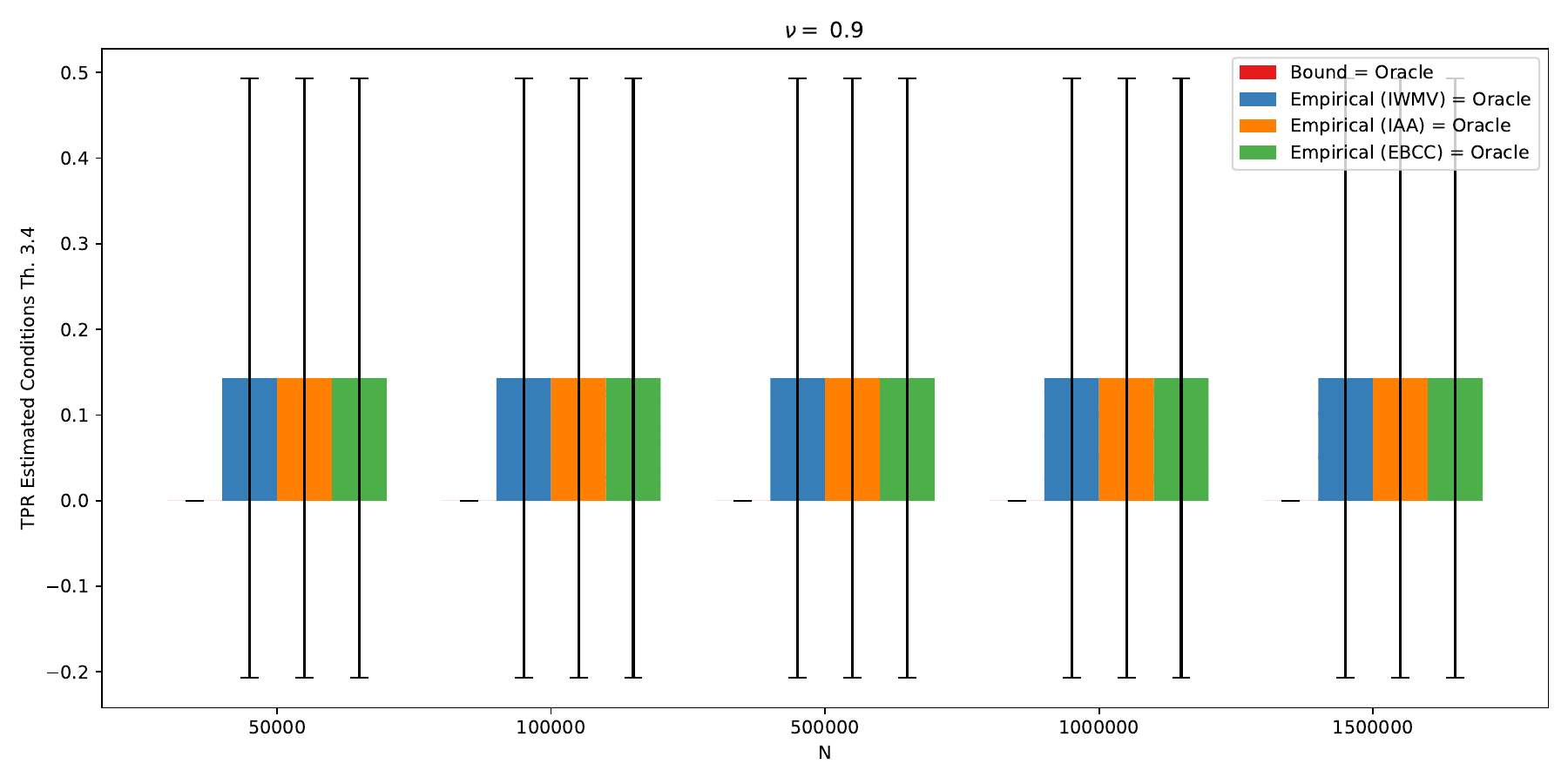}
        \caption{$\nu_0=0.9$}
    \end{subfigure}
\caption{Non-red bars show the fraction of experiments where verification of \cref{thm:twocoins} with estimated parameters from the candidate methods aligns with that of \cref{thm:twocoins} using the true $(\clim,\p)$, considering cases where the theorem is verified with true parameters.
    Red bars indicate cases where \cref{th:estimated_quantities} aligns with \cref{thm:twocoins} using true parameters.
    Synthetic data have various sample sizes~$N$, and the average True Positive Rate is plotted.}
    \label{img:bound_vs_empirical}
\end{figure*}

\cref{img:bound_vs_empirical} shows with non-red bars the proportion of experiments where the verification of \cref{thm:twocoins} using the estimated parameters from the candidate methods yields the same conclusion as the verification of \cref{thm:twocoins} using the true $(\clim,\p)$ values. This is calculated for the cases where the theorem is verified as true with the actual parameters. The red bars indicate the cases where \cref{th:estimated_quantities} and \cref{thm:twocoins} with true parameters lead to the same conclusion. The experiments were conducted with synthetic data of varying sample sizes~$N$ and with values of $\nu_0$ ranging from $0.6 \text{ to } 0.9$. The case with $\nu_0=0.5$ is in \cref{fig:histogram_samplewise} from the main paper. The percentage of data used for estimation is always equal to 10\%.
\subsection{Real-data Noise Transition Matrices}
\label{sec:real_data_t_matrices}
Anchor Map $T$ matrix computed using anchor points for datasets with 2 classes (SP, SP\_amt, ZenCrowd\_in, D-Product) are reported in \cref{table:t_matrices_anchor_map}.
\begin{table}[t]
    \centering
    \begin{tabular}{cc}
    \toprule
        Dataset name & anchor Map $T$ matrix\\
         \midrule
        SP & $\begin{bmatrix} 0.57 & 0.43\\ 0.35 & 0.65 \end{bmatrix}$\\[15pt]
        SP\_amt &  $\begin{bmatrix} 0.64 & 0.36\\ 0.36 & 0.64 \end{bmatrix}$\\ [15pt]
        ZC\_in & $\begin{bmatrix} 0.58 & 0.42\\ 0.49 & 0.51 \end{bmatrix}$\\[15pt]
        D-Product & $\begin{bmatrix} 0.72 & 0.28\\ 0.45 & 0.55 \end{bmatrix}$\\[10pt]
        \bottomrule
    \end{tabular}
    \caption{Noise transition matrices $T$ computed using anchor points for SP, SP\_amt, ZenCrowd\_in, D-Product, that are the binary classes real-world dataset we are using in our experiments.}
    \label{table:t_matrices_anchor_map}
\end{table}

Anchor Map $T$ matrix computed using anchor points for MS real-world dataset is reported in \cref{table:t_matrices_anchor_map_MS}:
\begin{align}
    \widetilde{T}_{MS} = \begin{bmatrix} 0.41 & 0.05 & 0.05 & 0.13 & 0.07 & 0.05 & 0.09 & 0.06 & 0.05 & 0.04 \\ 0.05 & 0.41 & 0.05 & 0.04 & 0.07 & 0.05 & 0.12 & 0.08 & 0.05 & 0.07 \\ 0.08 & 0.05 & 0.31 & 0.21 & 0.05 & 0.03 & 0.09 & 0.05 & 0.05 & 0.07 \\ 0.06 & 0.07 & 0.06 & 0.39 & 0.08 & 0.08 & 0.07 & 0.03 & 0.08 & 0.08 \\ 0.04 & 0.08 & 0.06 & 0.08 & 0.41 & 0.06 & 0.07 & 0.06 & 0.07 & 0.08 \\ 0.06 & 0.06 & 0.08 & 0.23 & 0.08 & 0.26 & 0.11 & 0.06 & 0.05 & 0.04 \\ 0.15 & 0.08 & 0.05 & 0.09 & 0.09 & 0.06 & 0.31 & 0.06 & 0.05 & 0.05 \\ 0.06 & 0.05 & 0.07 & 0.07 & 0.07 & 0.03 & 0.05 & 0.49 & 0.09 & 0.03 \\ 0.05 & 0.06 & 0.04 & 0.06 & 0.07 & 0.28 & 0.08 & 0.05 & 0.28 & 0.03 \\ 0.06 & 0.06 & 0.05 & 0.10 & 0.10 & 0.08 & 0.05 & 0.03 & 0.07 & 0.41 \end{bmatrix} 
    \label{table:t_matrices_anchor_map_MS}
\end{align}

Anchor Map $T$ matrix  computed using anchor points for  CF\_amt real-world dataset is reported in \cref{table:t_matrices_anchor_CFAMT}:
\begin{align}
\label{table:t_matrices_anchor_CFAMT}
      \widetilde{T}_{CF\_amt} =  \begin{bmatrix} 0.33 & 0.21 & 0.24 & 0.08 & 0.14 \\ 0.14 & 0.31 & 0.22 & 0.15 & 0.18 \\ 0.12 & 0.20 & 0.49 & 0.10 & 0.09 \\ 0.14 & 0.22 & 0.20 & 0.29 & 0.15 \\ 0.18 & 0.22 & 0.21 & 0.12 & 0.27 \end{bmatrix}
\end{align}

\subsection{Heatmap visualization of \cref{thm:twocoins}}

\begin{figure}[h]
    \centering
    \includegraphics[width=.5\linewidth]{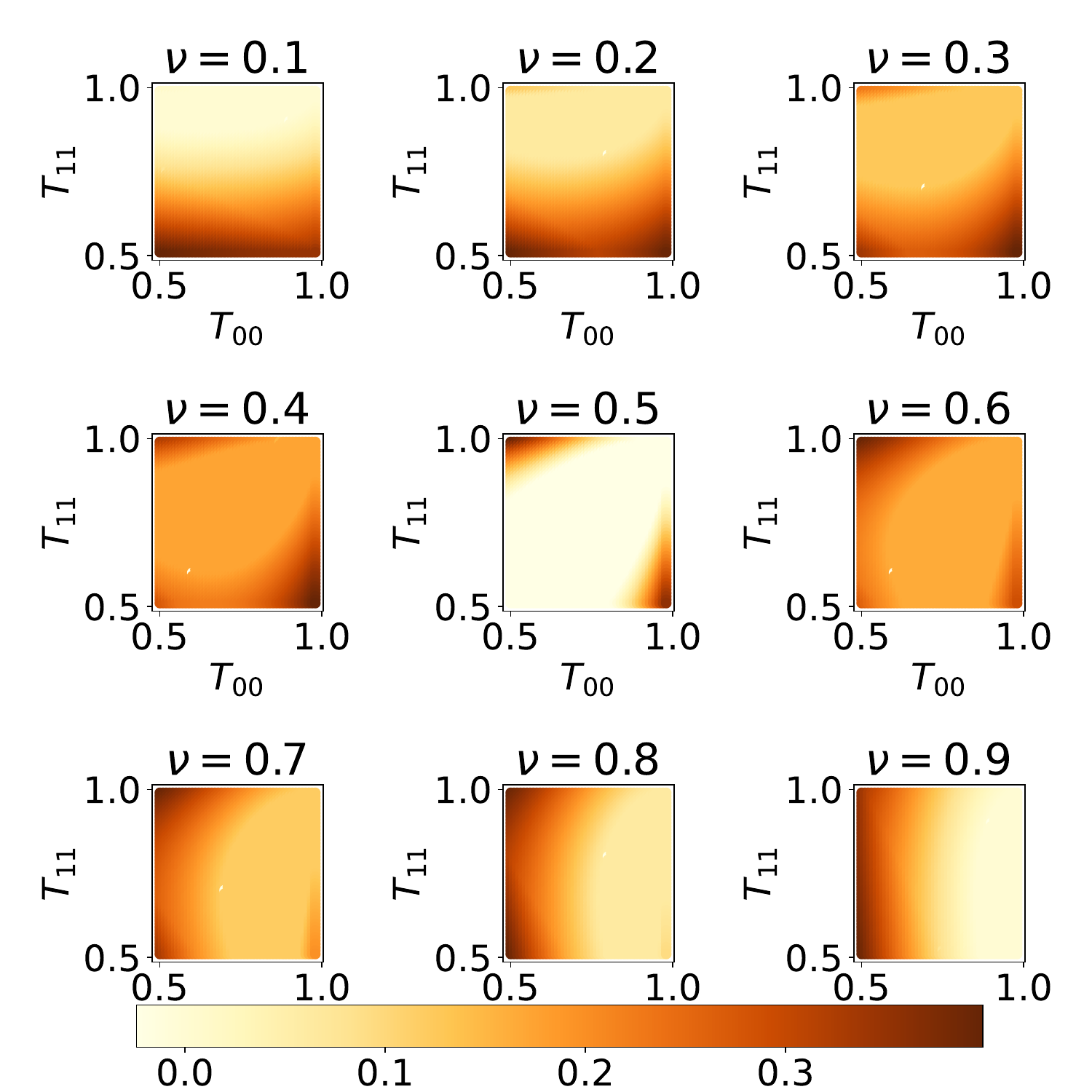}
    \caption{
    Heatmap visualization of the gap between oMAP and MV in the two-coin case.}
    \label{fig:heatmap_two_coin}
\end{figure}

The heatmap in \cref{fig:heatmap_two_coin} shows the gap between MV and oMAP. The experiment is obtained via simulations with different $\nu$ values ranging from $0.1$ to $0.9$. The plot obtained through
simulations accurately reflects the theorem’s conditions.

\newpage
\section{Beyond equal reliability assumption}
\label{sec:appendix_beyound_equal_rel}
\subsection{Uniformly perturbed T}
\label{sec:appendix_expectation}
In this setting we imagine that instead of being fixed, the annotators noise transition matrix are sampled form a distribution so that:\begin{equation}
\label{matrix_T_h}
  T_h = \begin{bmatrix}&T_{00} - \sigma_h &T_{01} + \sigma_h \\& T_{10} + \sigma_h &T_{11}- \sigma_h
\end{bmatrix}   \; \; \text{ with } \sigma_h \sim \text{Unif}[-\sigma, \sigma]
\end{equation} 
For a given $\sigma$.

Suppose we want the quantity to not depend of the particular pool of annotators we choose but simply on their number. What we can do in this case, having the knowledge annotators are sampled from that distribution is to use them to answer the question:

``\textit{Given annotators sampled around that distribution, not having the need of actually them being exactly equal, neither the necessity of knowing the exact reliability of each annotator, what is the expectation, on annotator distribution, of the probability of MV match the theoretical optima upper bound of the estimation given by oMAP?}''
In this case we're trying to answer the above question:

$$\mathbb{E}_{\sigma}[ \mathbb{P}(y^{MV}=y) |\sigma_1, \dots, \sigma_h) ] = \mathbb{E}_{\sigma}[ \mathbb{P}(  y^{oMAP}=y|\sigma_1, \dots, \sigma_h) ]$$

\begin{lemma}
Let $H$ be the number of annotators and $T_h$ their noise transition matrix, samples from the distribution defined in \cref{matrix_T_h}. For $c \in \{0,1\}$, it holds that: 
\[ \mathbb{E}[T^{MV}_{cc}] = \sum_{s=\ceil{\frac{H}{2}}}^{H} \binom{H}{s} T_{cc}^s (1- T_{cc})^{H-s}   \]
\end{lemma}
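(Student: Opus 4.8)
The plan is to condition on the realized perturbations $\sigma_1,\dots,\sigma_H$, reduce $T^{MV}_{cc}$ to an explicit multilinear polynomial in the per-annotator correctness probabilities, and then average. Conditioning on $y=c$ and on $(\sigma_1,\dots,\sigma_H)$, annotator $h$ reports label $c$ independently of the others with probability $p_h\coloneq(T_h)_{cc}=T_{cc}-\sigma_h$; since $H$ is odd, $\MV$ outputs $c$ precisely when at least $\ceil{H/2}$ of the $H$ reports equal $c$. Writing the event as a disjoint union over the set $S$ of annotators that happen to be correct, this gives, for each fixed $\sigma$,
\[
T^{MV}_{cc}(\sigma_1,\dots,\sigma_H)=\sum_{\substack{S\subseteq\{1,\dots,H\}\\ |S|\ge\ceil{H/2}}}\ \prod_{h\in S}p_h\prod_{h\notin S}(1-p_h).
\]

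\textbf{Averaging over the perturbations.} Next I would take $\mathbb{E}$ over $\sigma_1,\dots,\sigma_H$ i.i.d.\ $\mathrm{Unif}[-\sigma,\sigma]$. By linearity the finite sum over $S$ passes outside the expectation. For a fixed $S$ the summand is a product whose $h$-th factor, $p_h$ or $1-p_h$, depends on $\sigma_h$ alone; since the $\sigma_h$ are independent, the expectation of the product factors into the product of expectations. Finally, because $\mathrm{Unif}[-\sigma,\sigma]$ is symmetric about $0$ we have $\mathbb{E}[\sigma_h]=0$, hence $\mathbb{E}[p_h]=T_{cc}$ and $\mathbb{E}[1-p_h]=1-T_{cc}$ for every $h$. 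Therefore every subset $S$ with $|S|=s$ contributes $T_{cc}^{\,s}(1-T_{cc})^{H-s}$, and collecting the $\binom{H}{s}$ subsets of each size yields
\[
\mathbb{E}[T^{MV}_{cc}]=\sum_{s=\ceil{H/2}}^{H}\binom{H}{s}T_{cc}^{\,s}(1-T_{cc})^{H-s},
\]
which is the claim, and coincides with the homogeneous-annotator formula of Lemma~\ref{lemma:noisetransition_mv}.

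\textbf{On the hard part.} There is no serious obstacle here: the whole argument rests on the observation that the perturbation enters each annotator's correctness probability \emph{affinely with a mean-zero coefficient}, so $\mathbb{E}[T^{MV}_{cc}]$ equals the majority-vote probability of $H$ independent coins whose biases are the \emph{averaged} values $\mathbb{E}[p_h]=T_{cc}$ --- a consequence of the multilinearity of the map $p\mapsto\mathbb{P}(\text{majority of }c\text{'s})$ in the coin biases together with independence across annotators. The only point needing a word of care is well-posedness of the model, namely that $\sigma$ is small enough for $T_{cc}-\sigma_h\in[0,1]$ (equivalently $\sigma\le\min(T_{cc},1-T_{cc})$ for the relevant $c$), but this restriction plays no role in the identity itself.
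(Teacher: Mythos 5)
Your proof is correct and follows essentially the same route as the paper's: decompose the majority-vote event over the subset of correct annotators, pull the expectation inside the finite sum, factor the product expectation using independence of the $\sigma_h$, and use $\mathbb{E}[\sigma_h]=0$ so each factor averages to $T_{cc}$ or $1-T_{cc}$. Your added remark on well-posedness ($\sigma\le\min(T_{cc},1-T_{cc})$) is a sensible observation the paper leaves implicit, but otherwise the two arguments coincide.
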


\begin{proof}
    
Denoting by $[H]  = \{1, \dots, H \}$, in the setting we just described we have that:
\begin{align*}
     T^{MV}_{cc} &= \mathbb{P} (y_{MV} = c| y=c) \\
     & = \mathbb{P} ( \text{at least } \ceil{\frac{H}{2}} \text{annotators vote class } c | y=c )\\
     & =  \sum_{s=\ceil{\frac{H}{2}}}^{H} \mathbb{P} ( \text{exactly  } s \text{ annotators vote class }c | y=c )\\
    & = \sum_{s=\frac{H+1}{2}}^{H} \sum_{\substack{ A \subseteq [H] \\  \text{s.t. }|A|=s}} \prod_{h \in A} (T_h)_{cc} \prod_{k \in H \ A} (T_{k})_{c 1-c}
\end{align*}
And the expected value of:\begin{align}
    \mathbb{E}[T^{MV}_{cc}] = \mathbb{E}\left[ \sum_{s= \frac{H+1}{2}}^{H} \sum_{\substack{ A \subseteq [H] \\  \text{s.t. }|A|=s}} \prod_{h \in A} (T_{h})_{cc} \prod_{k \in [H] \setminus A} (T_{k})_{c 1-c}  \right]
\end{align} 
The random variables $T^h$ are independent since we assume the $\sigma^h$ to be independent. So we have that:
   \begin{align}
    \mathbb{E}[T^{MV}_{cc}] & =  \sum_{s=\frac{H+1}{2}}^{H} \sum_{\substack{ A \subseteq [H] \\  \text{s.t. }|A|=s}}  \prod_{h \in A} \mathbb{E}\left[(T_{h})_{cc} \right] \prod_{k \in [H] \setminus A} \mathbb{E}\left[ (T_{k})_{c 1-c}  \right]\\
    &= \sum_{s=\frac{H+1}{2}}^{H} \sum_{\substack{ A \subseteq [H] \\  \text{s.t. }|A|=s}}  \prod_{h \in A} T_{cc} \prod_{k \in [H] \setminus A}  T_{c 1-c} \\
    &= \sum_{s=\ceil{\frac{H+1}{2}}}^{H} \binom{H}{s} T_{cc}^s (1- T_{cc})^{H-s} 
\end{align} 
\end{proof}

Let us now derive the formulation for oMAP. The following lemma hold. 

\begin{lemma}
    Let $H$ be the number of annotators and $T_h$ their noise transition matrix, samples from the distribution defined in \cref{matrix_T_h}, with :
\begin{equation}
       \sigma  \leq \frac{\log \rho}{H}     \left[  \frac{2}{T_{1-c1-c}} +   \frac{2}{ T_{c1-c}} + \frac{1}{T_{cc}} + \frac{1}{ T_{1-cc}} \right]^{-1} \min ( A_c- \floor{A_c},  1- A_c- \floor{A_c})
\end{equation}
Where:
$$A_c = \left[ \log \rho \right]^{-1}
 \left[ \log \left( \frac{\nu_{1-c}}{\nu_c} \right) + H\log \delta_{1-c} \right] $$
for $c \in \{0,1\}$. Then, it holds that:  
\[ \mathbb{E}[T^{oMAP}_{cc}] = \sum_{k=\ceil{A_c} }^H \binom{H}{k}T_{cc}^k (1-T_{cc})^{H-k}.  \]

\end{lemma}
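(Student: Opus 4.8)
The plan is to show that, under the stated bound on $\sigma$, the $\oMAP$ decision rule coincides --- for \emph{every} vote configuration and \emph{every} realisation $\sigma_1,\dots,\sigma_H\in[-\sigma,\sigma]$ --- with the deterministic threshold rule ``output $c$ iff at least $\ceil{A_c}$ of the annotators voted $c$''. Once that is in hand, conditionally on $\{\sigma_h\}$ the number of correct votes is a sum of independent $\mathrm{Bernoulli}\bigl((T_h)_{cc}\bigr)=\mathrm{Bernoulli}(T_{cc}-\sigma_h)$ variables, and averaging over the zero-mean perturbations collapses everything back to the unperturbed binomial formula; this last step is routine, so essentially all the work is in the threshold-preservation claim.

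First I would rewrite the $\oMAP$ rule for a fixed true label $c$. Let $V_h=\mathds{1}[x_k^h=c]$ and $m=\sum_h V_h$. As in the derivation in \cref{appendix:computeT_cc_omap}, $\oMAP$ returns $c$ iff $\nu_c\prod_h (T_h)_{c,x_k^h}>\nu_{\bar c}\prod_h (T_h)_{\bar c,x_k^h}$, i.e. iff $g(V)>0$, where
\[
g(V)\;=\;\sum_{h:\,V_h=1}\log\frac{(T_h)_{cc}}{(T_h)_{\bar c c}}\;+\;\sum_{h:\,V_h=0}\log\frac{(T_h)_{c\bar c}}{(T_h)_{\bar c\bar c}}\;-\;\log\frac{\nu_{\bar c}}{\nu_c},
\]
and let $g_0(V)$ be the value of $g(V)$ at $\sigma_1=\dots=\sigma_H=0$. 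Since the $h$-th summand then equals $\log\delta_c$ for $V_h=1$ and $-\log\delta_{\bar c}$ for $V_h=0$, and $\log\delta_c+\log\delta_{\bar c}=\log\rho$, we get $g_0(V)=m\log\rho-H\log\delta_{\bar c}-\log\frac{\nu_{\bar c}}{\nu_c}=(m-A_c)\log\rho$. Because $A_c\notin\mathbb N$ by assumption, $g_0(V)>0\iff m\ge\ceil{A_c}$, and over all configurations $\min_V|g_0(V)|=\log\rho\cdot\min\bigl(A_c-\floor{A_c},\,\ceil{A_c}-A_c\bigr)$, attained at $m=\floor{A_c}$ and $m=\ceil{A_c}$.

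Next I would bound $|g(V)-g_0(V)|$ uniformly in $V$ and in $\{\sigma_h\}\subseteq[-\sigma,\sigma]$. Writing each summand as a difference of logarithms (e.g. $\log(T_{cc}-\sigma_h)-\log(T_{\bar c c}+\sigma_h)$ when $V_h=1$) and using a first-order estimate such as $|\log(1+t)|\le 2|t|$ for $|t|\le\tfrac12$ --- valid once $\sigma$ is below the entries of $T$ --- each summand moves by at most $|\sigma_h|$ times a constant made of reciprocals of entries of $T$; collecting the $V_h=1$ and $V_h=0$ contributions and bounding $m,H-m\le H$ yields $|g(V)-g_0(V)|\le \sigma H\,R_c^{-1}$ with $R_c^{-1}=\tfrac{2}{T_{\bar c\bar c}}+\tfrac{2}{T_{c\bar c}}+\tfrac{1}{T_{cc}}+\tfrac{1}{T_{\bar c c}}$ (the precise apportioning of the Taylor remainders into these four reciprocals is bookkeeping). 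The hypothesis $\sigma\le \tfrac{\log\rho}{H}\,R_c\,\min(A_c-\floor{A_c},\,\ceil{A_c}-A_c)$ then forces $|g(V)-g_0(V)|\le\min_V|g_0(V)|$, so $\operatorname{sign}g(V)=\operatorname{sign}g_0(V)$ for every $V$ and every admissible $\{\sigma_h\}$; hence the perturbed $\oMAP$ rule is exactly the threshold rule at $\ceil{A_c}$. This rounding-stability step is the main obstacle: it is the exact analogue of the $\ceil{\widehat{A_c}}=\ceil{A_c}$ argument in \cref{sup:estimatedTProof}, and the delicate point is tracking the constants so they reduce to $R_c^{-1}$.

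Finally, conditionally on $\{\sigma_h\}$ the threshold characterisation gives
\[
T^{\oMAP}_{cc}\;=\;\mathbb P\!\Bigl(\textstyle\sum_h V_h\ge\ceil{A_c}\ \Big|\ \{\sigma_h\}\Bigr)\;=\;\sum_{\substack{S\subseteq[H]\\ |S|\ge\ceil{A_c}}}\ \prod_{h\in S}(T_{cc}-\sigma_h)\prod_{h\notin S}(1-T_{cc}+\sigma_h),
\]
and averaging over the independent $\sigma_h\sim\mathrm{Unif}[-\sigma,\sigma]$ (so $\mathbb E[\sigma_h]=0$) makes each factor have mean $T_{cc}$ or $1-T_{cc}$; by independence $\mathbb E$ of each product is $T_{cc}^{|S|}(1-T_{cc})^{H-|S|}$, and grouping subsets by cardinality gives $\mathbb E[T^{\oMAP}_{cc}]=\sum_{k=\ceil{A_c}}^{H}\binom{H}{k}T_{cc}^k(1-T_{cc})^{H-k}$, which is the claim. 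Combined with the preceding lemma $\mathbb E[T^{\MV}_{cc}]=\sum_{k=\ceil{H/2}}^{H}\binom{H}{k}T_{cc}^k(1-T_{cc})^{H-k}$, this shows the expected confusion matrices of $\MV$ and $\oMAP$ equal their unperturbed versions, so whenever $(\nu,T)$ satisfies \cref{thm:twocoins} the identity in \cref{eq:check_in_expectation} holds in expectation over the annotator perturbations.
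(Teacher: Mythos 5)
Your proposal is correct and follows essentially the same route as the paper: you rewrite the perturbed oMAP rule as a log-likelihood-ratio threshold, show the perturbation of that ratio is bounded by $\sigma H$ times the sum of reciprocals of the entries of $T$ (the paper's $\xi$ satisfying $\ceil{A_c+\xi}=\ceil{A_c}$ is exactly your sign-preservation step, with the paper's $1-A_c-\floor{A_c}$ being your $\ceil{A_c}-A_c$), and then average the subset-indexed product over the independent zero-mean $\sigma_h$ to collapse to the unperturbed binomial sum.
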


\begin{proof}
Let us denote by $C_c$ the set of annotators that correctly voted annotators $c$ when the true label was $c$ and by  $W_c$ the set of annotators that incorrectly voted class $1-c$ when the true class was $c$. 
Notice that $W_c = [H] \setminus C_c$
We recall that the element $i$-th of the posterior probabilities vector, $ p_{i|\tilde{c}} = \nu_{i} \prod_{h \in C_{\tilde{c}}}  (T_h)_{i c} \prod_{k \in W_{\tilde{c}}} (T_k)_{i 1-c}  $. \\
The ${\text{argmax}}_{i \in \{1,0 \}}$ of $p_i$ is $c$ if $ p_{c|c} > p_{1-c|c}$ i.e.:

\begin{align}
  \nonumber
    \nu_{c} \prod_{h \in C_{c}}  (T_h)_{c c} \prod_{k \in W_{c}} (T_k)_{c 1-c} &>  \nu_{1-c} \prod_{h \in C_{c}}  (T_h)_{1-c c} \prod_{k \in [H] \setminus C_{{c}}} (T_k)_{1-c 1-c} \\
      \nonumber
    &\Updownarrow\\
    \nonumber
       \prod_{h \in C_{c}}  \frac{(T_h)_{c c}}{(T_h)_{1-c c}} \prod_{k  \in [H] \setminus C_{{c}}} & \frac{(T_k)_{c 1-c}}{ (T_k)_{1-c 1-c}} >  \frac{\nu_{1-c} }{  \nu_{c}}\\ 
         \nonumber
 &\Updownarrow\\
   \label{eq:condition_set}
 \sum_{h \in C_{c}}  \log\left (\frac{(T_h)_{c c}}{(T_h)_{1-c c}} \right) + \sum_{k  \in [H] \setminus C_{{c}}} & \log \left( \frac{(T_k)_{c 1-c}}{ (T_k)_{1-c 1-c}} \right) >  \log \left(\frac{\nu_{1-c} }{  \nu_{c}} \right)\\
   \nonumber
  &\Updownarrow\\
  \nonumber
\sum_{h \in H} m_h \log\left (\frac{(T_h)_{c c}}{(T_h)_{1-c c}} \right)  + (1-m_h) \log & \left(  \frac{(T_h)_{c 1-c}}{ (T_h)_{1-c 1-c}} \right)  > \log \left(\frac{\nu_{1-c} }{  \nu_{c}} \right) \text{ with } m_h= \mathds{1}_{h \in C_c}\\
  \nonumber
  &\Updownarrow\\
    \nonumber
\sum_{h \in H} m_h \left[  \log\left (\frac{(T_h)_{c c}}{(T_h)_{1-c c}} \right) -  \log \left( \frac{(T_h)_{c 1-c}}{ (T_h)_{1-c 1-c}} \right) \right]  & > \log \left(\frac{\nu_{1-c} }{  \nu_{c}} \right)+  \sum_{h \in H}  \log\left (\frac{(T_h)_{1-c 1-c}}{(T_h)_{c 1-c}} \right)\\
\nonumber
 &\Updownarrow\\
    \nonumber
    \sum_{h \in H} m_h \left[  \log\left (\frac{(T_h)_{c c} (T_h)_{1-c 1-c}}{(T_h)_{1-c c} (T_h)_{c 1-c} }  \right) \right]  & > \log \left(\frac{\nu_{1-c} }{  \nu_{c}} \right) + \sum_{h \in H} \log\left (\frac{(T_h)_{1-c 1-c}}{(T_h)_{c 1-c}} \right)\\
    \nonumber
     &\Updownarrow\\
     \nonumber
    \nonumber
      \sum_{h \in H} m_h  \log\left( \rho_h  \right)   > \log \left(\frac{\nu_{1-c} }{  \nu_{c}} \right)  -  \sum_{h \in H} & \log\left( (\delta_h)_c \right)  \text{ with } m_h= \mathds{1}_{h \in C_c}  \\
\end{align}

Denoting by $\rho_h = \frac{(T_h)_{c c} (T_h)_{1-c 1-c}}{(T_h)_{1-c c} (T_h)_{c 1-c} }  \; \text{and } (\delta_h)_c = \frac{(T_h)_{1-c 1-c}}{(T_h)_{c 1-c}}$.

The only random variable in equation \eqref{eq:condition_set} is the set $C_c$, with this writing we transferred the randomness to the variable $m_{h}$ and $m_h$ is a Bernoulli with parameter $(T_h)_{cc}$. 

Let us denote by $ \alpha_c = \log \left(\frac{\nu_{1-c} }{  \nu_{c}} \right)  -  \sum_{h \in H}  \log\left( (\delta_h)_c \right)  \text{ with } m_h= \mathds{1}_{h \in C_c}    $ and by $ \beta_h = \log(\rho_h)$. 
We are so saying that the probability that:
\[ \mathbb{P} (y^{oMAP}=y | y=c)  =  \mathbb{P} \left(   \sum_{h \in H} m_h  \beta_h   > \alpha_c   \right)  \]  with $ m_h \sim $ Bern$((T_{h})_{cc})$.

Now, we have discrete set of possible values for this sum $ \sum_{h \in H} m_h  \beta_h  $, indeed it can take values in this set $ V = \left\{ \sum_{h \in S} \beta_h \ \bigg| \ S \subseteq H \right\}
$ and  the probability of: 
\[ \mathbb{P} \left( \sum_{h \in H} m_h \beta_h =  \sum_{h \in S} \beta_h \right) =  \prod_{h \in S} (T_h)_{cc} \prod_{k \in [H] \setminus S} \left( 1 - (T_k)_{cc} \right)  \]
It follows that:
\begin{align*}
 T_{cc}^{oMAP} = \mathbb{P} \left(   \sum_{h \in H} m_h  \beta_h   > \alpha_c   \right)  = \sum_{\substack{ S \subseteq [H] \\  \text{s.t. }\sum_{s \in S} \beta_s> \alpha_c} } \prod_{h \in S} (T_h)_{cc} \prod_{k \in H \setminus S} \left( 1 - (T_k)_{cc} \right)
    \end{align*}

Without loss of generality, we can consider $c=0$, we have that: 
\[
\beta_h = \log \left( \frac{(T_{00} - \sigma_h)(T_{11} - \sigma_h)}{(T_{10} + \sigma_h)(T_{01} + \sigma_h)} \right)
\]

Substitute \( T_{ij} \pm \sigma_h \) and rewrite:

\begin{align*}
    \beta_h &= \log \left( \frac{T_{00} T_{11}}{T_{10} T_{01}} \cdot \frac{1 - \frac{\sigma_h}{T_{00}}}{1 + \frac{\sigma_h}{T_{10}}} \cdot \frac{1 - \frac{\sigma_h}{T_{11}}}{1 + \frac{\sigma_h}{T_{01}}} \right)\\
    &= \log \left( \frac{T_{00} T_{11}}{T_{10} T_{01}} \right) + \log \left( \frac{1 - \frac{\sigma_h}{T_{00}}}{1 + \frac{\sigma_h}{T_{10}}} \cdot \frac{1 - \frac{\sigma_h}{T_{11}}}{1 + \frac{\sigma_h}{T_{01}}} \right). 
\end{align*}

Similarly: 
\[
\log \left( \frac{T_{11} + \sigma_h}{T_{01} - \sigma_h} \right) = \log \left( \frac{T_{11}}{T_{01}} \right) + \log \left( \frac{1 + \frac{\sigma_h}{T_{11}}}{1 - \frac{\sigma_h}{T_{01}}} \right)
\]
So: $$ \alpha_0 = \log \left( \frac{\nu_1}{\nu_0} \right) + H\log \left( \frac{T_{11}}{T_{01}} \right) + \sum_{h \in [H]}\log \left( \frac{1 + \frac{\sigma_h}{T_{11}}}{1 - \frac{\sigma_h}{T_{01}}} \right)  $$

Substituting, we have that:  

\begin{align*}
    \sum_{s in S} \beta_s &> \alpha_0 \\
    &\Updownarrow\\
    |S|\log \left( \frac{T_{00} T_{11}}{T_{10} T_{01}} \right) + \sum_{h \in S} \log \left( \frac{1 - \frac{\sigma_h}{T_{00}}}{1 + \frac{\sigma_h}{T_{10}}} \cdot \frac{1 - \frac{\sigma_h}{T_{11}}}{1 + \frac{\sigma_h}{T_{01}}} \right) & > \log \left( \frac{\nu_1}{\nu_0} \right) + H\log \left( \frac{T_{11}}{T_{01}} \right) + \sum_{h \in [H]}\log \left( \frac{1 + \frac{\sigma_h}{T_{11}}}{1 - \frac{\sigma_h}{T_{01}}} \right)\\
    \Updownarrow\\
    |S|\log \left( \frac{T_{00} T_{11}}{T_{10} T_{01}} \right)   > \log \left( \frac{\nu_1}{\nu_0} \right) + H\log \left( \frac{T_{11}}{T_{01}} \right) & + \sum_{h \in [H]}\log \left( \frac{1 + \frac{\sigma_h}{T_{11}}}{1 - \frac{\sigma_h} {T_{01}}} \right) -  \sum_{h \in S} \log \left( \frac{1 - \frac{\sigma_h}{T_{00}}}{1 + \frac{\sigma_h}{T_{10}}} \cdot \frac{1 - \frac{\sigma_h}{T_{11}}}{1 + \frac{\sigma_h}{T_{01}}} \right)\\
     \Updownarrow\\
     |S|  > \log \left( \frac{T_{10} T_{01}}{T_{00} T_{11}} \right) 
 \left[ \log \left( \frac{\nu_1}{\nu_0} \right) + H\log \left( \frac{T_{11}}{T_{01}} \right) \right]  & + \log \left( \frac{T_{10} T_{01}}{T_{00} T_{11}} \right)\left[\sum_{h \in [H]}\log \left( \frac{1 + \frac{\sigma_h}{T_{11}}}{1 - \frac{\sigma_h} {T_{01}}} \right) -  \sum_{h \in S} \log \left( \frac{1 - \frac{\sigma_h}{T_{00}}}{1 + \frac{\sigma_h}{T_{10}}} \cdot \frac{1 - \frac{\sigma_h}{T_{11}}}{1 + \frac{\sigma_h}{T_{01}}} \right) \right]
\end{align*}

Let us denote by: $$ \xi = + \log \left( \frac{T_{10} T_{01}}{T_{00} T_{11}} \right)\left[\sum_{h \in [H]}\log \left( \frac{1 + \frac{\sigma_h}{T_{11}}}{1 - \frac{\sigma_h} {T_{01}}} \right) -  \sum_{h \in S} \log \left( \frac{1 - \frac{\sigma_h}{T_{00}}}{1 + \frac{\sigma_h}{T_{10}}} \cdot \frac{1 - \frac{\sigma_h}{T_{11}}}{1 + \frac{\sigma_h}{T_{01}}} \right) \right]$$

We already defined in the previous section $A_0 $ as: 
$$A_0 = \left[ \log \left( \frac{T_{00} T_{11}}{T_{10} T_{01}} \right) \right]^{-1}
 \left[ \log \left( \frac{\nu_1}{\nu_0} \right) + H\log \left( \frac{T_{11}}{T_{01}} \right) \right] $$

Where $|S|$ is an integer.

 so $|S| > A_0 + \xi  \iff |S| \geq \ceil{A_0 + \xi} $

 If $\xi$ is so small that $\ceil{A_c + \xi}  = \ceil{A_c } $ we have that:
 \begin{equation}
 \label{eq:ceil_cond}
     \ceil{A_0 + \xi} = \ceil{A_c } \iff - (A_0- \floor{A_0}) \leq \xi \leq 1- (A_0- \floor{A_0}) 
 \end{equation}

 We can always seletc $\sigma_h$ so that the maximum value of the $\sigma_h$ is small so that the condition in \cref{eq:ceil_cond} is met. 
We assumed, $ |\sigma_h|< \sigma$.

  \begin{align*}
    |\xi| & \leq \left[ \log \left( \frac{T_{00} T_{11}}{T_{01} T_{01}} \right)  \right]^{-1}   \left[ \left| \sum_{h \in [H]}\log \left( \frac{1 + \frac{\sigma_h}{T_{11}}}{1 - \frac{\sigma_h} {T_{01}}} \right) \right| +   \left| \sum_{h \in S} \log \left( \frac{1 - \frac{\sigma_h}{T_{00}}}{1 + \frac{\sigma_h}{T_{10}}} \cdot \frac{1 - \frac{\sigma_h}{T_{11}}}{1 + \frac{\sigma_h}{T_{01}}} \right) \right| \right] \\
    & \leq H \left[ \log \left( \frac{T_{00} T_{11}}{T_{01} T_{01}} \right)  \right]^{-1}
   \left[    \log \left( \frac{1 + \frac{\sigma}{T_{11}}}{1 - \frac{\sigma} {T_{01}}} \right)   + \log \left( \frac{1 + \frac{\sigma}{T_{00}}}{1 - \frac{\sigma}{T_{10}}} \cdot \frac{1 + \frac{\sigma}{T_{11}}}{1 - \frac{\sigma}{T_{01}}} \right) \right]  \\
   & \leq H \left[ \log \left( \frac{T_{00} T_{11}}{T_{01} T_{01}} \right)  \right]^{-1}
   \left[    2\log \left( \frac{1 + \frac{\sigma}{T_{11}}}{1 - \frac{\sigma} {T_{01}}} \right)   + \log \left( \frac{1 + \frac{\sigma}{T_{00}}}{1 - \frac{\sigma}{T_{10}}}  \right) \right] \\
   & = H \left[ \log \left( \frac{T_{00} T_{11}}{T_{01} T_{01}} \right)  \right]^{-1}
   \left[    2\log \left( 1 + \frac{\sigma}{T_{11}} \right) -2 \log \left( {1 - \frac{\sigma} {T_{01}}} \right)   + \log \left( 1 + \frac{\sigma}{T_{00}} \right) - \log \left({1 - \frac{\sigma}{T_{10}}}  \right) \right] \\
H \left[ \log \left( \frac{T_{00} T_{11}}{T_{01} T_{01}} \right)  \right]^{-1} 
      \end{align*} 

We can use that $ \frac{x}{1+x} \leq \ln( 1+x) \leq x $ for all $x >-1$ to obtain that: 
\begin{align*}
    &2\log \left( 1 + \frac{\sigma}{T_{11}} \right) -2 \log \left( {1 - \frac{\sigma} {T_{01}}} \right)   + \log \left( 1 + \frac{\sigma}{T_{00}} \right) - \log \left({1 - \frac{\sigma}{T_{10}}}  \right) \\
    & \leq 2 \frac{\sigma}{T_{11}} +  2 \frac{\sigma}{ \sigma +T_{01}} + \frac{\sigma}{T_{00}} + \frac{\sigma}{ \sigma +T_{10}}\\
    & \leq 2 \frac{\sigma}{T_{11}} +  2 \frac{\sigma}{ T_{01}} + \frac{\sigma}{T_{00}} + \frac{\sigma}{ T_{10}}\\
    &= \sigma \left[  \frac{2}{T_{11}} +   \frac{2}{ T_{01}} + \frac{1}{T_{00}} + \frac{1}{ T_{10}} \right]
\end{align*}

Putting everything together: 
\begin{align*}
    |\xi| & \leq \sigma  H \left[ \log \left( \frac{T_{00} T_{11}}{T_{01} T_{01}} \right)  \right]^{-1}  \left[  \frac{2}{T_{11}} +   \frac{2}{ T_{01}} + \frac{1}{T_{00}} + \frac{1}{ T_{10}} \right]  \leq \sigma  H \left[ \log \left( \frac{T_{00} T_{11}}{T_{01} T_{01}} \right)  \right]^{-1} 
\end{align*}

The following conditions implies \cref{eq:ceil_cond}:

\begin{align*}
   \sigma  H \left[ \log \left( \frac{T_{00} T_{11}}{T_{01} T_{01}} \right)  \right]^{-1}  \left[  \frac{2}{T_{11}} +   \frac{2}{ T_{01}} + \frac{1}{T_{00}} + \frac{1}{ T_{10}} \right] \leq  \min ( A_0- \floor{A_0},  1- A_0- \floor{A_0}) \\
   \sigma  \leq \frac{1}{H} \left[ \log \left( \frac{T_{00} T_{11}}{T_{01} T_{01}} \right)  \right]   \left[  \frac{2}{T_{11}} +   \frac{2}{ T_{01}} + \frac{1}{T_{00}} + \frac{1}{ T_{10}} \right]^{-1} \min ( A_0- \floor{A_0},  1- A_0- \floor{A_0})\\
   \sigma  \leq \frac{\log \left( \rho_0 \right)}{H}     \left[  \frac{2}{T_{11}} +   \frac{2}{ T_{01}} + \frac{1}{T_{00}} + \frac{1}{ T_{10}} \right]^{-1} \min ( A_0- \floor{A_0},  1- A_0- \floor{A_0})
\end{align*}

Now, assuming condition \cref{eq:ceil_cond} is true:
\begin{align}
 T^{oMAP}_{cc} =  
 \sum_{\substack{ S \subseteq [H] \\  \text{s.t. }   \sum_{l \in S} \beta_l> \alpha_c} } \prod_{h \in S} (T_h)_{cc} \prod_{k \in H \setminus S} \left( 1 - (T_k)_{cc} \right) \\
    T^{oMAP}_{cc} =  
    \sum_{\substack{ S \subseteq [H] \\  \text{s.t. } |S| > \ceil{a} }} \prod_{h \in S} (T_h)_{cc} \prod_{k \in H \setminus S} \left( 1 - (T_k)_{cc} \right)  
\end{align} 

Again, using that the $T_h$ are independent, we obtain that:

\begin{align}
    \mathbb{E}[ T^{oMAP}_{cc}] & =  
    \sum_{\substack{ S \subseteq [H] \\  \text{s.t. } |S| > \ceil{a} }}  \prod_{h \in S} \mathbb{E} \left[(T_h)_{cc} \right]  \prod_{k \in H \setminus S} \left( 1 - \mathbb{E} \left[ (T_k)_{cc}  \right] \right)  \\
    &=\sum_{\substack{ S \subseteq [H] \\  \text{s.t. } |S| > \ceil{a} }}  T_{cc}^{|S|} \left( 1 -  (T_k)_{cc} \right)^{H-|S|} \\
    &= \sum_{s=\ceil{A_c}}^{H} \binom{H}{s} T_{cc}^s (1- T_{cc})^{H-s} 
\end{align}

\end{proof}

Thanks to the previous Lemmas we are ready to prove the following Theorem.

\begin{theorem}
Using the notation from \cref{eq:notation_rho_delta}.
Let $H$ be the number of annotators and $T_h$ their noise transition matrix, samples from the distribution defined in \cref{matrix_T_h}, with: 
\begin{equation}
       \sigma  \leq \frac{\log \rho}{H}     \left[  \frac{2}{T_{1-c1-c}} +   \frac{2}{ T_{c1-c}} + \frac{1}{T_{cc}} + \frac{1}{ T_{1-cc}} \right]^{-1} \min ( A_c- \floor{A_c},  1- A_c- \floor{A_c})
\end{equation}
for $c \in \{0,1\} $ and with 
$$A_c = \left[ \log \rho \right]^{-1}
 \left[ \log \left( \frac{\nu_{1-c}}{\nu_c} \right) + H\log \delta_{1-c} \right]. $$

 we have that if: 
\begin{align}
    \biggl(\frac{\delta_0}{\delta_{1}} \biggr)^{\frac{H}{2}} \frac{1}{\sqrt{ \rho}}  < \frac{\nu_{1}}{\nu_0} < \biggl(\frac{\delta_0}{\delta_{1}} \biggr)^{\frac{H}{2}} \sqrt{ \rho}
\end{align}
Then $\mathbb{E}[T^{oMAP}_{cc}] = \mathbb{E}[T_{cc}^{MV}] $ and in expectations, over the distribution of the annotators oMAP is equivalent to MV instance wise. 
\end{theorem}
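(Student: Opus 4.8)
The plan is to peel off the deterministic class weights by linearity of expectation, reducing the statement to a per-class identity that will then be recognised as exactly Theorem~\ref{thm:twocoins} read ``in expectation over the annotator noise''. Concretely, since $\nu=(\nu_0,\nu_1)$ is fixed and the only randomness is carried by the $\sigma_h$, I would first write $\exe[\pro(y^{\MV}=y)] = \nu_0\,\exe[T^{\MV}_{00}] + \nu_1\,\exe[T^{\MV}_{11}]$ and, in the same way, $\exe[\pro(y^{\oMAP}=y)] = \nu_0\,\exe[T^{\oMAP}_{00}] + \nu_1\,\exe[T^{\oMAP}_{11}]$, so that it suffices to establish $\exe[T^{\oMAP}_{cc}] = \exe[T^{\MV}_{cc}]$ for both $c=0$ and $c=1$.

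Next I would feed in the two lemmas proved just above. The first gives $\exe[T^{\MV}_{cc}] = \sum_{s=\ceil{H/2}}^{H}\binom{H}{s}T_{cc}^{s}(1-T_{cc})^{H-s}$, an upper tail of $\Bin(H,T_{cc})$ with cutoff $\ceil{H/2}=\frac{H+1}{2}$, where $T_{cc}$ is the unperturbed entry. The second, whose hypothesis on $\sigma$ is precisely the one assumed in the present theorem, gives $\exe[T^{\oMAP}_{cc}] = \sum_{k=\ceil{A_c}}^{H}\binom{H}{k}T_{cc}^{k}(1-T_{cc})^{H-k}$ with $A_c = \bigl(\log(\nu_{\bar c}/\nu_c)+H\log\delta_{\bar c}\bigr)/\log\rho$ and $\rho,\delta_c$ as in~\eqref{eq:notation_rho_delta}. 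The point to stress is that this $A_c$ is literally the quantity that governed the fixed-$T$ analysis in Lemma~\ref{lemma:noise_transition_matrix_MAP} and Lemma~\ref{lemma:conditions_on_T_cc}: the $\sigma$-bound is calibrated exactly so that the random perturbation term does not move $\ceil{A_c}$.

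To finish, I would invoke the two-coin criterion. Both expected transition probabilities are upper tails of the same $\Bin(H,T_{cc})$ and differ only in their lower cutoff, so they coincide as soon as $\ceil{A_c} = \frac{H+1}{2}$. By the characterisation~\eqref{eq:t_cc_MaP_equal_T_cc_MV} in Lemma~\ref{lemma:conditions_on_T_cc}, equivalently by Theorem~\ref{thm:twocoins}, the assumed interval $\bigl(\tfrac{\delta_0}{\delta_1}\bigr)^{H/2}\tfrac{1}{\sqrt{\rho}} < \tfrac{\nu_1}{\nu_0} < \bigl(\tfrac{\delta_0}{\delta_1}\bigr)^{H/2}\sqrt{\rho}$ is exactly the statement $\ceil{A_0}=\ceil{A_1}=\frac{H+1}{2}$ (and it holds for $\nu_0$ iff for $\nu_1$). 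Substituting this back, the two series agree term by term for each $c$, which yields $\exe[T^{\oMAP}_{cc}]=\exe[T^{\MV}_{cc}]$ and, by the reduction in the first step, $\exe[\pro(y^{\MV}=y)]=\exe[\pro(y^{\oMAP}=y)]$.

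Essentially all the substance lives in the two preceding lemmas; given them, this theorem is bookkeeping and I do not anticipate a genuine obstacle. The one place deserving care is confirming that the $A_c$ emerging from the perturbed-$\oMAP$ lemma really coincides with the unperturbed $A_c$ of Section~\ref{sec:twocoins}, so that its equivalence with the $\nu$-interval carries over verbatim; and, upstream of that, that the assumed bound on $\sigma$ is tight enough to absorb the perturbation term uniformly over annotation patterns, so that the lemma's collapse of the subset-sum to a cardinality-sum is valid. Both of these are already discharged inside that lemma, so the theorem follows at once.
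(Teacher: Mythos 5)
Your proposal is correct and follows essentially the same route as the paper: decompose the expected match probability by class via linearity, invoke the two preceding lemmas to write $\exe[T^{\MV}_{cc}]$ and $\exe[T^{\oMAP}_{cc}]$ as binomial upper tails in the unperturbed $T_{cc}$ (the $\sigma$-bound guaranteeing $\ceil{A_c}$ is unmoved), and then appeal to the fixed-$T$ equivalence $\ceil{A_c}=\frac{H+1}{2}\iff$ the stated $\nu$-interval from Lemma~\ref{lemma:conditions_on_T_cc} / Theorem~\ref{thm:suff_condition_appenidx}. The paper's own proof is exactly this reduction followed by the citation of those results, so there is nothing to add.
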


\begin{proof}
\begin{align*}
 \mathbb{E}[ \mathbb{P}(y_{MV} = y)] &= \mathbb{E}[\mathbb{P}(y_{oMV} = y)]  \\
 & \Updownarrow\\
    \mathbb{E}[ \mathbb{P}(y_{MV} = y | y=0)\mathbb{P}(y=0) &+ {P}(y_{MV} = y | y=1)\mathbb{P}(y=1)]  \\
    = \mathbb{E}[  \mathbb{P}(y_{oMAP} = y | y=0)\mathbb{P}(y=0) &+ \mathbb{P}(y_{oMAP} = y | y=1)\mathbb{P}(y=1)]
     & \Updownarrow\\
    \mathbb{E} \left[    T_{00}^{\MV} + T_{11}^{\MV}\frac{1 -\nu}{\nu} \right] &= \mathbb{E} \left[ T_{00}^{\oMAP} + T_{11}^{\oMAP}\frac{1 -\nu}{\nu}  \right] 
\end{align*}
Using the two Lemmas proved before we can just rely on the proof of \cref{thm:suff_condition_appenidx} to conclude this proof.
\end{proof}

\subsubsection{Experiments}

\begin{table}[h]
    \centering
    \begin{tabular}{cccccc}
    \toprule
    $T_{00}$ & $T_{11}$ & $\nu_0$ & \cref{eq:check_in_expectation} satisfied & oMAP Acc. & MV Acc.  \\
    \midrule
    0.7 & 0.8 & 0.6 & \cmark & 0.8328 & 0.8328 \\
    0.7 & 0.8 & 0.9 & \xmark & 0.9261 & 0.7961 \\
    0.55 & 0.55 & 0.6 & \xmark & 0.6099 & 0.5677 \\
    0.55 & 0.55 & 0.9 & \xmark & 0.8992 & 0.5785 \\
    0.9 & 0.7 & 0.6 & \cmark & 0.8974 & 0.8974 \\
    0.9 & 0.7 & 0.9 & \cmark & 0.9521 & 0.9521 \\
    0.6 & 0.6 & 0.6 & \cmark & 0.6499 & 0.6499 \\
    0.6 & 0.6 & 0.9 & \cmark & 0.9030 & 0.6485 \\
    
    \bottomrule
    \end{tabular}
    \caption{Each annotator $h$ has a different $T_h$ matrix, obtained via a perturbation of the original $T$. The condition on \cref{eq:check_in_expectation} is checked and then the Accuracy of oMAP and MV is presented.  This table uses $H=3 \text{ and } N=10000$.}
\label{table:annotators_diff_rel}
\end{table}

\cref{table:annotators_diff_rel} confirms the theoretical results obtained in \cref{sec:relaxing_reliability_assumption} and demonstrated in this section. This table uses $H=3 \text{ and } N=10000$. To obtain the results each annotator $h$ has a different $T_h$ matrix, obtained via a perturbation of the original $T$ (shown in the Table). The condition on \cref{eq:check_in_expectation} is checked and then the Accuracy of oMAP and MV is presented. As expected, all the times there is the $\cmark \,$ symbol the accuracy of oMAP matches the one of MV, showing the correctness of the theoretical results. The experiments are averaged across multiple runs with different seed values.

\subsection{Annotators of two categories}
\label{sec:appendix_two_groups_workers}
Let us now consider the case in which we have to group of annotators with different reliabilities. Specifically we have group $A$, with noise transition matrix $T_A$ and group $B$ with noise tranistion matrix $T_B$. 
Let us derive what are $T_{cc}^{MV}$ and $T_{cc}^{oMAP}$ in this case. 
Wlog we can assume $|A|<|B|$ and $|A| < \ceil{\frac{ H}{2}}$.
\begin{align*}
     T^{MV}_{cc}  & = \mathbb{P} ( \text{at least } \ceil{\frac{H}{2}} \text{annotators vote class } c | y=c )\\
     & =  \sum_{k=1}^{|A|} \sum_{l = \ceil{\frac{ H}{2}} - k}^{|B|} \binom{|A|}{k} \binom{|B|}{l} (T_A)_{cc}^k (1-(T_{A})_{cc})^{|A| -k} (T_B)_{cc}^l (1-(T_{B})_{cc})^{|B| -l}.
\end{align*}

We define:

$ \alpha_c = \log \left( \frac{\nu_{1-c}}{\nu_c} \right) + |B| \log (\delta_B)_{1-c}  +  |A| \log (\delta_A)_{1-c}  = \log \left( \frac{\nu_{1-c}}{\nu_c} \right) + H \log (\delta_B)_{1-c}  +  |A| \log \frac{(\delta_A)_{1-c}}{(\delta_B)_{1-c} } 
 $, for oMAP: 
\begin{align*}
     T^{oMAP}_{cc}  & =  \sum_{k=1}^{|A|} \sum_{l = \ceil{ \frac{\alpha}{\log \rho_B}- k \frac{\log \rho_A}{\log \rho_B} }}^{|B|} \binom{|A|}{k} \binom{|B|}{l} (T_A)_{cc}^k (1-(T_{A})_{cc})^{|A| -k} (T_B)_{cc}^l (1-(T_{B})_{cc})^{|B| -l}.
\end{align*}

\begin{theorem}
Let us assume the annotators belong to two groups $A$ and $B$, with different reliabilities. Specifically group $A$ has noise transition matrix $T_A$ and group $B$ has noise transition matrix $T_B$.
Denoting by $
    \rho_A = {\frac{(T_A)_{cc}(T_A)_{1-c,1-c}}{(1-(T_A)_{cc})(1-(T_A)_{1-c,1-c})}} $ and $ (\delta_A)_c = \frac{(T_A)_{cc}}{1-(T_A)_{1-c,1-c}} $, if $\rho_A = \rho_B$ we have that if 
\begin{align}
\label{eq:condition_sufficency_appendix_two_groups}
 \left( \frac{(\delta_B)_c}{(\delta_B)_{1-c}}\right)^{\frac{H}{2} }  \sqrt{ \frac{1}{\rho_B} } \left( \frac{(\delta_B)_{1-c}}{(\delta_A)_{1-c} } \right)^{|A|}   <   \frac{\nu_{1-c}}{\nu_c} \leq  \left( \frac{(\delta_B)_c}{(\delta_B)_{1-c}}\right)^{\frac{H}{2} }  \sqrt{ {\rho_B} }  \left( \frac{(\delta_B)_{1-c}}{(\delta_A)_{1-c} } \right)^{|A|}  
\end{align}
Then $T^{oMAP}_{00} = T_{00}^{MV} $ and that $T_{11}^{oMAP } = T_{11}^{MV}$ from which it follows that under the conditions in described in \cref{eq:condition_sufficency_appendix} oMAP is equivalent to MV instance wise. 
\end{theorem}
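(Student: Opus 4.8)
The plan is to follow the template of the proof of Theorem~\ref{thm:suff_condition_appenidx}: show that the stated inequality on $\nu_{\bar c}/\nu_c$ is \emph{exactly} equivalent to $T^{oMAP}_{cc}=T^{MV}_{cc}$ for both $c\in\{0,1\}$, and then conclude that oMAP and MV agree on every instance because their confusion matrices coincide --- the same implication already used at the end of Theorem~\ref{thm:suff_condition_appenidx}.

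Concretely, I would start from the two closed-form double sums for $T^{MV}_{cc}$ and $T^{oMAP}_{cc}$ displayed just above the theorem. Both are sums over $k\in\{1,\dots,|A|\}$ and $l$ of the \emph{same} strictly positive weights $\binom{|A|}{k}\binom{|B|}{l}(T_A)_{cc}^{k}(1-(T_A)_{cc})^{|A|-k}(T_B)_{cc}^{l}(1-(T_B)_{cc})^{|B|-l}$ (positivity uses $\tfrac12<(T_A)_{cc},(T_B)_{cc}<1$, inherited from the better-than-random assumption of Theorem~\ref{thm:twocoins}). They differ only in the lower summation limit for $l$: it is $\ceil{H/2}-k=\tfrac{H+1}{2}-k$ for MV and $\ceil{\tfrac{\alpha_c}{\log\rho_B}-k\tfrac{\log\rho_A}{\log\rho_B}}$ for oMAP. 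Invoking $\rho_A=\rho_B$ (and $\log\rho_B>0$, again from better-than-random reliability), the oMAP limit simplifies to $\ceil{\tfrac{\alpha_c}{\log\rho_B}-k}=\ceil{\tfrac{\alpha_c}{\log\rho_B}}-k$ since $k$ is an integer. Because $|A|<\ceil{H/2}$, for every admissible $k$ the value $\tfrac{H+1}{2}-k$ is a positive integer not exceeding $|B|$, so neither inner sum is clipped against $0$ or $|B|$; as all weights are strictly positive, the two double sums are equal if and only if $\ceil{\tfrac{\alpha_c}{\log\rho_B}}-k=\tfrac{H+1}{2}-k$ for all such $k$, i.e.\ if and only if $\ceil{\tfrac{\alpha_c}{\log\rho_B}}=\tfrac{H+1}{2}$.

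It then remains to translate this ceiling identity into the stated interval. Excluding the integer-threshold borderline (consistent with the rest of the paper), $\ceil{\tfrac{\alpha_c}{\log\rho_B}}=\tfrac{H+1}{2}$ is equivalent to $\tfrac{H-1}{2}<\tfrac{\alpha_c}{\log\rho_B}\le\tfrac{H+1}{2}$. Substituting $\alpha_c=\log\tfrac{\nu_{\bar c}}{\nu_c}+H\log(\delta_B)_{\bar c}+|A|\log\tfrac{(\delta_A)_{\bar c}}{(\delta_B)_{\bar c}}$, multiplying through by $\log\rho_B>0$, isolating $\log\tfrac{\nu_{\bar c}}{\nu_c}$, using the identity $(\delta_B)_c(\delta_B)_{\bar c}=\rho_B$ --- which gives $\tfrac{H}{2}\log\rho_B-H\log(\delta_B)_{\bar c}=\tfrac{H}{2}\log\tfrac{(\delta_B)_c}{(\delta_B)_{\bar c}}$ --- and exponentiating, yields exactly
\[
\Bigl(\tfrac{(\delta_B)_c}{(\delta_B)_{\bar c}}\Bigr)^{\frac H2}\frac{1}{\sqrt{\rho_B}}\Bigl(\tfrac{(\delta_B)_{\bar c}}{(\delta_A)_{\bar c}}\Bigr)^{|A|}<\frac{\nu_{\bar c}}{\nu_c}\le\Bigl(\tfrac{(\delta_B)_c}{(\delta_B)_{\bar c}}\Bigr)^{\frac H2}\sqrt{\rho_B}\Bigl(\tfrac{(\delta_B)_{\bar c}}{(\delta_A)_{\bar c}}\Bigr)^{|A|},
\]
which is hypothesis~\ref{eq:condition_sufficency_appendix_two_groups}. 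Running the argument for $c=0$ and $c=1$ (and noting, as for Theorem~\ref{thm:twocoins}, that the $c=0$ and $c=1$ versions of the interval are equivalent, so only one needs verifying) gives $T^{oMAP}_{00}=T^{MV}_{00}$ and $T^{oMAP}_{11}=T^{MV}_{11}$; instance-wise equivalence of oMAP and MV then follows verbatim as in Theorem~\ref{thm:suff_condition_appenidx}, and for $|A|=0$ one recovers Theorem~\ref{thm:twocoins}.

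I expect the main obstacle to be the careful justification of the ``if and only if'' in the reduction to the ceiling identity, rather than any deep point: one must verify that for every $k$ in the outer range both inner lower limits are genuine positive integers bounded above by $|B|$, so that a mismatch at a single $k$ strictly changes the value of the sum (using positivity of all summands together with $\rho_A=\rho_B$, $|A|<\ceil{H/2}$, and the better-than-random reliabilities). The remainder --- the $\alpha_c$ substitution and the $(\delta_B)_c(\delta_B)_{\bar c}=\rho_B$ simplification --- is the same elementary manipulation already carried out in the proof of Theorem~\ref{thm:twocoins}.
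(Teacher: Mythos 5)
Your proposal is correct and follows essentially the same route as the paper: both reduce the equality of the two double sums to matching the inner lower summation limits for every $k$, translate that into the ceiling identity $\ceil{\alpha_c/\log\rho_B}=\tfrac{H+1}{2}$ under $\rho_A=\rho_B$, and then perform the same algebraic manipulation of $\alpha_c$ used in Theorem~\ref{thm:twocoins} to obtain the stated interval, concluding via Theorem~\ref{thm:suff_condition_appenidx}. The only difference is cosmetic (you invoke $\rho_A=\rho_B$ at the outset to cancel the $(\rho_A/\rho_B)^k$ factor, whereas the paper carries it through and cancels it at the end), and your explicit justification that mismatched limits strictly change the sum by positivity of the summands is a point the paper leaves implicit.
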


\begin{proof}
Now, 
$  T^{MV}_{cc} = T^{oMAP}_{cc}  \iff  \forall k \in \{ 1, \dots, |A| \} \ceil{\frac{H}{2}} - k = \ceil{ \frac{\alpha_c}{\log \rho_B} - k\frac{\log \rho_A}{\log \rho_B}} $ 

$\iff \forall k \in \{ 1, \dots, |A| \} \frac{H+1}{2} - k -1 <  \frac{\alpha_c}{\log \rho_B} - k\frac{\log \rho_A}{\log \rho_B} \leq \frac{H+1}{2} - k 
$

$\iff \forall k \in \{ 1, \dots, |A| \} \frac{H+1}{2} - k \left( 1- \frac{\log \rho_A}{\log \rho_B}  \right) -1 <  \frac{\alpha_c}{\log \rho_B}  \leq \frac{H+1}{2} - k \left( 1- \frac{\log \rho_A}{\log \rho_B}  \right)
$

$$\iff \frac{H+1}{2} - k \left( 1- \frac{\log \rho_A}{\log \rho_B}  \right) -1 <  \frac{\log \left( \frac{\nu_{1-c}}{\nu_c} \right) + H \log (\delta_B)_{1-c}  +  |A| \log \frac{(\delta_A)_{1-c}}{(\delta_B)_{1-c} }  }{\log \rho_B}  \leq \frac{H+1}{2} - k \left( 1- \frac{\log \rho_A}{\log \rho_B}  \right)
$$

$$ \frac{(H-1)\log \rho_B}{2} - k \left( \log \frac{\rho_B}{\rho_A} \right)  <  \log \left( \frac{\nu_{1-c}}{\nu_c} \right) + H \log (\delta_B)_{1-c}  +  |A| \log \frac{(\delta_A)_{1-c}}{(\delta_B)_{1-c} } \leq \frac{(H+1)\log \rho_B}{2} - k \left( \log \frac{\rho_B}{\rho_A} \right)
$$

$$ \Updownarrow$$
$$ \frac{(H-1)\log \rho_B}{2} - k \left( \log \frac{\rho_B}{\rho_A} \right) - H \log (\delta_B)_{1-c}  -  |A| \log \frac{(\delta_A)_{1-c}}{(\delta_B)_{1-c} }   $$
$$ <  \log \left( \frac{\nu_{1-c}}{\nu_c} \right) 
$$
$$\leq \frac{(H+1)\log \rho_B}{2} - k \left( \log \frac{\rho_B}{\rho_A} \right) - H \log (\delta_B)_{1-c}  -  |A| \log \frac{(\delta_A)_{1-c}}{(\delta_B)_{1-c} } 
$$

$$ \Updownarrow$$
$$ \frac{(H-1)\log \rho_B}{2} - k \left( \log \frac{\rho_B}{\rho_A} \right) - H \log (\delta_B)_{1-c}  -  |A| \log \frac{(\delta_A)_{1-c}}{(\delta_B)_{1-c} }   $$
$$ <  \log \left( \frac{\nu_{1-c}}{\nu_c} \right) 
$$
$$\leq \frac{(H+1)\log \rho_B}{2} - k \left( \log \frac{\rho_B}{\rho_A} \right) - H \log (\delta_B)_{1-c}  -  |A| \log \frac{(\delta_A)_{1-c}}{(\delta_B)_{1-c} } 
$$

Noticing that: 
$$ \log{ \sqrt{\rho}} - \log{\delta_{1-c}} = \log{ \sqrt{ \frac{ \delta_{c}}{\delta_{1-c}}}}, $$

$$ \Updownarrow$$ 
$$ \frac{H}{2} \log \left( \frac{(\delta_B)_c}{(\delta_B)_{1-c}}\right)  - \frac{1}{2} \log \rho_B - k \left( \log \frac{\rho_B}{\rho_A} \right) -  |A| \log \frac{(\delta_A)_{1-c}}{(\delta_B)_{1-c} }   $$
$$ <  \log \left( \frac{\nu_{1-c}}{\nu_c} \right) 
$$
$$\leq \frac{H}{2} \log \left( \frac{(\delta_B)_c}{(\delta_B)_{1-c}}\right)  + \frac{1}{2} \log \rho_B - k \left( \log \frac{\rho_B}{\rho_A} \right) -  |A| \log \frac{(\delta_A)_{1-c}}{(\delta_B)_{1-c} }
$$

$$ \Updownarrow$$ 
$$ \left( \frac{(\delta_B)_c}{(\delta_B)_{1-c}}\right)^{\frac{H}{2} }  \sqrt{ \frac{1}{\rho_B} } \left( \frac{\rho_A}{\rho_B} \right)^k  \left( \frac{(\delta_B)_{1-c}}{(\delta_A)_{1-c} } \right)^{|A|}   $$
$$ <  \frac{\nu_{1-c}}{\nu_c} 
$$
$$\leq \left( \frac{(\delta_B)_c}{(\delta_B)_{1-c}}\right)^{\frac{H}{2} }  \sqrt{ {\rho_B} } \left( \frac{\rho_A}{\rho_B} \right)^k  \left( \frac{(\delta_B)_{1-c}}{(\delta_A)_{1-c} } \right)^{|A|}  
$$

$$ \Updownarrow$$ 
$$ \left( \frac{(\delta_B)_c}{(\delta_B)_{1-c}}\right)^{\frac{H}{2} }  \sqrt{ \frac{1}{\rho_B} } \left( \frac{\rho_A}{\rho_B} \right)^k  \left( \frac{(\delta_B)_{1-c}}{(\delta_A)_{1-c} } \right)^{|A|}   <   \frac{\nu_{1-c}}{\nu_c} \leq  \left( \frac{(\delta_B)_c}{(\delta_B)_{1-c}}\right)^{\frac{H}{2} }  \sqrt{ {\rho_B} } \left( \frac{\rho_A}{\rho_B} \right)^k  \left( \frac{(\delta_B)_{1-c}}{(\delta_A)_{1-c} } \right)^{|A|}  
$$

$$ \Updownarrow$$ 
$$ \left( \frac{(\delta_B)_c}{(\delta_B)_{1-c}}\right)^{\frac{H}{2} }  \sqrt{ \frac{1}{\rho_B} } \left( \frac{\rho_A}{\rho_B} \right)^k  \left( \frac{(\delta_B)_{1-c}}{(\delta_A)_{1-c} } \right)^{|A|}   <   \frac{\nu_{1-c}}{\nu_c} \leq  \left( \frac{(\delta_B)_c}{(\delta_B)_{1-c}}\right)^{\frac{H}{2} }  \sqrt{ {\rho_B} } \left( \frac{\rho_A}{\rho_B} \right)^k  \left( \frac{(\delta_B)_{1-c}}{(\delta_A)_{1-c} } \right)^{|A|}  
$$

Under the hypothesis $ \rho_A = \rho_B$:

\begin{equation}
\label{eq:two_classes_condition}
\left( \frac{(\delta_B)_c}{(\delta_B)_{1-c}}\right)^{\frac{H}{2} }  \sqrt{ \frac{1}{\rho_B} } \left( \frac{(\delta_B)_{1-c}}{(\delta_A)_{1-c} } \right)^{|A|}   <   \frac{\nu_{1-c}}{\nu_c} \leq  \left( \frac{(\delta_B)_c}{(\delta_B)_{1-c}}\right)^{\frac{H}{2} }  \sqrt{ {\rho_B} }  \left( \frac{(\delta_B)_{1-c}}{(\delta_A)_{1-c} } \right)^{|A|}  
\end{equation}
\end{proof}

\subsubsection{Experiments}
\begin{table}[ht]
    \centering
    \begin{tabular}{cccccccc}
    \toprule
    $T_{00}^A$ & $T_{11}^A$ & $T_{00}^B$ & $T_{11}^B$ & $\nu_0$ & \cref{eq:two_classes_condition} Satisfied & oMAP Acc. & MV Acc.\\
    \midrule
    0.58 & 0.8 & 0.8 & 0.58 & 0.55 & \cmark & 0.8686 & 0.8686 \\
    0.58 & 0.8 & 0.8 & 0.58 & 0.65 & \cmark & 0.8567 & 0.8567 \\
    0.58 & 0.8 & 0.8 & 0.58 & 0.95 & \xmark & 0.9644 & 0.8438 \\
    0.78 & 0.65 & 0.65 & 0.78 & 0.55 & \cmark & 0.8993 & 0.8993 \\
    0.78 & 0.65 & 0.65 & 0.78 & 0.65 & \cmark & 0.8950 & 0.8950 \\
    0.78 & 0.65 & 0.65 & 0.78 & 0.95 & \xmark & 0.9658 & 0.9058 \\
        \bottomrule
    \end{tabular}    \caption{Experiments on synthetic data to check the correctness of \cref{eq:two_classes_condition}. Two class of annotators A and B (with $|A|=3$ and $|B|=4$) have two noise transition matrices $T_A$ and $T_B$. As expected, when there is a $\cmark \,$ the accuracy of MV matches the one of oMAP, showing the empirical correctness of the proposed formulation.}
    \label{table:two_annotator_classes}
\end{table}

To verify the correctness of \cref{eq:two_classes_condition} we perform the following experiment: two classes of annotators A and B are presented, each with its own noise transition matrix $T_A$ and $T_B$ (diagonal values presented in the Table). Then \cref{eq:two_classes_condition} is used to verify if, with the current data, MV is the optimal solution or not. To check if the results from the condition are confirmed we present the accuracy of MV and oMAP in \cref{table:two_annotator_classes}. When the condition is satisfied ($\cmark$) the accuracy of the two aggregation methods is the same, confirming the correctness of the proposed solution.
Since one condition for this theorem is to have $\rho_A=\rho_B$, for sake of simplicity, we simply inverted the rows of the noise transition matrices A and B.

\end{document}